\theoremstyle{plain}
\newtheorem{theorem}{Theorem}[section]
\newtheorem{lemma}[theorem]{Lemma}
\theoremstyle{definition}
\theoremstyle{remark}
\icmltitlerunning{Exploration and Anti-Exploration with Distributional Random Network Distillation}
\begin{document}

\twocolumn[
\icmltitle{Exploration and Anti-Exploration with Distributional Random Network Distillation}



\icmlsetsymbol{equal}{*}
\icmlsetsymbol{cor}{$\dagger$}

\begin{icmlauthorlist}
\icmlauthor{Kai Yang}{equal,Tsinghua Shenzhen International Graduate School}
\icmlauthor{Jian Tao}{equal,Tsinghua Shenzhen International Graduate School}
\icmlauthor{Jiafei Lyu}{Tsinghua Shenzhen International Graduate School}
\icmlauthor{Xiu Li}{cor,Tsinghua Shenzhen International Graduate School}
\end{icmlauthorlist}

\icmlaffiliation{Tsinghua Shenzhen International Graduate School}{Tsinghua Shenzhen International Graduate School, Tsinghua University}
\icmlcorrespondingauthor{Xiu Li}{li.xiu@sz.tsinghua.edu.cn}

\icmlkeywords{Reinforcement Learning, Exploration, Anti-exploration}

\vskip 0.3in
]



\printAffiliationsAndNotice{*Equal contribution. $^\dagger$Corresponding author.} 
\begin{abstract}
Exploration remains a critical issue in deep reinforcement learning for an agent to attain high returns in unknown environments. Although the prevailing exploration Random Network Distillation (RND) algorithm has been demonstrated to be effective in numerous environments, it often needs more discriminative power in bonus allocation. This paper highlights the ``bonus inconsistency'' issue within RND, pinpointing its primary limitation. To address this issue, we introduce the Distributional RND (DRND), a derivative of the RND. DRND enhances the exploration process by distilling a distribution of random networks and implicitly incorporating pseudo counts to improve the precision of bonus allocation. This refinement encourages agents to engage in more extensive exploration. Our method effectively mitigates the inconsistency issue without introducing significant computational overhead. Both theoretical analysis and experimental results demonstrate the superiority of our approach over the original RND algorithm. Our method excels in challenging online exploration scenarios and effectively serves as an anti-exploration mechanism in D4RL offline tasks. Our code is publicly available at \href{https://github.com/yk7333/DRND}{https://github.com/yk7333/DRND}.
\end{abstract}

\section{Introduction}
\label{submission}

Exploration is a pivotal consideration in reinforcement learning, especially when dealing with environments that offer sparse or intricate reward information. Several methods have been proposed to promote deep exploration \cite{osband2016deep}, including count-based and curiosity-driven approaches. Count-based techniques in environments with constrained state spaces rely on recording state visitation frequencies to allocate exploration bonuses \cite{strehl2008analysis, azar2017minimax}. However, this method encounters challenges in massive or continuous state spaces. In expansive state spaces, ``pseudo counts'' have been introduced as an alternative \cite{bellemare2016unifying, lobel2023flipping, ostrovski2017count, machado2020count}. However, establishing a correlation between counts and probability density requires rigorous criteria \cite{ostrovski2017count}, complicating the implementation of density-based pseudo counts resulting in a significant dependency on network design and hyperparameters.

Curiosity-driven methods motivate agents to explore and learn by leveraging intrinsic motivation. This inherent motivation, often called ``curiosity'', pushes the agent to explore unfamiliar states or actions. Certain approaches derive intrinsic rewards from the prediction loss of environmental dynamics \cite{achiam2017surprise, burda2018large, pathak2017curiosity}. As states and actions become more familiar, these methods become more efficient. However, these methods can face difficulties when essential information is missing or the target function is inherently stochastic, as highlighted by the ``noisy-TV'' problem \cite{pathak2017curiosity}. The Random Network Distillation (RND) method uses the matching loss of two networks for a particular state to be the intrinsic motivation \cite{burda2018large}. It leverages a randomly initialized target network to generate a fixed value for specific states and trains a prediction network to match this output. RND has demonstrated remarkable results in exploration-demanding environments with sparse rewards, such as Montezuma's Revenge. However, RND has its limitations. While its reliance on network loss for intrinsic rewards lacks a robust mathematical foundation, its interpretability should be more evident compared to count-based techniques. Moreover, the RND method grapples with the issue of \textbf{bonus inconsistency}, which becomes apparent during the initial stages of training when no states have been encountered, leading to bonuses that exhibit considerable deviations from a random distribution. RND struggles to precisely represent the dataset's distribution as training progresses, resulting in indistinguishable bonuses.

We introduce the Distributional Random Network Distillation (DRND) approach to tackle the challenge of bonus inconsistency in RND. In contrast to the RND method, our approach employs a predictor network to \textit{distill multiple random target networks}. Our findings demonstrate that the DRND predictor effectively operates as a pseudo-count model. This unique characteristic allows DRND to seamlessly merge the advantages of count-based techniques with the RND method, thereby enhancing performance without incurring additional computational and spatial overheads, as the target networks remain fixed and do not require updates. The curiosity-driven RND method and the pseudo-count Coin Flip Network (CFN, \cite{lobel2023flipping}) method are special cases of our DRND method. Through theoretical analysis and an initial experiment (see Section \ref{sec: inconsistency exp}), we validate that, compared to RND, DRND demonstrates improved resilience to variations in initial state values, provides a more accurate estimate of state transition frequencies, and better discriminates dataset distributions. As a result, DRND outperforms RND by providing better intrinsic rewards.

In online experiments, we combine the DRND method with Proximal Policy Optimization (PPO, \cite{schulman2017proximal}). On the image-based exploration benchmark environments Montezuma's Revenge, Gravitar, and Venture, DRND outperforms baseline PPO, RND, pseudo-count method CFN, and curiosity-driven method ICM \cite{pathak2017curiosity}. In continuous-control gym-robotics environments, our method also outperforms existing approaches. Furthermore, we demonstrate that DRND can also serve as a good anti-exploration penalty term in the offline setting, confirming its ability to provide a better bonus based on the dataset distribution. We follow the setting of SAC-RND \cite{nikulin2023anti} and propose a novel offline RL algorithm, SAC-DRND. We run experiments in D4RL \cite{fu2020d4rl} offline tasks and find that SAC-DRND outperforms many recent strong baselines across various D4RL locomotion and Antmaze datasets.

\begin{figure*}
    \centering
    \includegraphics[height=120pt,width=420pt]{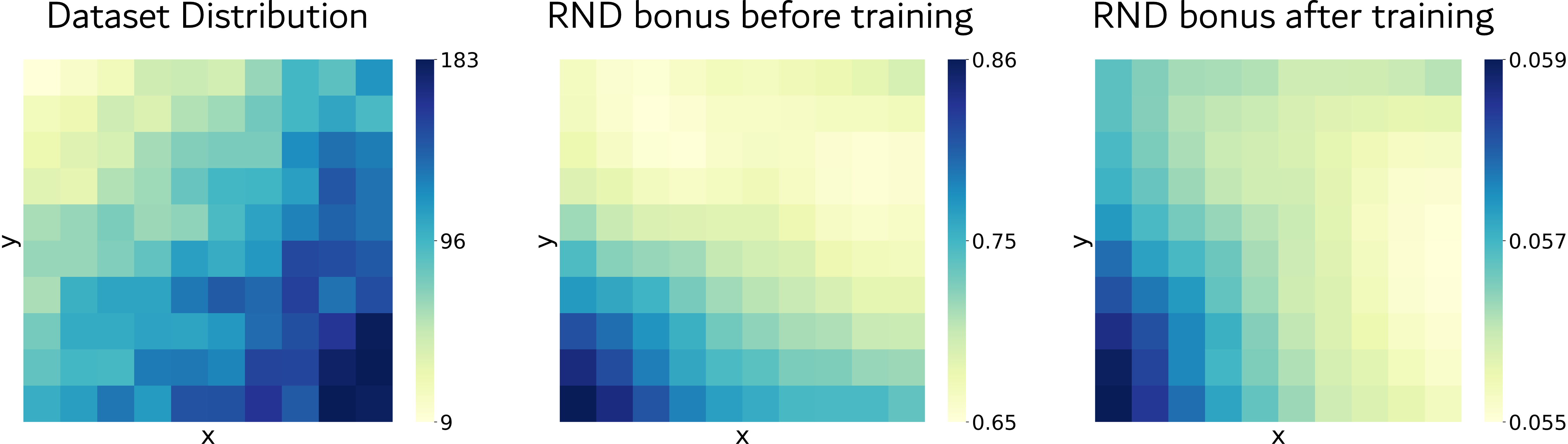}
    \caption{Bonus Heatmap of dataset distribution and RND bonus. The left image illustrates the dataset distribution, the middle image represents the RND bonus before training, and the right image represents the RND bonus after training. A more detailed change process is in Appendix \ref{rndgif}. Ideally, we aim for a uniform bonus distribution before any training and without exposure to the dataset. After extensive training, the expected bonus should inversely correlate with the dataset distribution. The bonus distribution of RND is inconsistent with the desired distribution, indicating a problem with bonus inconsistency. The details of the experiment settings can be found in Appendix \ref{app_exp_detail}}
    \label{bonus inconsistency}
\end{figure*}

\section{Related Work}
\textbf{Count-based exploration.} 
Count-based exploration is a strategy in RL where an agent uses count information to guide its exploration of unknown environments. By keeping track of counts for different states or actions, the agent can estimate the level of unknowns associated with each state or action, prioritizing exploration of those with high unknowns \cite{bellemare2016unifying,machado2020count,martin2017count,tang2017exploration}. These approaches use $r_t = N(s_t)^{-\frac{1}{2}}$ or $r_t = N(s_t, a_t)^{-\frac{1}{2}}$, aiming to balance exploration and exploitation in stochastic MDPs \cite{strehl2008analysis}. Various methods, including CTS \cite{bellemare2016unifying}, PixelCNN \cite{ostrovski2017count}, Successor Counts \cite{machado2020count}, and CFN \cite{lobel2023flipping}, have explored calculating pseudocounts in large state spaces to approximate $N(s_t)$. Furthermore, count-based techniques \cite{kim2023model, hong2022confidence} are employed in offline RL to do anti-exploration. While effective in finite state spaces, these methods heavily rely on the network's ability to approximate probability density functions in large state spaces. Accurately estimating density requires a significant number of samples, which limits the effectiveness of counting methods in situations with small sample sizes or regions of low probability density.

\textbf{Curiosity-driven exploration.}
In curiosity-driven methods, the agent's motivation stems from intrinsic curiosity, often quantified using information-theoretic or novelty-based metrics. One widely used metric involves employing a dynamic model to predict the difference between the expected and actual states, serving as the intrinsic reward \cite{stadie2015incentivizing, achiam2017surprise, pathak2017curiosity}, which helps identify unfamiliar patterns, encouraging exploration in less familiar areas. Alternatively, some approaches use information gain as an intrinsic reward \cite{still2012information, houthooft2016vime}. Still, they demand computationally intensive network fitting and can struggle in highly stochastic environments due to the ``noise TV'' problem \cite{burda2018exploration}.

Another curiosity-driven method is RND \cite{burda2018exploration}, which is a prominent RL exploration baseline. RND employs two neural networks: a static prior and a trainable predictor. Both networks map states to embeddings, with state novelty assessed based on their prediction error, which serves as an exploration bonus. This simplicity has bolstered RND's popularity in exploration algorithms and demonstrated its potential in supervised settings, even suggesting its use as an ensemble alternative for estimating epistemic uncertainty \cite{ciosek2019conservative, kuznetsov2020controlling}. However, common practices, such as using identical architectures for both networks and estimating novelty solely from states, can result in substantial inconsistencies in reward bonuses.

\textbf{Anti-Exploration in Model-free Offline RL}
Offline RL addresses the problem of learning policies from a logged static dataset. Model-free offline algorithms do not require an estimated model and focus on correcting the extrapolation error \cite{fujimoto2019off} in the off-policy algorithms. The first category emphasizes regularizing the learned policy to align with the behavior policy \cite{kostrikov2021offline,wang2018exponentially,wang2020critic,wu2019behavior,xie2021bellman,fujimoto2021minimalist}. The second category aims to prevent the OOD actions by modifying the value function \cite{kumar2020conservative,lyu2023state,lyu2022mildly,an2021uncertainty,ghasemipour2022so,yang2022rorl}. These methods employ dual penalization techniques in actor-critic algorithms to facilitate effective offline RL policy learning. These approaches can be further categorized into ensemble-free methods and ensemble-based methods. The ensemble-based methods quantify the uncertainty with ensemble techniques to obtain a robust value function, such as SAC-N \cite{an2021uncertainty} and RORL \cite{yang2022rorl}. The ensemble-free methods adapt conservatism to a value function instead of many value functions \cite{kumar2020conservative,lyu2022mildly,rezaeifar2022offline}. These methods require punishment for states and actions outside of the dataset distribution, which is called an ``anti-exploration'' bonus \cite{rezaeifar2022offline} for the agent. Unlike online RL, where novelty bonuses incentivize exploration, offline RL leans towards conservatism, aiming to reduce rewards in uncharted scenarios. In this work, we introduce a distributional random network distillation approach to serve as a novel anti-exploration method, demonstrating the efficacy of SAC-DRND across various offline RL datasets.

\section{Preliminaries}
\textbf{MDP.} We base our framework on the conventional Markov Decision Process (MDP) formulation as described in \cite{sutton1998introduction}. In this setting, an agent perceives an observation $o\in \mathcal{O}$ and executes an action $a\in \mathcal{A}$. The transition probability function, denoted by $P(s'|s, a)$, governs the progression from the current state $s$ to the subsequent state $s'$ upon the agent's action $a$. Concurrently, the agent is awarded a reward $r$, determined by the reward function $r:\mathcal{A}\times\mathcal{S}\to\mathbb{R}$. The agent's objective is to ascertain a policy $\pi(a|o)$ that optimizes the anticipated cumulative discounted returns, represented as $\mathbb{E}_{\pi}\left[\sum_{t=0}^{\infty} \gamma^{t} r\left(s_{t}, a_{t}\right)\right]$, where $\gamma \in[0,1)$ serves as the discount factor.

\begin{figure*}[t]
    \centering
    \subfigure[RND]{
    \begin{minipage}[t]{0.48\linewidth}
    \centering
    \includegraphics[height=78pt,width=220pt]{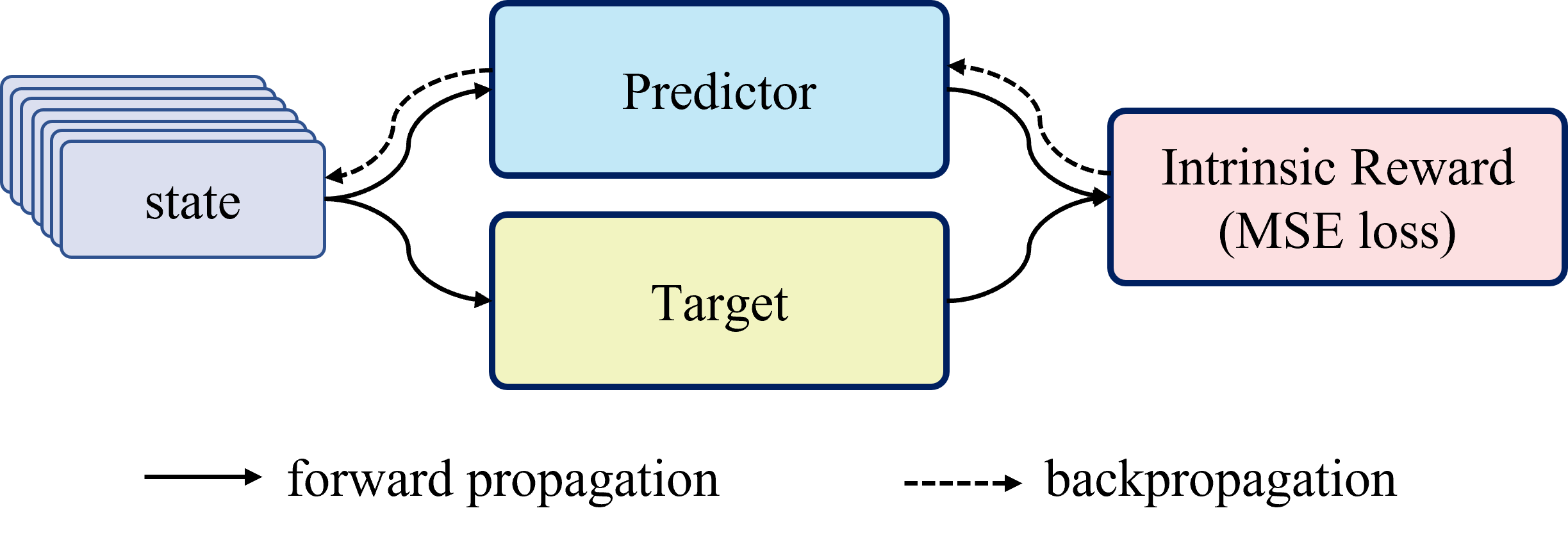}
    \end{minipage}
    }
    \subfigure[DRND]{
    \begin{minipage}[t]{0.46\linewidth}
    \centering
    \includegraphics[height=78pt,width=240pt]{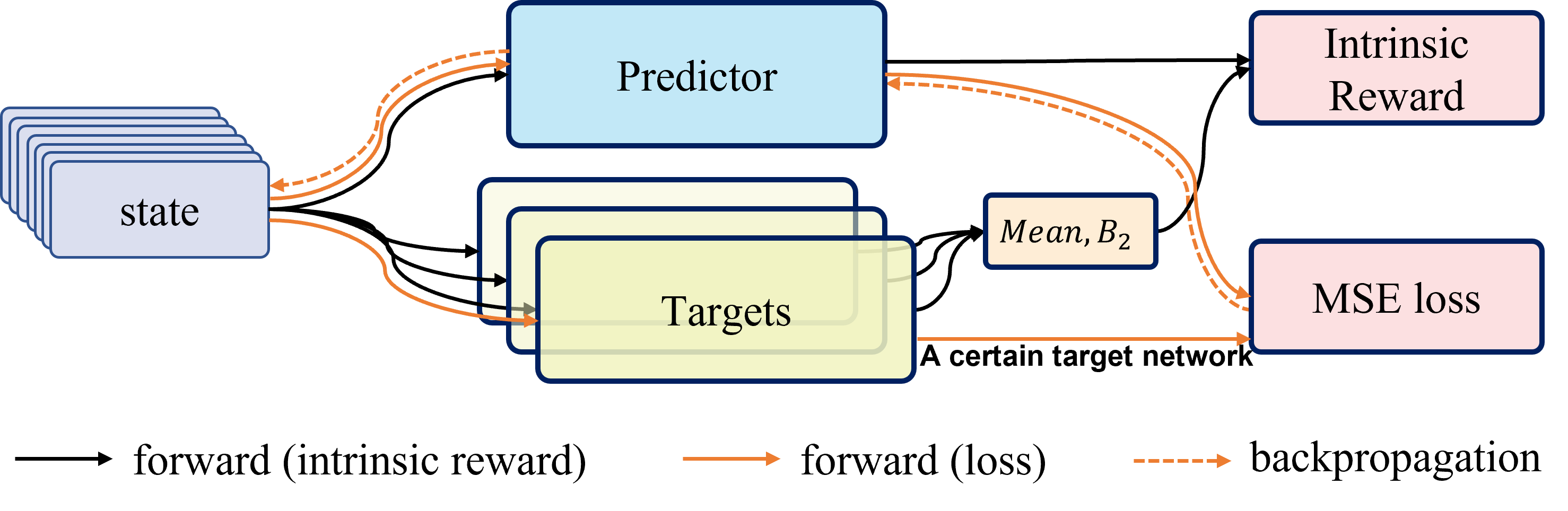}
    \end{minipage}
    }
    \caption{Diagram of RND and DRND. Compared to the RND method that only distills a fixed target network, our method distills a randomly distributed target network and utilizes statistical metrics to assign a bonus to each state.}
    \label{diagram}
\end{figure*}

\textbf{Intrinsic reward.}
To enhance exploration, a common approach involves augmenting the agent's rewards in the environment with intrinsic rewards as a bonus. These intrinsic rewards, denoted as $b_t(s_t, a_t)$, incentivize agents to explore unfamiliar states and take unfamiliar actions. In offline RL, the intrinsic reward is an anti-exploration penalty term to discourage OOD actions. Upon incorporating the intrinsic reward $b(s_t, a_t)$ into the original target of the Q value function, the adjusted target can be expressed as follows:
$$
y_{t} = 
\begin{cases} 
{r}_{t}+\lambda b\left(s_t, a_t\right)+\gamma \max _{a^\prime} Q_{\theta^{\prime}}\left(s_{t+1}, a^\prime\right) & {\small \text{(online)}} \\
{r}_{t}+\gamma \mathbb{E}_{a^\prime}\left[Q_{\theta^{\prime}}\left(s_{t+1}, a^\prime\right)-\lambda b\left(s_{t+1}, a^\prime\right)\right] &{\small \text{(offline)}}
\end{cases}
$$
where $\lambda$ is the scale of the bonus for the update of the value network.

\section{Method}
The RND method utilizes two neural networks: a fixed, randomly initialized target network $\hat f: \mathcal{O} \rightarrow \mathbb{R}^{k}$, and a predictor network $f: \mathcal{O} \rightarrow \mathbb{R}^{k}$ trained on agent-collected data, where $\mathcal{O}$ is the observation space. In this section, we highlight RND's primary issue and introduce our method, Distributional Random Network Distillation (DRND).
\subsection{Bonus Inconsistencies in Random Network Distillation}

\label{section3.1}
The RND method faces challenges with bonus inconsistencies, which can be categorized into initial and final bonus inconsistencies. The initial bonus inconsistency relates to the uneven distribution of bonuses among states at the beginning of training. Addressing this issue is crucial to preventing significant bonus value disparities among states. Conversely, the final bonus inconsistency arises when the final bonuses do not align with the dataset distribution, making it hard for the agent to effectively distinguish between frequently visited states and those encountered relatively fewer times. This issue becomes particularly pronounced after substantial updates to the predictor network, which hinders the agent's ability to engage in deep exploration. This issue is depicted in Figure \ref{bonus inconsistency}.

To tackle this, we introduce a method that distills a random distribution, enhancing performance with minimal computational overhead and addressing the bonus inconsistency challenges.

\subsection{Distill the target network of random distribution}

\begin{figure*}[t]
    \centering
    \includegraphics[height=220pt,width=460pt]{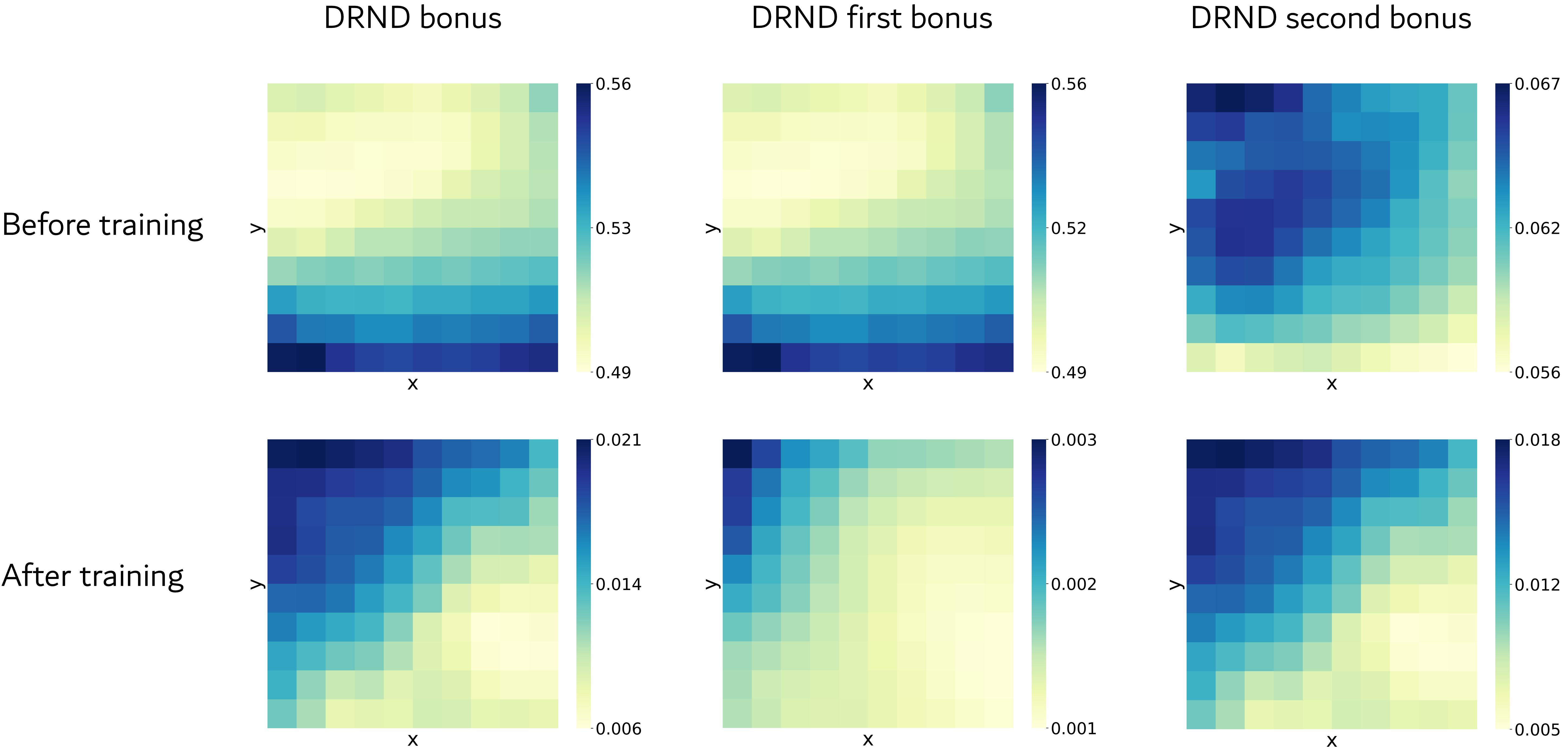}
    \caption{Distribution of DRND bonus. The dataset distribution is the same as Figure \ref{bonus inconsistency}. These illustrations depict the distribution of the DRND bonus, including the first bonus and the second bonus. The first bonus is predominant before training, and the second bonus becomes more prominent after training.}
    \label{relitu}
\end{figure*}

Unlike RND, which only has one target network $\bar{f}$, the DRND algorithm has $N$ target networks $\bar{f}_1,\bar{f}_2,...,\bar{f}_N$, which are from a random distribution with randomly initialized parameters and do not participate in training. In DRND, we use $s$ as input in the online setting and $(s,a)$ pair as input in the offline setting. For simplicity, we define $x=(s, a)$ (offline setting) or $x=(s)$ (online setting). For each state-action pair $x$, we construct a variable $c(x)$ that satisfies the distribution:
\renewcommand{\arraystretch}{1.5}
$$
c(x)\sim \begin{array}{c|c c c c}
 \hline
 X & \bar{f}_1(x) & \bar{f}_2(x) & \dots & \bar{f}_N(x)\\
\hline
 P & \frac{1}{N} & \frac{1}{N} & \dots & \frac{1}{N} \\
 \hline
\end{array}
$$
For simplicity, we use some symbols to record the moments of the distribution:
\begin{align}
\mu(x) &= \mathbb{E}[X] = \frac{1}{N} \sum_{i=1}^{N} \bar{f}_i(x) \\
B_2(x) &= \mathbb{E}[X^2] = \frac{1}{N} \sum_{i=1}^{N} (\bar{f}_i(x))^2.
\end{align}
Each time $x$ occurs, $c(x)$ is sampled from this distribution. We use a predictive network $f_\theta(x)$ to learn the variable $c(x)$, although using a fixed network to learn a random variable is impossible. We use the MSE loss function to force $f_\theta(x)$ to align with $c(x)$ and the loss is 
\begin{equation} \label{loss}
L(\theta) =\|f_\theta(x) - c(x) \|^2.
\end{equation}
By minimizing the loss, the optimal $f_*(x)$ when the state-action pair $x$ appears $n$ times is 
\begin{equation} \label{mean}
f_*(x) = \frac{1}{n} \sum_{i=1}^{n}c_i(x),
\end{equation}
where $c_i(x)$ is defined as $c(x)$ at the $i$-$th$ occurrence of state $x$.  For RND, the more times the predictor network is trained in the same state, the closer the output of the target network is. Therefore, directly using loss as a bonus encourages agents to explore new states. The bonus of our method is not equal to the loss since the loss of random variable fitting is unstable. The expected value of the prediction net is given by
\begin{equation}
\mathbb{E}[f_{\theta^*}(x)] = \mathbb{E}\left[\frac{1}{n} \sum_{i=1}^{n}c_i(x)\right] = \mu(x),
\end{equation}
where $n$ is the count of occurrences of $x$. After multiple training iterations, this value approaches the mean of the target network distribution. Hence, to measure the deviation from this mean, the first bonus of DRND is defined as
\begin{equation} \label{bonus1}
b_1(x) = \|f_\theta(x) -\mu(x)\|^2.
\end{equation}
Compared to predicting the output of one target net, predicting the mean of multiple networks is equivalent to passing through a high-pass filter on the output of multiple networks, which can avoid the problem of initial bonus inconsistency due to extreme values in one network. Especially if the network is linear, this bonus inconsistency can be quantitatively calculated.
\begin{lemma} \label{lemma1}
Let \( \tilde{\theta} \) and \( \bar{\theta}_i, i = 1, 2, \ldots, N \) be i.i.d. samples from \( p(\theta) \). Given the linear model \( f_\theta(x) = \theta^T x \), the expected mean squared error is
{\small
\begin{equation} \label{l1}
E_{\tilde{\theta},\bar{\theta}_1,...\bar{\theta}_N}\left[\left\|f_{\tilde{\theta}}(x) - \frac{1}{N} \sum_{i=1}^N f_{\bar{\theta}_i}(x)\right\|^2\right] = \left(1+\frac{1}{N}\right) x^T \Sigma x,
\end{equation}
}
where \(  \Sigma \) is the variance of \( p(\theta) \).
\end{lemma}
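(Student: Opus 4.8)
The plan is to recognize the left-hand side as a standard bias--variance decomposition, exploiting the linearity of $f_\theta$ together with the mutual independence of the sampled parameters. First I would introduce the scalar (or, if $k>1$, per-coordinate) random variables $a := f_{\tilde\theta}(x) = \tilde\theta^T x$ and $b_i := f_{\bar\theta_i}(x) = \bar\theta_i^T x$ for $i = 1,\ldots,N$, so that the quantity inside the expectation is exactly $\bigl(a - \tfrac{1}{N}\sum_{i=1}^N b_i\bigr)^2$. Since $\tilde\theta,\bar\theta_1,\ldots,\bar\theta_N$ are i.i.d. draws from $p(\theta)$, the induced variables $a,b_1,\ldots,b_N$ are themselves i.i.d., each with common mean $\mu^T x$ for $\mu := \mathbb{E}[\theta]$ and common variance $\mathrm{Var}(\theta^T x) = x^T \Sigma x$, the latter by the bilinearity of covariance.

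The second step is to note that the empirical mean $\bar b := \tfrac{1}{N}\sum_i b_i$ shares the mean $\mu^T x$ with $a$, so $\mathbb{E}[a - \bar b] = 0$. This collapses the target quantity to a pure variance, $\mathbb{E}[(a-\bar b)^2] = \mathrm{Var}(a-\bar b)$, and because $a$ is independent of every $b_i$ it splits cleanly as $\mathrm{Var}(a) + \mathrm{Var}(\bar b)$ with no surviving cross term.

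Finally I would evaluate the two pieces. The first is $\mathrm{Var}(a) = x^T \Sigma x$. For the second, independence of the $b_i$ gives $\mathrm{Var}(\bar b) = \tfrac{1}{N^2}\sum_{i=1}^N \mathrm{Var}(b_i) = \tfrac{1}{N^2}\cdot N \cdot x^T\Sigma x = \tfrac{1}{N}\,x^T \Sigma x$. Summing yields $\bigl(1 + \tfrac{1}{N}\bigr)x^T \Sigma x$, which is the claim.

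I do not anticipate a genuine obstacle; the argument reduces to the familiar variance-of-a-sample-mean identity. The only points that demand care are bookkeeping: verifying that the cross-covariance terms between distinct networks vanish (this is precisely where the independence assumption is invoked), and, should the output dimension $k$ exceed one, running the same decomposition coordinate-by-coordinate and summing, with $\Sigma$ playing the role of the parameter covariance in each coordinate so that the $\bigl(1+\tfrac{1}{N}\bigr)$ factor is left unchanged.
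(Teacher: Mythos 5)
Your proposal is correct and follows essentially the same route as the paper's proof: both reduce the expectation to a pure variance by observing that $\mathbb{E}\bigl[\tilde\theta^T x - \tfrac{1}{N}\sum_{i=1}^N \bar\theta_i^T x\bigr] = 0$ under the i.i.d.\ assumption, then split $\mathrm{Var}\bigl(\tilde\theta^T x - \tfrac{1}{N}\sum_i \bar\theta_i^T x\bigr)$ into $x^T\Sigma x + \tfrac{1}{N}x^T\Sigma x$ using vanishing cross-covariances and the variance-of-a-sample-mean identity. Your additional remark about handling output dimension $k>1$ coordinate-by-coordinate is a sensible piece of bookkeeping that the paper leaves implicit, but it does not constitute a different argument.
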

The complete proof can be seen in Appendix \ref{proof1}
. Lemma \ref{lemma1} shows that if the predictor parameters and target parameters are sampled from the same distribution, the expectation of the first bonus is a function of input $x$.
\begin{lemma} \label{lemma2}
Under the assumptions of Lemma \ref{lemma1}, let $x_1,x_2 \in \mathbb{R}^d$, $p(
\theta) \sim N(\mu,\sigma^2)$. The bonus difference of $x_1$ and $x_2$ is 
$\frac{(1+N)\sigma^2}{N} (\|x_2\|^2-\|x_1\|^2)$.
\end{lemma}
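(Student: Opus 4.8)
The plan is to treat Lemma~\ref{lemma2} as a direct specialization of Lemma~\ref{lemma1}. The quantity being compared across the two inputs is the expected first bonus $\mathbb{E}[b_1(x)]$, and Lemma~\ref{lemma1} already supplies a closed form for it under the linear model $f_\theta(x)=\theta^T x$, namely $\mathbb{E}[b_1(x)] = (1 + 1/N)\, x^T \Sigma x$, where $\Sigma$ is the covariance of $p(\theta)$. So essentially all the analytic work has been done upstream, and what remains is to evaluate this expression under the specific Gaussian hypothesis and subtract.

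First I would unpack the Gaussian assumption. The hypothesis $p(\theta) \sim N(\mu, \sigma^2)$ specifies that the parameter vector has isotropic covariance matrix $\Sigma = \sigma^2 I$, with the coordinates independent and sharing the common variance $\sigma^2$. Substituting this into the quadratic form collapses it to a scaled squared norm: $x^T \Sigma x = \sigma^2\, x^T x = \sigma^2 \|x\|^2$. Hence the expected first bonus for an arbitrary input $x$ becomes $(1 + 1/N)\sigma^2 \|x\|^2 = \frac{(N+1)\sigma^2}{N}\|x\|^2$.

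Next I would apply this formula separately to $x_1$ and to $x_2$ and take the difference. Because the prefactor $\frac{(N+1)\sigma^2}{N}$ is common to both and independent of the input, it factors out cleanly, yielding the bonus difference $\frac{(1+N)\sigma^2}{N}\bigl(\|x_2\|^2 - \|x_1\|^2\bigr)$, which is exactly the claimed expression.

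There is no serious obstacle here; the statement follows immediately once Lemma~\ref{lemma1} is invoked. The one point requiring a moment's care is the interpretation of the shorthand $p(\theta)\sim N(\mu,\sigma^2)$ as an isotropic covariance $\sigma^2 I$, since that is precisely what licenses the reduction $x^T\Sigma x = \sigma^2\|x\|^2$. Were the covariance anisotropic, the difference would instead retain the full quadratic form $x_2^T\Sigma x_2 - x_1^T\Sigma x_1$ and would not collapse to a difference of norms, so I would state this isotropy reading explicitly at the outset of the argument.
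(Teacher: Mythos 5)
Your proposal is correct and follows essentially the same route as the paper's own proof sketch: substitute the Gaussian assumption into Lemma~\ref{lemma1} so that $x^T \Sigma x$ collapses to $\sigma^2\|x\|^2$, then subtract the two expected bonuses. Your explicit note that $p(\theta)\sim N(\mu,\sigma^2)$ must be read as isotropic covariance $\Sigma=\sigma^2 I$ is a point the paper passes over silently, and making it explicit is a small improvement rather than a deviation.
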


\begin{proof}[Proof Sketch]
When $p(\theta) \sim N(\mu,\sigma^2)$, the variance of $p(\theta)$ is a constant $\sigma^2$. The right side of \cref{l1} can be rewritten as $\left(1+\frac{1}{N}\right)\sigma^2\|x\|^2$. So the bonus difference of $x_1$ and $x_2$ is $\left(1+\frac{1}{N}\right)\sigma^2(\|x_2\|^2-\|x_1\|^2)$.
\end{proof}

\begin{table*}[htbp]
  \centering
  \caption{Comparison of the bonus distributions of RND and DRND against (a) the uniform distribution $U$, (b) the distribution of $1/\sqrt{n}$. $P$ denotes the distribution of RND bonus or DRND bonus.}
  \small
  \resizebox{0.58\linewidth}{!}{
    \begin{tabular}{cccc}
    \toprule
    Metrics & RND & DRND \\
    \midrule
    $D_{KL}(P\|U)$ (Before training) & 0.0377$\pm$0.0248 & \textbf{0.0070}$\pm$0.0063 \\
    $D_{KL}(P\|1/\sqrt{n})$ (After training) & 0.0946$\pm$0.0409 & \textbf{0.0476}$\pm$0.0389 \\
    \bottomrule
    \end{tabular}%
    }
  \label{RNDandDRND}%
\end{table*}%

\begin{table*}[htbp]
  \centering
  \caption{KL-divergence comparison of the distributions of overall bonus $b$, individual bonus terms $b_1$, $b_2$ against (a) the uniform distribution $U$, (b) the distribution of $1/\sqrt{n}$. $P$ represents the bonus distribution of $b$, $b_1$ or $b_2$.}
  \small
  \label{b1b2}%
  \resizebox{0.75\linewidth}{!}{
    \begin{tabular}{cccc}
    \toprule
    Metrics & $b$ & $b_1$ & $b_2$ \\
    \midrule
    $D_{KL}(P\|U)$ (Before training) & 0.0067$\pm$0.0035 & 0.0070$\pm$0.0038 & 0.0104$\pm$0.0055 \\
    $D_{KL}(P\|1/\sqrt{n})$ (After training) & 0.0524$\pm$0.0248 & 0.0703$\pm$0.0404 & 0.0396$\pm$0.0209 \\
    \bottomrule
    \end{tabular}%
    }
\end{table*}%

Lemma \ref{lemma2} suggests that when the input \( x \) is confined to a bounded interval, and when \cref{bonus1} is utilized to calculate the initial bonus, the expected maximal difference is modulated by the number of target networks. Importantly, this anticipated discrepancy tends to decrease as \( N \) increases. This observation substantiates that our DRND method, equipped with \( N \) target networks, exhibits lower bonus inconsistency under a uniform distribution than the RND method, which uses only a single target network.

However, it is essential to note that the network fitting loss determines this bonus. Consequently, it cannot distinguish between states visited multiple times, which stands in contrast to count-based and pseudo-count methods, which do not address the issue of final bonus inconsistency.

\begin{figure*}[t]
    \centering
    \subfigure{
    \begin{minipage}[t]{0.45\linewidth}
    \centering
    \includegraphics[height=128pt,width=205pt]{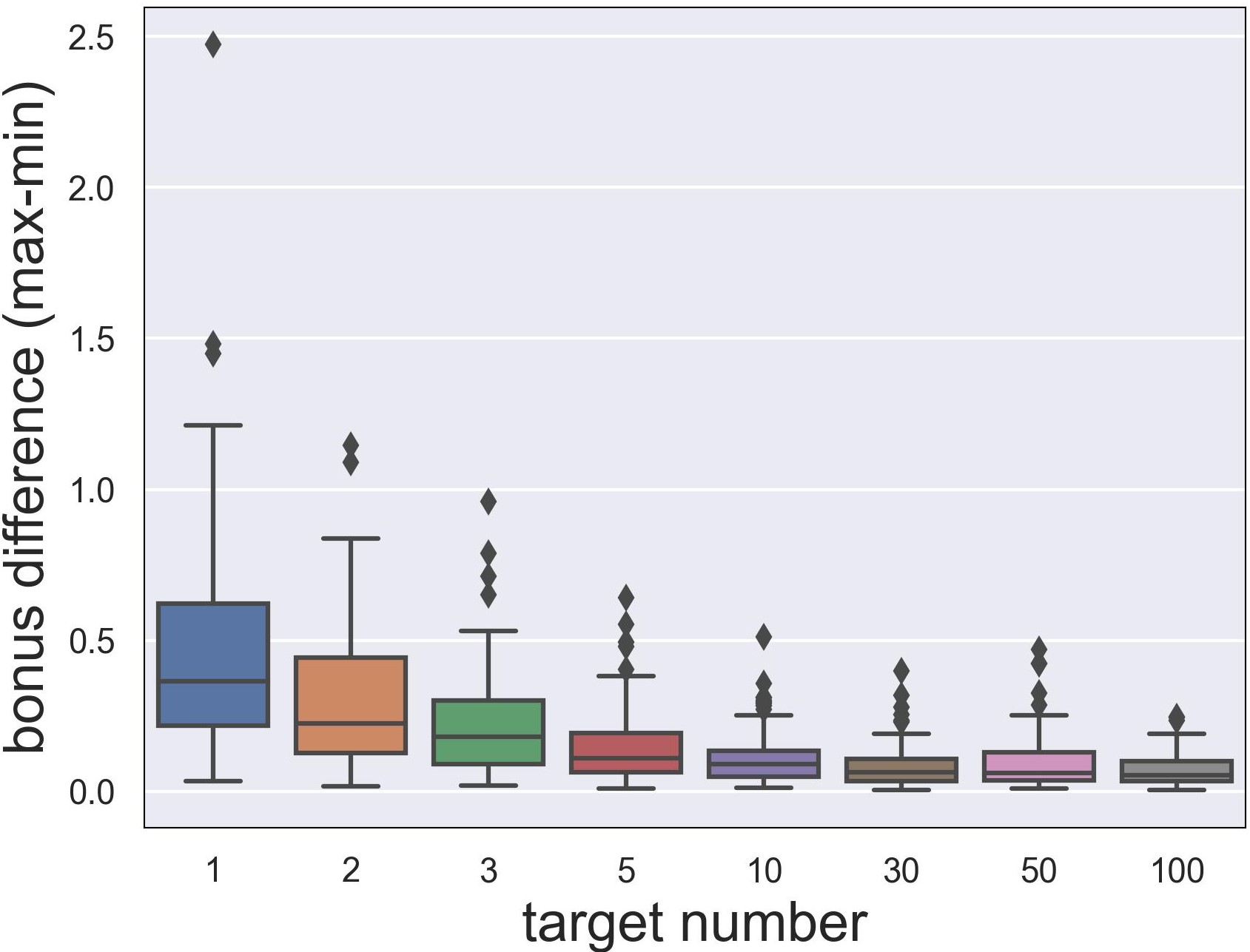}
    \end{minipage}
    }
    \subfigure{
    \begin{minipage}[t]{0.45\linewidth}
    \centering
    \includegraphics[height=125pt,width=225pt]{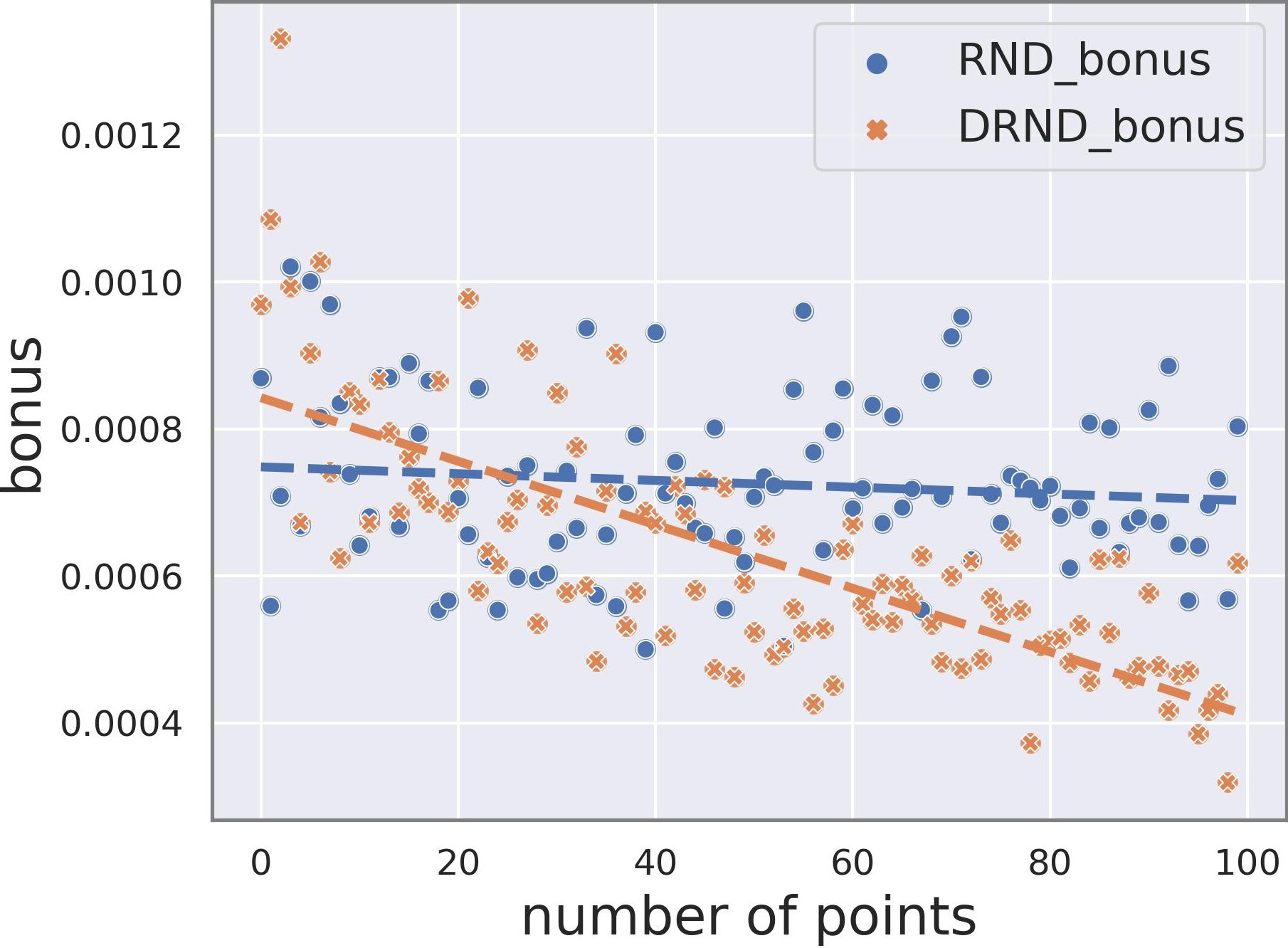}
    \end{minipage}
    }
    \caption{Inconsistency experiments mentioned in Section \ref{sec: inconsistency exp}. We plot the intrinsic reward distribution of RND and DRND before and after training on a mini-dataset. \textbf{Left}: the box plot of the difference between the maximum and minimum intrinsic rewards over 10 independent runs \textbf{before} training. \textbf{Right}: the intrinsic rewards for each data point \textbf{after} training.}
    \label{box}
\end{figure*}
\begin{figure*}[t]
    \centering
    \subfigure{
    \begin{minipage}[t]{0.33\linewidth}
        \centering
        \includegraphics[height=100pt,width=158pt]{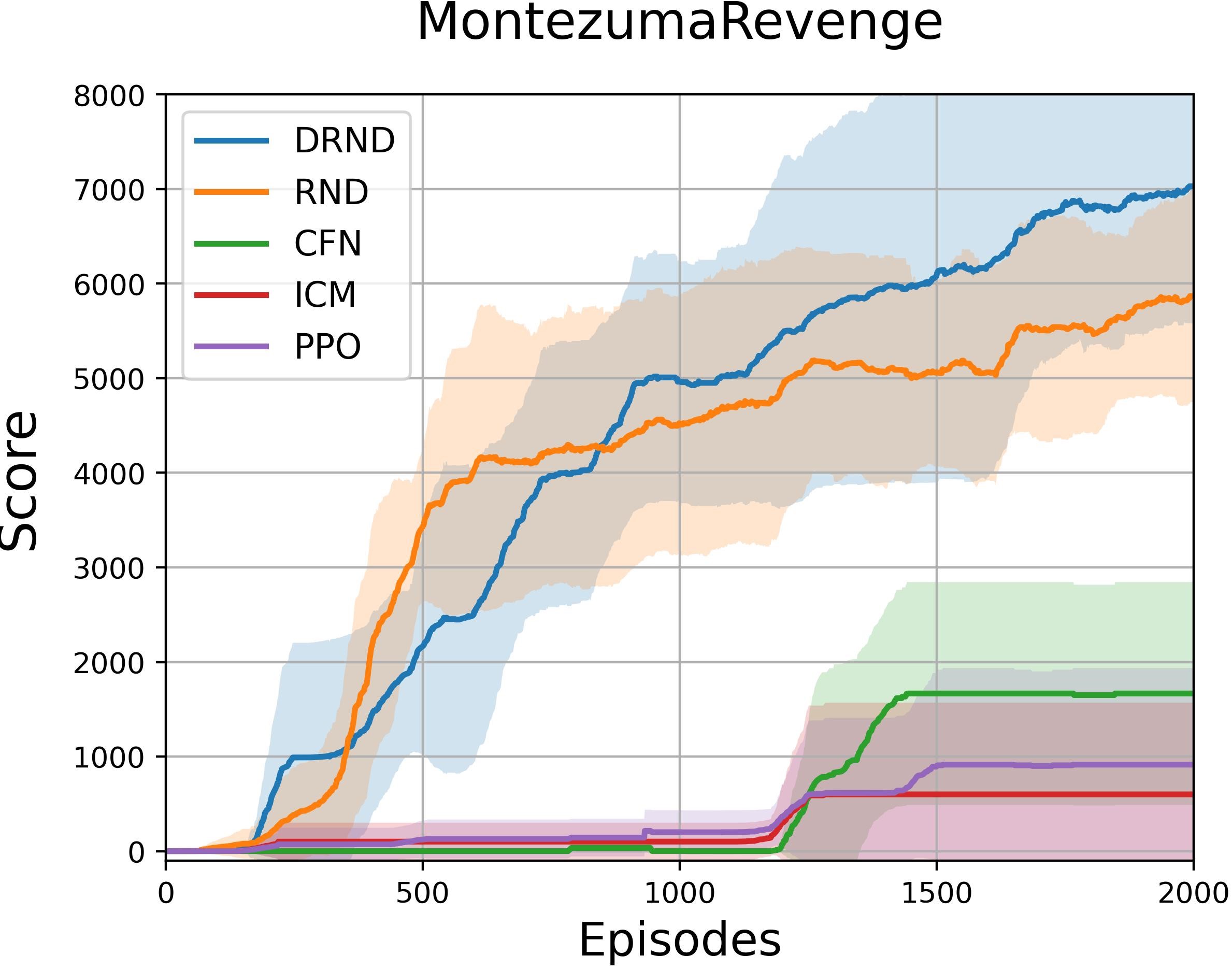}
    \end{minipage}%
    }
    \subfigure{
    \begin{minipage}[t]{0.31\linewidth}
        \centering
        \includegraphics[height=100pt,width=148pt]{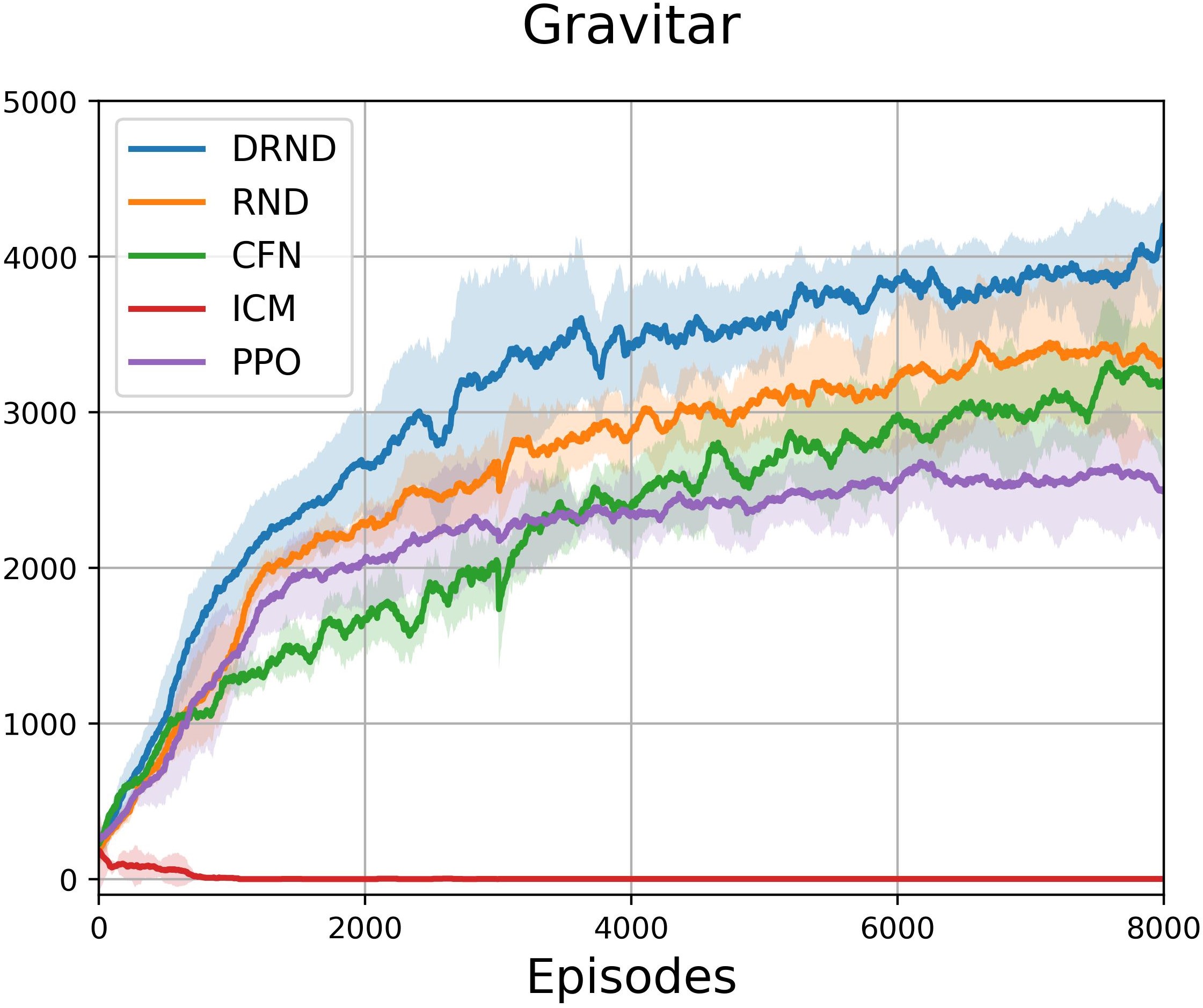}
    \end{minipage}
    }
    \subfigure{
    \begin{minipage}[b]{0.32\linewidth}
        \centering
        \includegraphics[height=100pt,width=150pt]{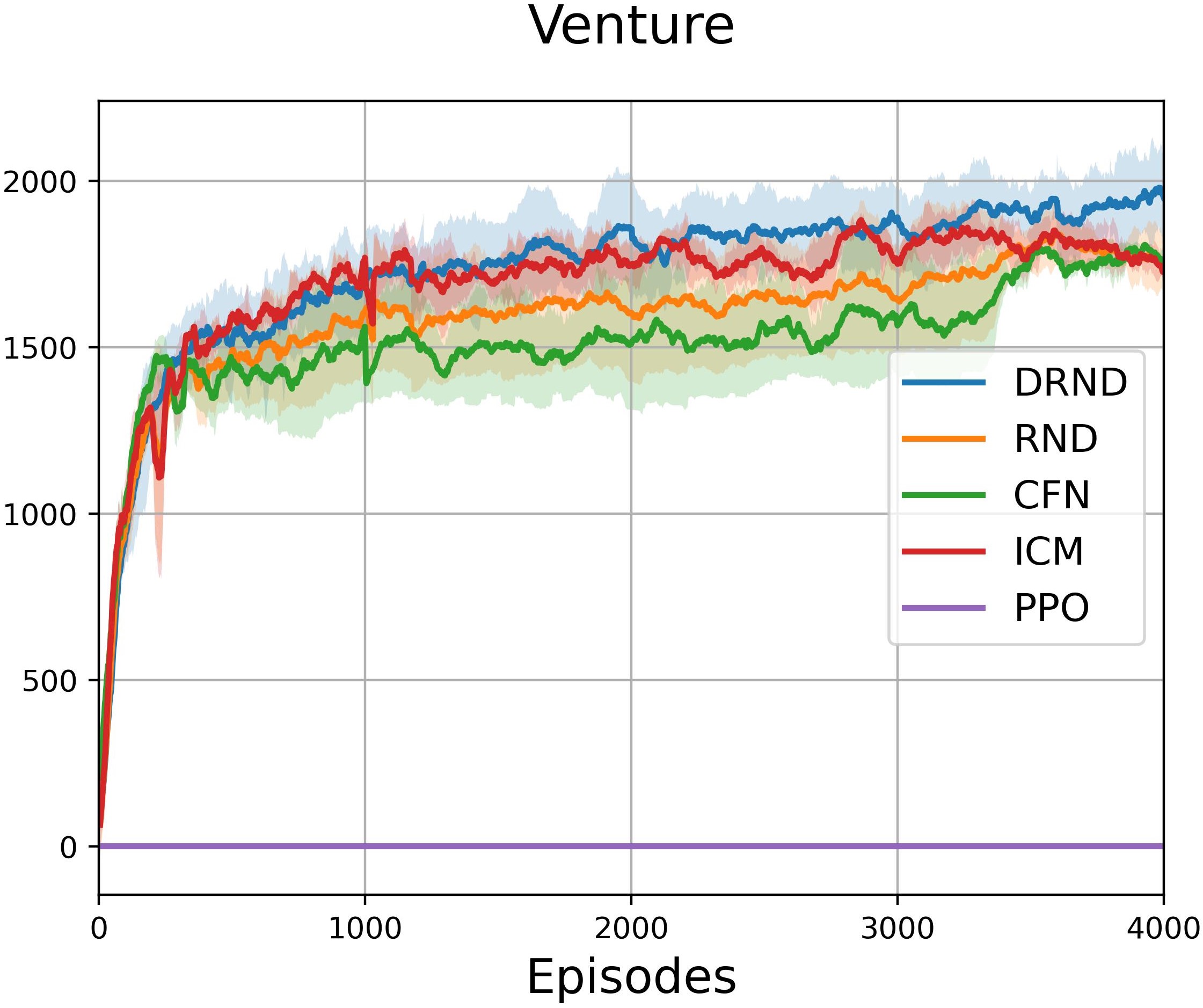}
    \end{minipage}
    }
    \caption{Mean episodic return of DRND method, RND method, and baseline PPO method on three Atari games. All curves are averaged over 5 runs.
 }
    \label{atari}
\end{figure*}

\subsection{The DRND predictor is secretly a pseudo-count model}
It is essential to track data occurrence frequencies to address inconsistent final bonuses. Traditional count-based methods use large tables to record state visitations, while pseudo-count strategies use neural networks for estimation, providing a scalable insight into state visits. However, these methods introduce computational and storage complexities, particularly when dealing with high-dimensional inputs. We constructed a statistic that indirectly estimates state occurrences without extra auxiliary functions.
\begin{lemma} \label{lemma3}
    Let $f_*(x)$ be the optimal function which satisfy \cref{mean}, the statistic
    \begin{equation} \label{y}
y(x) = \frac{[f_*(x)]^2 - [\mu(x)]^2}{B_2(x) - [\mu(x)]^2}
\end{equation} is an unbiased estimator of $1/n$ with consistency.
\end{lemma}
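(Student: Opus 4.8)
The plan is to exploit the fact that, conditioned on the fixed target networks, $f_*(x)=\frac1n\sum_{i=1}^n c_i(x)$ is the sample mean of $n$ i.i.d.\ draws $c_i(x)$ from the discrete distribution $X$, each with mean $\mu(x)$ and second moment $B_2(x)$, hence variance $\sigma^2(x) := B_2(x)-[\mu(x)]^2$. The denominator $B_2(x)-[\mu(x)]^2$ in \cref{y} is a deterministic function of $x$ alone, so all randomness in $y(x)$ sits in the numerator $[f_*(x)]^2$. This reduces both claims to understanding the first two moments of $[f_*(x)]^2$ (for unbiasedness) and its variance (for consistency).

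For unbiasedness, first I would expand $[f_*(x)]^2 = \frac{1}{n^2}\sum_{i,j=1}^n c_i(x)c_j(x)$ and take expectations term by term. The $n$ diagonal terms each contribute $\mathbb{E}[c_i(x)^2]=B_2(x)$, while the $n^2-n$ off-diagonal terms factor by independence into $\mathbb{E}[c_i(x)]\,\mathbb{E}[c_j(x)]=[\mu(x)]^2$. Collecting yields $\mathbb{E}\!\left[[f_*(x)]^2\right] = [\mu(x)]^2 + \tfrac1n\bigl(B_2(x)-[\mu(x)]^2\bigr)$, so subtracting $[\mu(x)]^2$ and dividing by $B_2(x)-[\mu(x)]^2$ gives $\mathbb{E}[y(x)] = 1/n$ exactly. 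This is the clean, essential computation.

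For consistency, since $y(x)$ is unbiased, its mean squared error equals its variance, $\mathbb{E}[(y(x)-1/n)^2] = \operatorname{Var}(y(x)) = \operatorname{Var}([f_*(x)]^2)/(B_2(x)-[\mu(x)]^2)^2$, so it suffices to show $\operatorname{Var}([f_*(x)]^2)\to 0$ as $n\to\infty$. I would write $f_*(x) = \mu(x)+\delta$ with $\delta = f_*(x)-\mu(x)$ a zero-mean fluctuation whose moments are controlled by the i.i.d.\ structure, namely $\mathbb{E}[\delta^2]=\sigma^2(x)/n$ and $\mathbb{E}[\delta^3],\mathbb{E}[\delta^4]=O(n^{-2})$. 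Expanding $[f_*(x)]^2 = [\mu(x)]^2 + 2\mu(x)\delta + \delta^2$ and collecting the variance of $2\mu(x)\delta+\delta^2$, the leading contribution is $4[\mu(x)]^2\sigma^2(x)/n = O(1/n)$, so $\operatorname{Var}(y(x)) = O(1/n)\to 0$. This gives mean-square, and hence in-probability, convergence of $y(x)$ to $1/n$. A shorter but less quantitative route is the strong law of large numbers, which yields $f_*(x)\to\mu(x)$ almost surely and forces the numerator to vanish; the explicit variance bound is what pins down the correct rate.

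The hard part will be the consistency step: it needs the fourth-moment bookkeeping for a sample mean (or an appeal to the law of large numbers), and, more conceptually, a precise statement of what ``consistency'' means when the estimand $1/n$ itself shrinks with the sample size. I would frame it as mean-square consistency, $\mathbb{E}[(y(x)-1/n)^2]\to 0$, which follows immediately from unbiasedness together with the $O(1/n)$ variance bound, and I would note in passing that one needs $\sigma^2(x)=B_2(x)-[\mu(x)]^2>0$ (the targets $\bar f_i(x)$ not all coinciding) for the estimator to be well defined.
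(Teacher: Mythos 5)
Your proof is correct, and your unbiasedness step is exactly the paper's: both expand $[f_*(x)]^2=\frac{1}{n^2}\sum_{i,j}c_i(x)c_j(x)$, split diagonal from off-diagonal terms to get $\mathbb{E}[f_*^2(x)]=\frac{B_2(x)}{n}+\frac{n-1}{n}\mu^2(x)$, and conclude $\mathbb{E}[y(x)]=1/n$. Where you genuinely diverge is the consistency step. The paper computes $\mathbb{E}[f_*^4(x)]$ exactly via the full multinomial expansion of $\bigl(\sum_i c_i(x)\bigr)^4$, with falling-factorial coefficients $A_n^i=\frac{n!}{(n-i)!}$, and from this derives a closed-form expression $\operatorname{Var}[y(x)]=\frac{K_1B_4+K_2\mu B_3+K_3B_2^2+K_4\mu^2B_2+K_5\mu^4}{n^3(B_2-\mu^2)^2}$ with explicit polynomial coefficients $K_1,\dots,K_5$ in $n$, then lets $n\to\infty$. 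You instead write $f_*(x)=\mu(x)+\delta$ with $\mathbb{E}[\delta^2]=\sigma^2(x)/n$ and $\mathbb{E}[\delta^3],\mathbb{E}[\delta^4]=O(n^{-2})$ (both correct for a sample mean of i.i.d.\ draws, via the cumulant scalings $\kappa_3/n^2$ and $3\sigma^4/n^2+\kappa_4/n^3$), and read off $\operatorname{Var}([f_*(x)]^2)=4\mu^2(x)\sigma^2(x)/n+O(n^{-2})$. Your route is shorter and far less bookkeeping-prone than the paper's exact fourth-moment combinatorics, at the cost of not producing the exact finite-$n$ variance formula the paper exhibits; conversely, the paper's exact computation buys precise constants but obscures that the dominant term is just $4\mu^2\sigma^2/n$. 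Two points where you are actually more careful than the paper: you pin down what ``consistency'' means when the estimand $1/n$ itself shrinks (mean-square consistency, $\mathbb{E}[(y(x)-1/n)^2]\to 0$, which the unbiasedness plus variance bound delivers --- the paper only remarks that vanishing variance ``reflects'' consistency), and you flag the nondegeneracy condition $B_2(x)-\mu^2(x)>0$, i.e.\ the targets $\bar f_i(x)$ must not all coincide at $x$, without which $y(x)$ is undefined; the paper never states this.
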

The complete proof can be found in Appendix \ref{proof}. Lemma \ref{lemma3} shows that when $n$ is large, this statistic can effectively recover the number of occurrences of the state $n$, thus implicitly recording the number of occurrences of the state like the pseudo-count method. By minimizing \cref{loss} can make $f_\theta(x)$ and $f_*(x)$ infinitely close, so we replace $f_*(x)$ in $y(x)$ with $f_\theta(x)$ and approximately assume that they are equal. The DRND predictor potentially stores in its weights how much of each state vector is present in the dataset. To correspond to $\sqrt{1/n}$ of the count-based method, the second bonus of the DRND agent is 
\begin{equation} \label{bonus2}
    b_2(x) = \sqrt{\frac{[f_\theta(x)]^2 - [\mu(x)]^2}{B_2(x) - [\mu(x)]^2}},
\end{equation}
which is the estimation of $\sqrt{1/n}$.

\subsection{Bonus of the DRND agent}
The total bonus that combines \cref{bonus1,bonus2} gives

\begin{equation} \label{reward}
b(x) = \alpha \|f_\theta(x) - \mu(x)\|^2 + (1-\alpha) \sqrt{\frac{[f_\theta(x)]^2 - [\mu(x)]^2}{B_2(x) - [\mu(x)]^2}},
\end{equation}

where $\alpha$ represents the scaling factor for the two bonus terms. Figure \ref{diagram} shows the diagram of our method and the RND method. For smaller values of $n$, the variance of the second bonus estimate is substantial, rendering the first bonus $b_1$ a more dependable measure for states with infrequent occurrences. Conversely, as $n$ increases, the variance of the second bonus $b_2$ approaches zero, enhancing its reliability. From the experiments, we found that during the initial training phase, the magnitudes of the two bonuses, denoted as $b_1$ and $b_2$, are nearly the same. However, as training progresses, the rate at which the first bonus $b_1$ decreases is much faster than that of the second bonus $b_2$. Eventually, the magnitude of $b_1$ becomes approximately two orders of magnitude lower than $b_2$. Therefore, if we want $b_1$ to dominate during the early stages of training and have $b_2$ dominate during the later stages, we do not need to set a dynamic coefficient to achieve this functionality. Instead, we can simply set it as a fixed constant $\alpha$. The distribution of various bonuses as well as the total bonus of the DRND method before and after training can be seen in Figure \ref{relitu}.

The aforementioned heatmap serves as a qualitative illustration. We now provide quantitative experimental results to demonstrate that DRND offers superior bonuses compared to RND. In count-based exploration methods, it is common to assess the rationality of the bonus distribution by comparing it with the $1/\sqrt{n}$ distribution, where $n$ represents the state visitation count. We evaluate the discrepancy between the intrinsic reward distribution and the uniform distribution $U$ before training, and the distribution of $1/\sqrt{n}$ after training using KL divergence. We randomly select 100 initial dataset distributions and report the mean and variance of the KL divergence. This assessment is denoted as $D_{KL}(P\|U)$ and $D_{KL}(P\|1/\sqrt{n})$, where $P$ represents the distribution of RND bonus or DRND bonus. The results obtained are presented in Table \ref{RNDandDRND}. It is evident that the KL divergence between the bonuses of DRND and the uniform distribution is smaller during the initial training stages, effectively promoting uniform exploration by the agent. In the later stages of training, DRND better aligns with the distribution of $1/\sqrt{n}$, thereby facilitating deeper exploration by the agent.

To validate that $b_1$ gives better and more uniform bonuses during the early stages of training, while $b_2$ better captures the state visitation count $n$ later on, we similarly measure the KL divergence between each bonus term $b_1$, $b_2$, the overall bonus $b$ and (a) the uniform distribution $U$ before training; (b) the distribution of $1/\sqrt{n}$ after training. The results are presented in Table \ref{b1b2}. 

From the table, it can be observed that in the early stages of training, $b_1$ performs better, providing a better solution to the initial bonus inconsistency. Conversely, in the later stages of training, $b_2$ performs better, addressing the final bonus inconsistency more effectively. Moreover, from the KL divergence of the total bonus, it is evident that setting a fixed value for $\alpha$ achieves the desired effect of having $b_1$ dominate during the early stages and $b_2$ dominate during the later stages, without the need for dynamic adjustments. 

The combination of $b_1$ and $b_2$ depends on the environment and task at hand. In some environments, $b_1$ may be more crucial, while in others, the contribution of $b_2$ might be greater. However, regardless of the specifics, properly integrating both can lead to improved performance.

\subsection{Connections between DRND and prior methods} 
The target network of the DRND method remains static throughout the training process. Moreover, its loss and intrinsic reward computations do not entail additional backpropagation steps, thereby maintaining computational efficiency comparable to that of the RND algorithm. Specifically, when the hyperparameters $\alpha$ and $N$ are set to 1, the expressions for the loss and intrinsic reward become simplified to $\left \| f_\theta(x) - \bar{f}(x) \right \| ^2$, which coincides with the formulation utilized in the RND framework. In contrast to count-based and pseudo-count methods, we forgo the utilization of an additional network or table to track state occurrences. Instead, we estimate these occurrences using the statistical information derived from the prediction network itself.\renewcommand{\arraystretch}{0.85} When $\alpha$ is set to 0 and $c(x)$ follows the distribution $c(x)\sim \begin{array}{c|c c }
\hline
X & -1 & 1\\
\hline
P & 0.5 &0.5 \\
\hline
\end{array}$, the expressions for the loss and intrinsic reward are simplified to $\|f_\theta(x)\|^2$ and $\|f_\theta(x)\|$, which align with the pseudo-count approach CFN.

\begin{figure*}[t]
    \centering
    \subfigure{
    \begin{minipage}[]{0.24\linewidth}
        \centering
        \includegraphics[height=80pt,width=120pt]{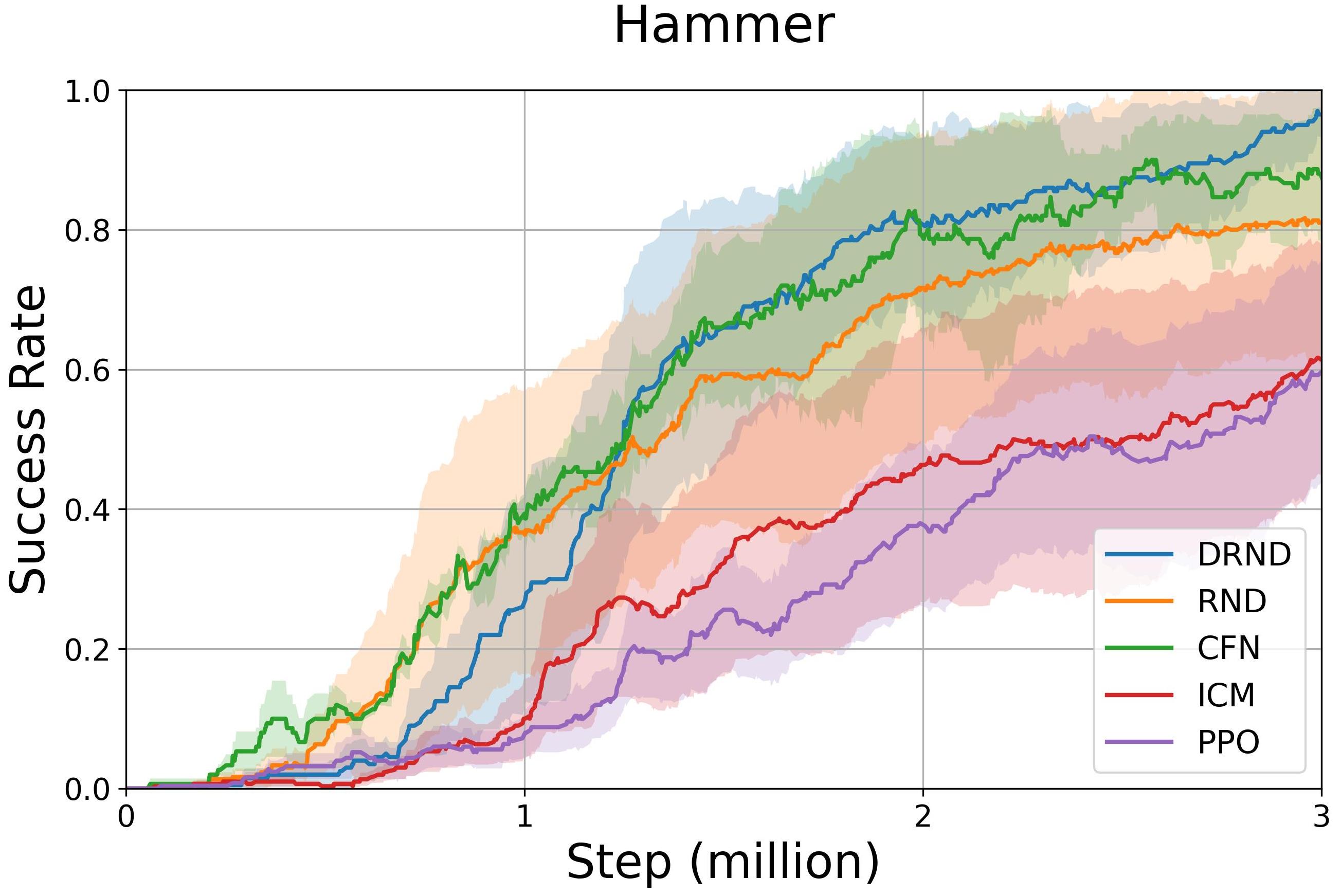}
    \end{minipage}%
    }
    \subfigure{
    \begin{minipage}[]{0.23\linewidth}
        \centering
        \includegraphics[height=80pt,width=110pt]{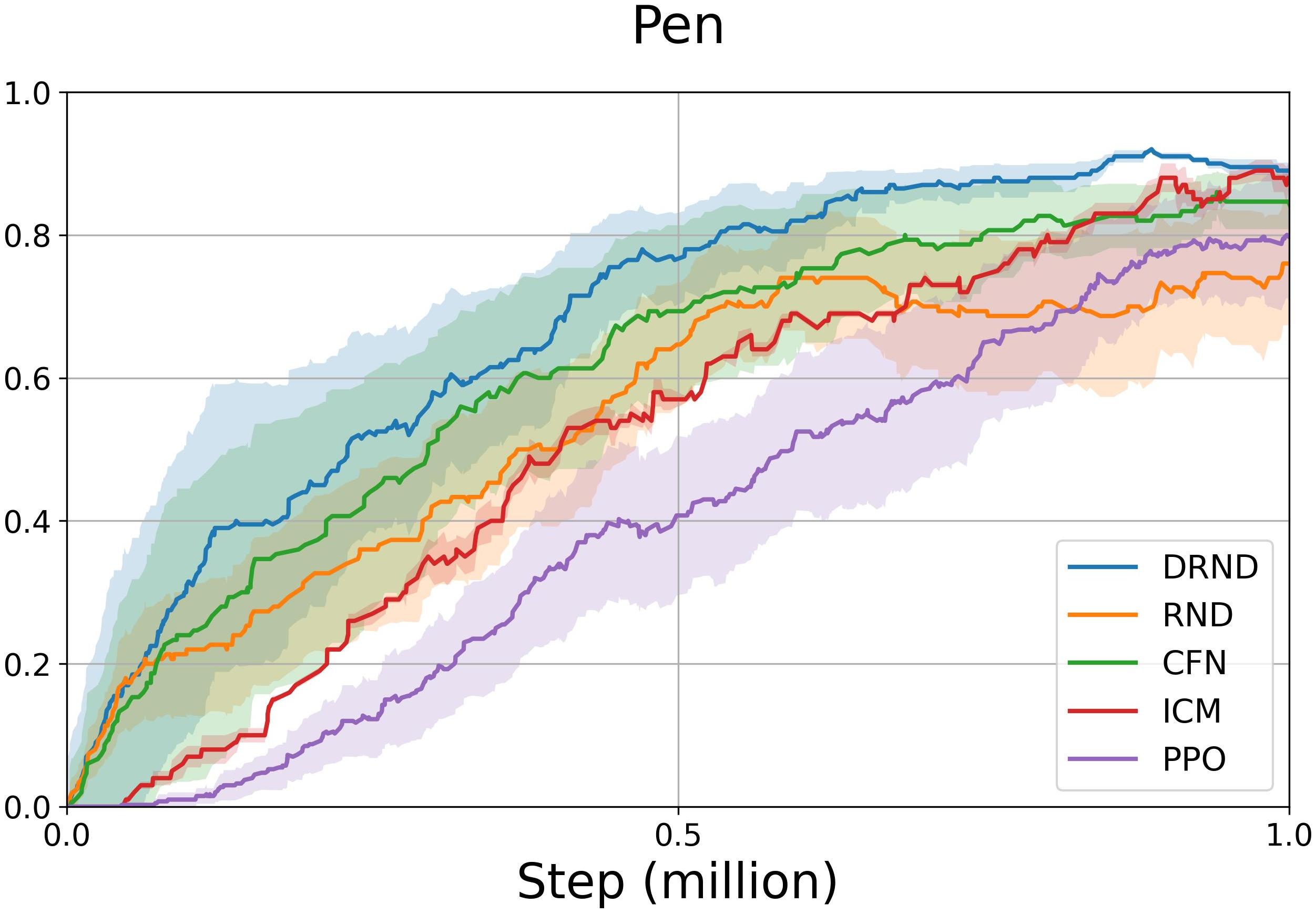}
    \end{minipage}
    }
    \subfigure{
    \begin{minipage}[]{0.23\linewidth}
        \centering
        \includegraphics[height=80pt,width=110pt]{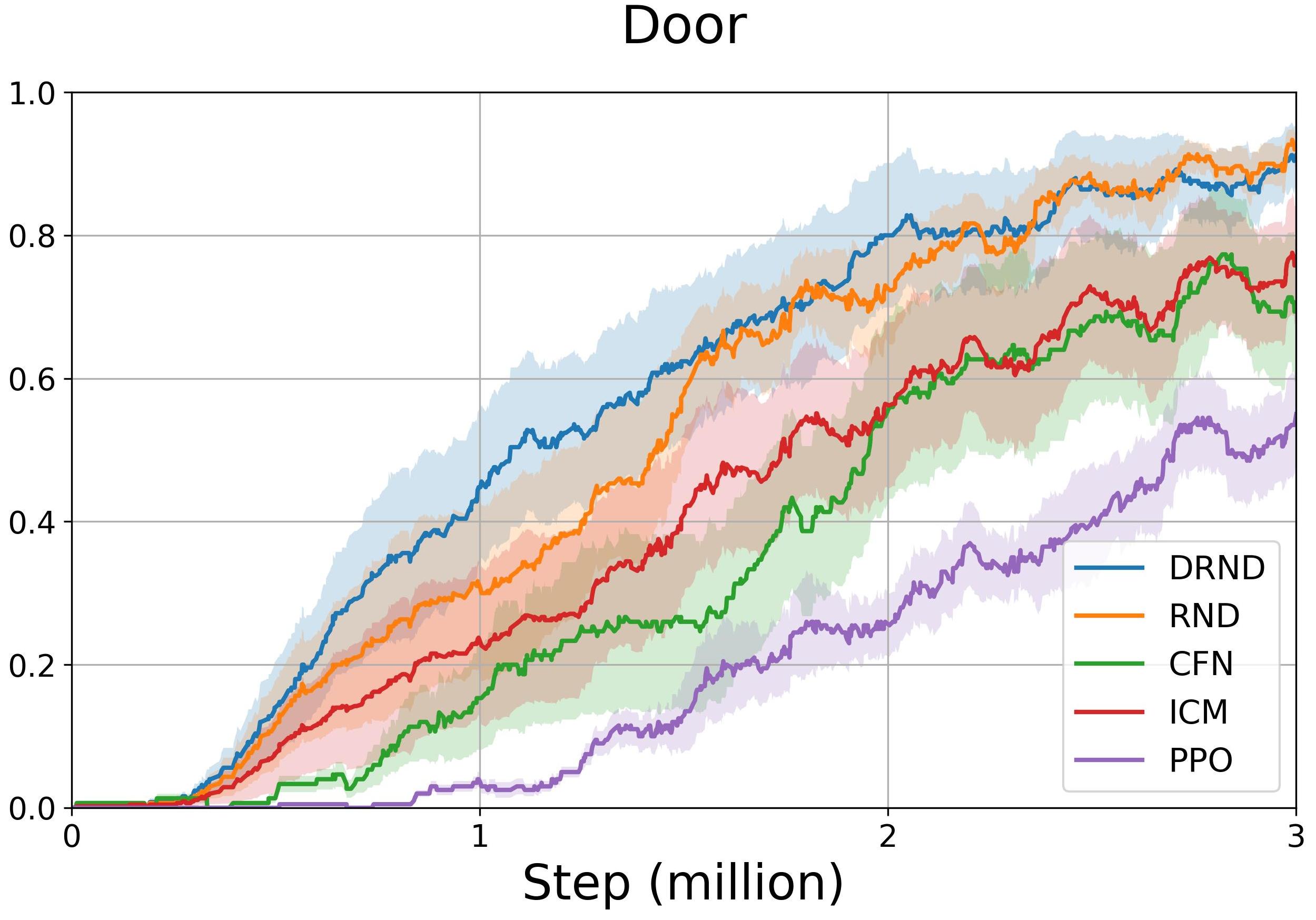}
    \end{minipage}
    }
    \subfigure{
    \begin{minipage}[]{0.23\linewidth}
        \centering
        \includegraphics[height=80pt,width=110pt]{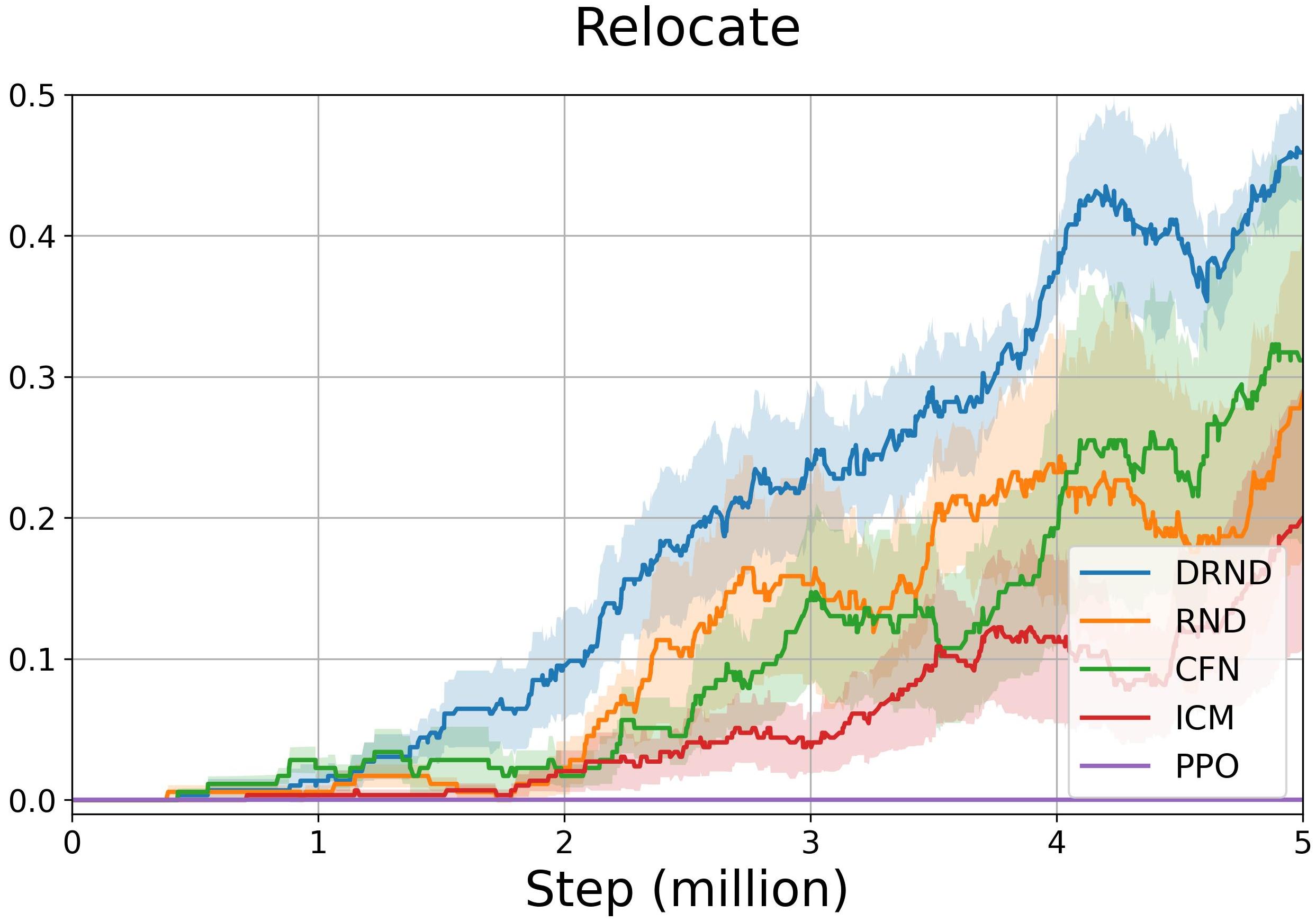}
    \end{minipage}
    }
    \caption{Learning curves in the Adroit continuous control tasks. All curves are averaged over 5 runs.
 }
    \label{adroit}
\end{figure*}

\begin{figure*}[h]
    \centering
    \subfigure{
    \begin{minipage}[]{0.24\linewidth}
        \centering
        \includegraphics[height=80pt,width=120pt]{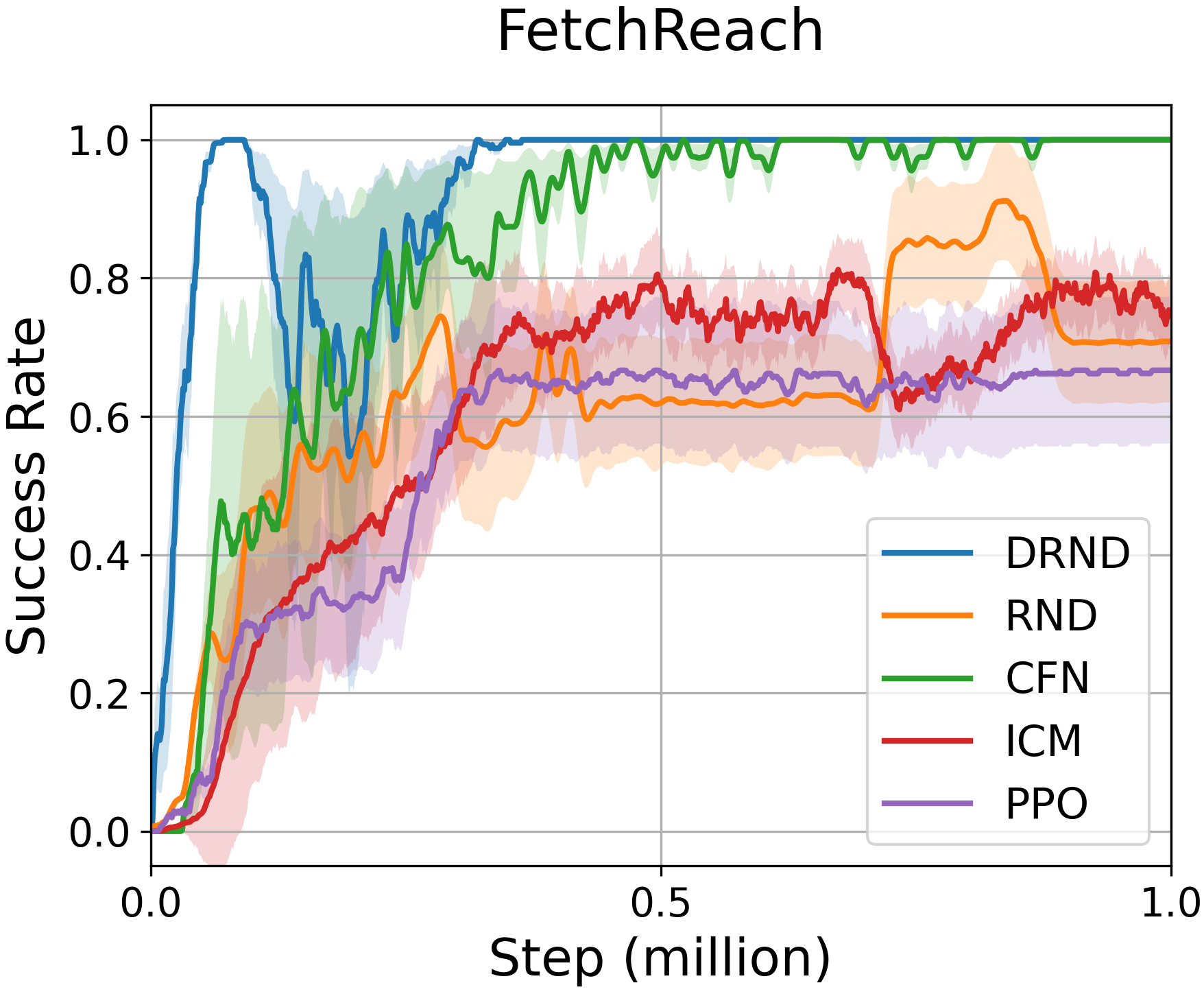}
    \end{minipage}
    }
    \subfigure{
    \begin{minipage}[]{0.23\linewidth}
        \centering
        \includegraphics[height=80pt,width=110pt]{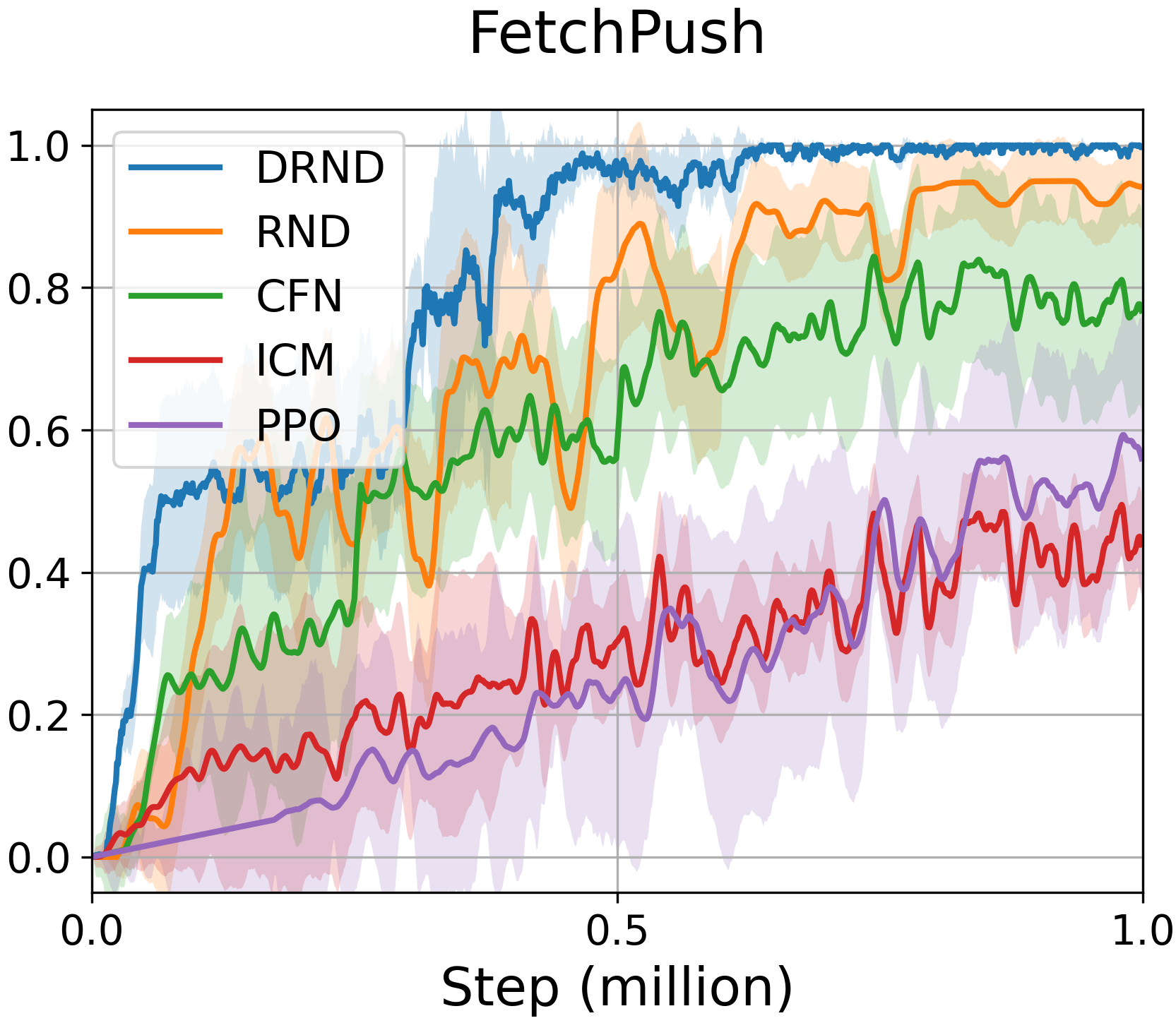}
    \end{minipage}
    }
    \subfigure{
    \begin{minipage}[]{0.225\linewidth}
        \centering
        \includegraphics[height=80pt,width=110pt]{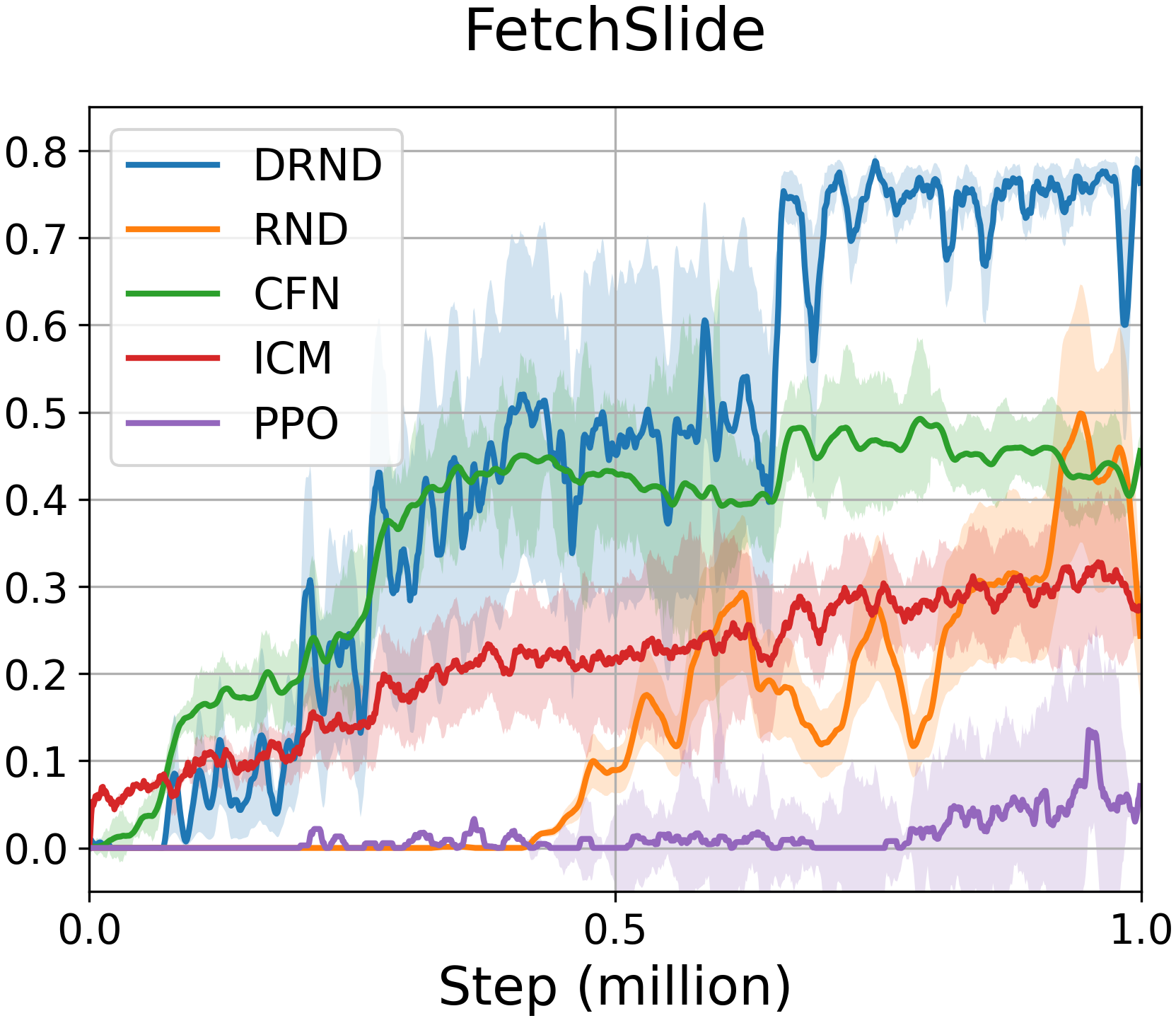}
    \end{minipage}
    }
    \subfigure{
    \begin{minipage}[]{0.23\linewidth}
        \centering
        \includegraphics[height=80pt,width=113pt]{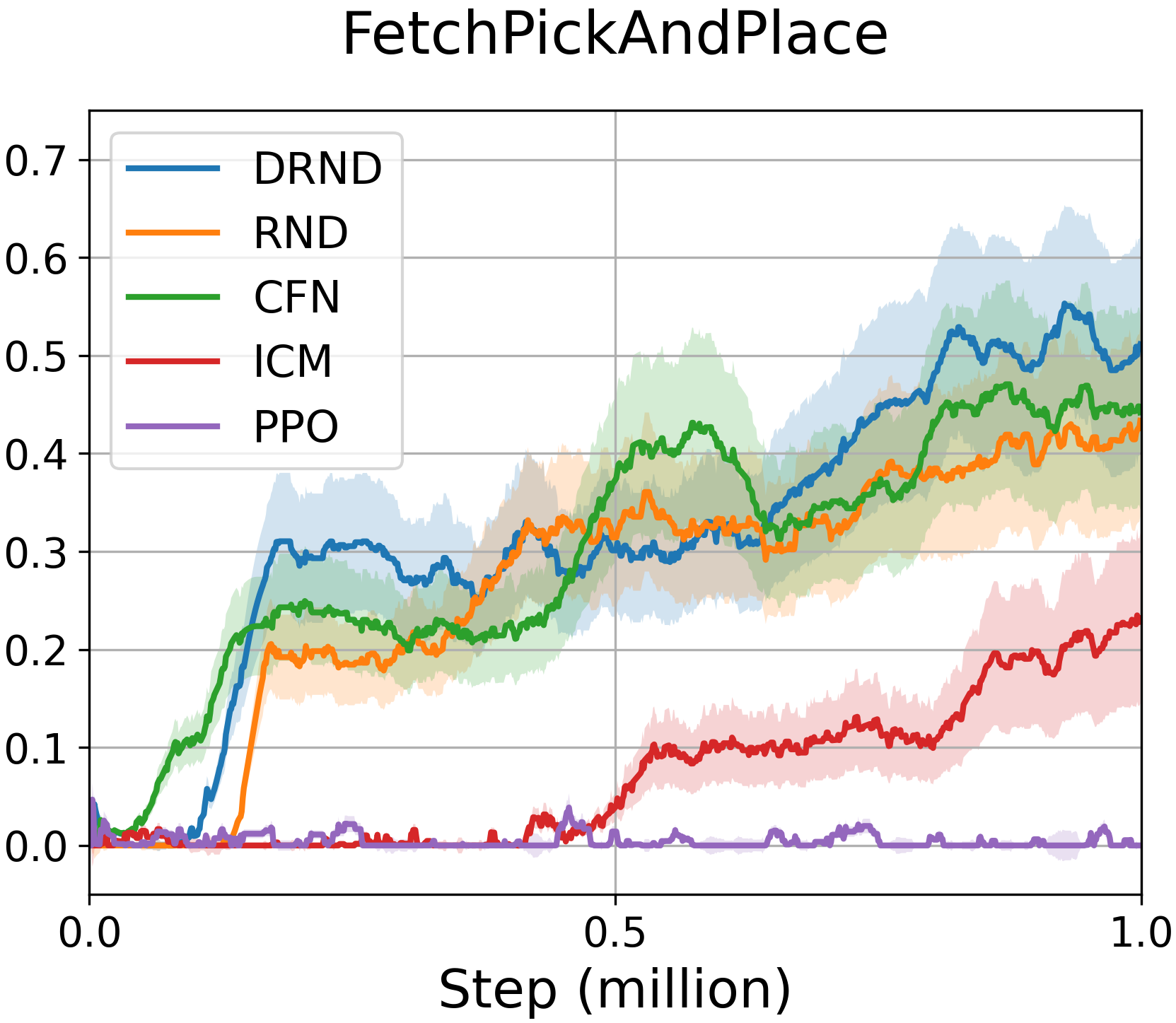}
    \end{minipage}%
    }
    \caption{Results on the Fetch manipulation tasks. All curves are averaged over 5 runs.
 }
    \label{fetch}
\end{figure*}

\section{Experiment}
In this section, we provide empirical evaluations of DRND. Initially, we demonstrate that DRND offers a better bonus than RND, both before and after training. Our online experiments reveal that DRND surpasses numerous baselines, achieving the best results in exploration-intensive environments. In the offline setting, we use DRND as an anti-exploration penalty term and propose the SAC-DRND algorithm, which beats strong baselines in many D4RL datasets.

\subsection{Bonus prediction comparison}\label{sec: inconsistency exp}

In this sub-section, we introduce our inconsistency experiments to compare bonus predictions for both RND and DRND. We created a mini-dataset resembling those used in offline RL or online RL replay buffers in the experiments. This small dataset contains $M$ data categories labeled from $1$ to $m$, with each data type occurring $i$ times proportional to its label. Each data point is represented as a one-hot vector with $M$ dimensions, where $M$ is set to 100. We train both the RND and DRND networks on the dataset and record both the initial intrinsic reward and the final intrinsic reward.

The left panel in Figure \ref{box} illustrates the difference in initial intrinsic rewards between RND and DRND, with the x-axis representing the number of target networks. As $N$ increases, the y-axis, representing the range of intrinsic rewards, becomes narrower, resulting in a more uniform distribution of rewards. In the right panel of Figure \ref{box}, we display the intrinsic reward distribution trained on the mini-dataset, showing that DRND's rewards have a stronger correlation with sample count than RND, as indicated by the regression lines.

\subsection{Performance on Online experiments}
Like many other exploration methods, we conduct our DRND approach in Atari games, Adroit environments \cite{rajeswaran2017learning}, and fetch manipulation tasks \cite{plappert2018multi}, which need deep exploration to get a high score. We integrate our method with the PPO \cite{schulman2017proximal} algorithm. We compare our approach with the RND method, the pseudo-count method CFN \cite{lobel2023flipping}, the curiosity-driven method ICM \cite{pathak2017curiosity}, and the baseline PPO method. The solid lines in the figures represent the mean of multiple experiments, and the shading represents the standard deviation interval.

\textbf{Atari Games.}
We chose three Atari games — Montezuma’s Revenge, Gravitar, and Venture — to evaluate our algorithms. These games require deep exploration to achieve high scores, making them ideal for assessing algorithmic exploratory capabilities. We benchmarked our method against the RND and PPO algorithms, with results presented in Figure \ref{atari}. Our DRND method converges faster and attains the highest final scores in these environments.

\textbf{Adroit Experiments.}
We further delve into the Adroit continuous control tasks. In these challenges, a robot must skillfully manipulate a hand to perform various actions, such as adjusting a pen's orientation or unlocking a door. Considering the complexity of the tasks and the robot's high-dimensional state space, it becomes imperative to explore methods that can facilitate the robot's learning. Figure \ref{adroit} illustrates that our DRND method outperforms all the other methods in exploration, especially in the challenging `Hammer' and `Relocate' environments. However, in the `Pen' environment, our method does not exhibit a significant improvement compared to other exploration algorithms. This could be attributed to the relatively simpler nature of this environment, which does not demand deep exploration.

\textbf{Fetch Manipulation Tasks.}
The Fetch manipulation tasks involve various gym-robotics environments, challenging the Fetch robot arm with complex tasks like reaching, pushing, sliding, and pick-and-place actions. Due to their complexity, these tasks demand advanced exploration strategies. Our evaluation of exploration algorithms in this context highlights their effectiveness in handling intricate robotic manipulations. As shown in Figure \ref{fetch}, our DRND approach excels in assisting the robot in these tasks. Our DRND method effectively combines the strengths of these approaches, outperforming results achievable with either pseudo-count or curiosity-driven methods alone. Consequently, our DRND algorithm performs significantly better than the RND method and other exploration algorithms.

\begin{table*}[t]
\tiny
    \centering
    \setlength{\tabcolsep}{0.5mm}{
    \resizebox{\linewidth}{42mm}{
        \begin{tabular}{l c c c c c c c}
        \toprule
         \textbf{Dataset} & \textbf{SAC} & \textbf{TD3+BC} & \textbf{CQL}& \textbf{IQL} & \textbf{SAC-RND} & \textbf{ReBRAC} & \textbf{SAC-DRND}\\
        \midrule
          hopper-random  &9.9 ± 1.5 &  8.5 ± 0.6  & 5.3 ± 0.6& 10.1 ± 5.9&  19.6 ± 12.4  &8.1 ± 2.4  
& \textbf{32.7} ± 0.4 \\
          hopper-medium &0.8 ± 0.0 & 59.3 ± 4.2 &61.9 ± 6.4 &65.2 ± 4.2& 91.1 ± 10.1& \textbf{102.0} ± 1.0 & 98.5 ± 1.1
 \\
          hopper-expert &0.7 ± 0.0 &107.8 ± 7.0 & 106.5 ± 9.1& 108.8 ± 3.1&\textbf{109.7} ± 0.5
 & 100.1 ± 8.3&\textbf{109.7} ± 0.3\\
         hopper-medium-expert &0.7 ± 0.0 &  98.0 ± 9.4 & 96.9 ± 15.1 &85.5 ± 29.7& \textbf{109.8} ± 0.6&107.0 ± 6.4& 108.7 ± 0.5\\
         hopper-medium-replay &7.4 ± 0.5 &  60.9 ± 18.8 & 86.3 ± 7.3 & 89.6 ± 13.2& 97.2 ± 9.0
&98.1 ± 5.3
&\textbf{100.5} ± 1.0\\
         hopper-full-replay &41.1 ± 17.9 & 97.9 ± 17.5  & 101.9 ± 0.6 & 104.4 ± 10.8 &107.4 ± 0.8
 &107.1 ± 0.4
&\textbf{108.2} ± 0.7\\
         \hline
         halfcheetah-random &29.7 ± 1.4 & 11.0 ± 1.1    & \textbf{31.1} ± 3.5  &19.5 ± 0.8&27.6 ± 2.1
&29.5 ± 1.5& 30.4 ± 4.0\\
         halfcheetah-medium &55.2 ± 27.8 &  48.3 ± 0.3   & 46.9 ± 0.4 &50.0 ± 0.2&66.4 ± 1.4 &  65.6 ± 1.0& \textbf{68.3} ± 0.2\\ 
         halfcheetah-expert &-0.8 ± 1.8  &  96.7 ± 1.1     & 97.3 ± 1.1 &95.5 ± 2.1& 102.6 ± 4.2
  &  105.9 ± 1.7& \textbf{106.2} ± 3.7 \\
        halfcheetah-medium-expert &28.4 ± 19.4  &  90.7 ± 4.3      & 95.0 ± 1.4 &92.7 ± 2.8& 107.6 ± 2.8
  &  101.1 ± 5.2 & \textbf{108.5} ± 1.1 \\
        halfcheetah-medium-replay &0.8 ± 1.0  &  44.6 ± 0.5      & 45.3 ± 0.3 &42.1 ± 3.6&  51.2 ± 3.2 &51.0 ± 0.8 & \textbf{52.1} ± 4.8\\ 
         halfcheetah-full-replay &\textbf{86.8} ± 1.0  &   75.0 ± 2.5    & 76.9 ± 0.9  & 75.0 ± 0.7 &  81.2 ± 1.3
 &  82.1 ± 1.1 & 81.4 ± 1.7\\
         \hline
         walker2d-random & 0.9 ± 0.8 & 1.6 ± 1.7 & 5.1 ± 1.7 & 11.3 ± 7.0& 18.7 ± 6.9&18.1 ± 4.5  & \textbf{21.7} ±  0.1\\
         walker2d-medium & -0.3 ± 0.2 &  83.7 ± 2.1     &   79.5 ± 3.2&80.7 ± 3.4&91.6 ± 2.8& 82.5 ± 3.6& \textbf{95.2} ± 0.7\\
        walker2d-expert & 0.7 ± 0.3 & 110.2 ± 0.3 & 109.3 ± 0.1 & 96.9 ± 32.3& 104.5 ± 22.8  &112.3 ± 0.2 &\textbf{114.0}± 0.5\\ 
         walker2d-medium-expert& 1.9 ± 3.9 &  110.1 ± 0.5  & 109.1 ± 0.2&112.1 ± 0.5& 104.6 ± 11.2 &\textbf{111.6} ± 0.3 & 109.6 ± 1.0\\
         walker2d-medium-replay& -0.4 ± 0.3 & 81.8 ± 5.5   & 76.8 ± 10.0 &75.4 ± 9.3& 88.7 ± 7.7
 &77.3 ± 7.9& \textbf{91.0} ± 2.9\\
         walker2d-full-replay & 27.9 ± 47.3 & 90.3 ± 5.4  & 94.2 ± 1.9 &97.5 ± 1.4& 105.3 ± 3.2 & 102.2 ± 1.7& \textbf{109.6} ± 0.7\\
\hline
average score&16.2 & 67.5 & 73.6 & 72.9 & 82.6 & 81.2  & \textbf{86.0}\\
         \bottomrule
     	
        \end{tabular}
    }

    \vspace{5mm}  
    
     }
     \setlength{\tabcolsep}{0.8mm}{
    \resizebox{\linewidth}{21.5mm}{
    \begin{tabular}{l c c c c c cc}
\toprule
    \textbf{Dataset} & \textbf{SAC} & \textbf{TD3+BC} & \textbf{CQL}& \textbf{IQL} & \textbf{SAC-RND} & \textbf{ReBRAC}& \textbf{SAC-DRND}\\
    \midrule
         antmaze-umaze & 0.0 &78.6&74.0 &   83.3  ± 4.5    & 97.0 ± 1.5 
  &   \textbf{97.8} ± 1.0 & 95.8 ± 2.4\\
         antmaze-umaze-diverse & 0.0 &71.4&84.0 &   70.6 ± 3.7   &  66.0 ± 25.0 & \textbf{88.3} ± 13.0&87.2 ± 3.2\\ 
         \hline
         antmaze-medium-play& 0.0 & 10.6&61.2 &64.6 ± 4.9& 38.5 ± 29.4  & 84.0 ± 4.2& \textbf{86.2} ± 5.4\\ 
         antmaze-medium-diverse & 0.0 &3.0&53.7& 61.7 ± 6.1   & 74.7 ± 10.7
 &76.3 ± 13.5 &\textbf{83.0} ±3.8\\
         \hline
         antmaze-large-play& 0.0 & 0.2&15.8& 42.5 ± 6.5 & 43.9 ± 29.2
  &   \textbf{60.4} ± 26.1&53.2 ± 4.1\\
         antmaze-large-diverse & 0.0 & 0.0&14.9& 27.6 ± 7.8 &  45.7 ± 28.5 & \textbf{54.4} ± 25.1 &50.8 ± 10.5\\
\hline
average score& 0.0 & 27.3 & 50.6 & 58.3 & 60.9 & \textbf{76.8} & 76.0 \\
\bottomrule
    \end{tabular}
    }
    }
   \caption{Average normalized scores of ensemble-free algorithms. The figure shows the scores at the final gradient step across 10 different random seeds. We evaluate 10 episodes for MuJoCo tasks and 100 episodes for AntMaze tasks. SAC and TD3+BC scores are taken from \cite{an2021uncertainty}. CQL, IQL, SAC-RND, and ReBRAC scores are taken from \cite{tarasov2023revisiting}. The highest score for each experiment is bolded.}
        \label{TABLE1}
\end{table*}
\vspace{0.15cm}

\subsection{D4RL Offline experiments}
We assessed our method using the D4RL \cite{fu2020d4rl} offline datasets, integrating the DRND approach with the SAC algorithm \cite{haarnoja2018soft}. Considering all available datasets in each domain, we tested SAC-DRND on Gym-MuJoCo and the more intricate AntMaze D4RL tasks. Our analysis compares against notable algorithms as detailed in \cite{rezaeifar2022offline}, including IQL \cite{kostrikov2021offline}, CQL \cite{kumar2020conservative}, and TD3+BC \cite{fujimoto2021minimalist}. It is worth noting that although our method also has $N$ target networks, they are fixed and not trained, making it ensemble-free. Our SAC-DRND is ensemble-free and only involves training double critics networks. We compare our methods against recent strong model-free offline RL algorithms in Table \ref{TABLE1}. Additionally, we compare SAC-DRND against strong ensemble-based algorithms like SAC-N in Appendix \ref{ensemble}. Only the results of the ensemble-free methods are shown in the main text. The results are evaluated at the final gradient step over 10 different seeds.

It can be seen that SAC-DRND excels in the majority of MuJoCo tasks, attaining the best results among all ensemble-free methods. On Antmaze tasks, DRND also reached a level similar to SOTA. Compared to SAC-RND, which has comparable computational and storage requirements as our approach, SAC-DRND more effectively captures the dataset distribution, as reflected in its superior average scores and decreased variance. We also conducted experiments on Adroit tasks (Appendix \ref{adroit_exp}), hyperparameters sensitivity experiments (Appendix \ref{eop}) using Expected Online Performance (EOP, \cite{kurenkov2022showing}) and offline-to-online experiments (Appendix \ref{offline2online}).

\section{Conclusion}
Our research highlights the ``bonus inconsistency'' issue inherent in RND, which hinders its capacity for deep exploration. We introduce DRND, which distills a random target from a random distribution. Our approach efficiently records state-action occurrences without substantial time and space overhead by utilizing specially designed statistics to extract pseudo-counts. Theoretical analysis and empirical results show our method's effectiveness in tackling bonus inconsistency. We observe promising results across Atari games, gym-robotics tasks, and offline D4RL datasets.

\section*{Acknowledgements}
This work was supported by the STI 2030-Major Projects under Grant 2021ZD0201404. 

\section*{Impact Statement}
This paper presents work whose goal is to advance the field of Machine Learning. There are many potential societal consequences of our work, none which we feel must be specifically highlighted here.

\bibliography{example_paper}

\begin{thebibliography}{49}
\providecommand{\natexlab}[1]{#1}
\providecommand{\url}[1]{\texttt{#1}}
\expandafter\ifx\csname urlstyle\endcsname\relax
  \providecommand{\doi}[1]{doi: #1}\else
  \providecommand{\doi}{doi: \begingroup \urlstyle{rm}\Url}\fi

\bibitem[Achiam \& Sastry(2017)Achiam and Sastry]{achiam2017surprise}
Achiam, J. and Sastry, S.
\newblock Surprise-based intrinsic motivation for deep reinforcement learning.
\newblock \emph{arXiv preprint arXiv:1703.01732}, 2017.

\bibitem[An et~al.(2021)An, Moon, Kim, and Song]{an2021uncertainty}
An, G., Moon, S., Kim, J.-H., and Song, H.~O.
\newblock Uncertainty-based offline reinforcement learning with diversified q-ensemble.
\newblock \emph{Advances in neural information processing systems}, 34:\penalty0 7436--7447, 2021.

\bibitem[Azar et~al.(2017)Azar, Osband, and Munos]{azar2017minimax}
Azar, M.~G., Osband, I., and Munos, R.
\newblock Minimax regret bounds for reinforcement learning.
\newblock In \emph{International Conference on Machine Learning}, pp.\  263--272. PMLR, 2017.

\bibitem[Bellemare et~al.(2016)Bellemare, Srinivasan, Ostrovski, Schaul, Saxton, and Munos]{bellemare2016unifying}
Bellemare, M., Srinivasan, S., Ostrovski, G., Schaul, T., Saxton, D., and Munos, R.
\newblock Unifying count-based exploration and intrinsic motivation.
\newblock \emph{Advances in neural information processing systems}, 29, 2016.

\bibitem[Bradbury et~al.(2018)Bradbury, Frostig, Hawkins, Johnson, Leary, Maclaurin, Necula, Paszke, VanderPlas, Wanderman-Milne, et~al.]{bradbury2018jax}
Bradbury, J., Frostig, R., Hawkins, P., Johnson, M.~J., Leary, C., Maclaurin, D., Necula, G., Paszke, A., VanderPlas, J., Wanderman-Milne, S., et~al.
\newblock Jax: composable transformations of python+ numpy programs.
\newblock 2018.

\bibitem[Burda et~al.(2018{\natexlab{a}})Burda, Edwards, Pathak, Storkey, Darrell, and Efros]{burda2018large}
Burda, Y., Edwards, H., Pathak, D., Storkey, A., Darrell, T., and Efros, A.~A.
\newblock Large-scale study of curiosity-driven learning.
\newblock \emph{arXiv preprint arXiv:1808.04355}, 2018{\natexlab{a}}.

\bibitem[Burda et~al.(2018{\natexlab{b}})Burda, Edwards, Storkey, and Klimov]{burda2018exploration}
Burda, Y., Edwards, H., Storkey, A., and Klimov, O.
\newblock Exploration by random network distillation.
\newblock \emph{arXiv preprint arXiv:1810.12894}, 2018{\natexlab{b}}.

\bibitem[Ciosek et~al.(2019)Ciosek, Fortuin, Tomioka, Hofmann, and Turner]{ciosek2019conservative}
Ciosek, K., Fortuin, V., Tomioka, R., Hofmann, K., and Turner, R.
\newblock Conservative uncertainty estimation by fitting prior networks.
\newblock In \emph{International Conference on Learning Representations}, 2019.

\bibitem[Fu et~al.(2020)Fu, Kumar, Nachum, Tucker, and Levine]{fu2020d4rl}
Fu, J., Kumar, A., Nachum, O., Tucker, G., and Levine, S.
\newblock D4rl: Datasets for deep data-driven reinforcement learning.
\newblock \emph{arXiv preprint arXiv:2004.07219}, 2020.

\bibitem[Fujimoto \& Gu(2021)Fujimoto and Gu]{fujimoto2021minimalist}
Fujimoto, S. and Gu, S.~S.
\newblock A minimalist approach to offline reinforcement learning.
\newblock \emph{Advances in neural information processing systems}, 34:\penalty0 20132--20145, 2021.

\bibitem[Fujimoto et~al.(2019)Fujimoto, Meger, and Precup]{fujimoto2019off}
Fujimoto, S., Meger, D., and Precup, D.
\newblock Off-policy deep reinforcement learning without exploration.
\newblock In \emph{International conference on machine learning}, pp.\  2052--2062. PMLR, 2019.

\bibitem[Ghasemipour et~al.(2022)Ghasemipour, Gu, and Nachum]{ghasemipour2022so}
Ghasemipour, K., Gu, S.~S., and Nachum, O.
\newblock Why so pessimistic? estimating uncertainties for offline rl through ensembles, and why their independence matters.
\newblock \emph{Advances in Neural Information Processing Systems}, 35:\penalty0 18267--18281, 2022.

\bibitem[Haarnoja et~al.(2018)Haarnoja, Zhou, Abbeel, and Levine]{haarnoja2018soft}
Haarnoja, T., Zhou, A., Abbeel, P., and Levine, S.
\newblock Soft actor-critic: Off-policy maximum entropy deep reinforcement learning with a stochastic actor.
\newblock In \emph{International conference on machine learning}, pp.\  1861--1870. PMLR, 2018.

\bibitem[Hong et~al.(2022)Hong, Kumar, and Levine]{hong2022confidence}
Hong, J., Kumar, A., and Levine, S.
\newblock Confidence-conditioned value functions for offline reinforcement learning.
\newblock \emph{arXiv preprint arXiv:2212.04607}, 2022.

\bibitem[Houthooft et~al.(2016)Houthooft, Chen, Duan, Schulman, De~Turck, and Abbeel]{houthooft2016vime}
Houthooft, R., Chen, X., Duan, Y., Schulman, J., De~Turck, F., and Abbeel, P.
\newblock Vime: Variational information maximizing exploration.
\newblock \emph{Advances in neural information processing systems}, 29, 2016.

\bibitem[Jayakumar et~al.(2020)Jayakumar, Czarnecki, Menick, Schwarz, Rae, Osindero, Teh, Harley, and Pascanu]{jayakumar2020multiplicative}
Jayakumar, S.~M., Czarnecki, W.~M., Menick, J., Schwarz, J., Rae, J., Osindero, S., Teh, Y.~W., Harley, T., and Pascanu, R.
\newblock Multiplicative interactions and where to find them.
\newblock 2020.

\bibitem[Kim \& Oh(2023)Kim and Oh]{kim2023model}
Kim, B. and Oh, M.-h.
\newblock Model-based offline reinforcement learning with count-based conservatism.
\newblock 2023.

\bibitem[Kostrikov et~al.(2021)Kostrikov, Nair, and Levine]{kostrikov2021offline}
Kostrikov, I., Nair, A., and Levine, S.
\newblock Offline reinforcement learning with implicit q-learning.
\newblock \emph{arXiv preprint arXiv:2110.06169}, 2021.

\bibitem[Kumar et~al.(2020)Kumar, Zhou, Tucker, and Levine]{kumar2020conservative}
Kumar, A., Zhou, A., Tucker, G., and Levine, S.
\newblock Conservative q-learning for offline reinforcement learning.
\newblock \emph{Advances in Neural Information Processing Systems}, 33:\penalty0 1179--1191, 2020.

\bibitem[Kurenkov \& Kolesnikov(2022)Kurenkov and Kolesnikov]{kurenkov2022showing}
Kurenkov, V. and Kolesnikov, S.
\newblock Showing your offline reinforcement learning work: Online evaluation budget matters.
\newblock In \emph{International Conference on Machine Learning}, pp.\  11729--11752. PMLR, 2022.

\bibitem[Kuznetsov et~al.(2020)Kuznetsov, Shvechikov, Grishin, and Vetrov]{kuznetsov2020controlling}
Kuznetsov, A., Shvechikov, P., Grishin, A., and Vetrov, D.
\newblock Controlling overestimation bias with truncated mixture of continuous distributional quantile critics.
\newblock In \emph{International Conference on Machine Learning}, pp.\  5556--5566. PMLR, 2020.

\bibitem[Lobel et~al.(2022)Lobel, Gottesman, Allen, Bagaria, and Konidaris]{lobel2022optimistic}
Lobel, S., Gottesman, O., Allen, C., Bagaria, A., and Konidaris, G.
\newblock Optimistic initialization for exploration in continuous control.
\newblock In \emph{Proceedings of the AAAI Conference on Artificial Intelligence}, volume~36, pp.\  7612--7619, 2022.

\bibitem[Lobel et~al.(2023)Lobel, Bagaria, and Konidaris]{lobel2023flipping}
Lobel, S., Bagaria, A., and Konidaris, G.
\newblock Flipping coins to estimate pseudocounts for exploration in reinforcement learning.
\newblock \emph{arXiv preprint arXiv:2306.03186}, 2023.

\bibitem[Lyu et~al.(2022{\natexlab{a}})Lyu, Li, and Lu]{lyu2022doublecheck}
Lyu, J., Li, X., and Lu, Z.
\newblock Double check your state before trusting it: Confidence-aware bidirectional offline model-based imagination.
\newblock In \emph{Thirty-sixth Conference on Neural Information Processing Systems}, 2022{\natexlab{a}}.

\bibitem[Lyu et~al.(2022{\natexlab{b}})Lyu, Ma, Li, and Lu]{lyu2022mildly}
Lyu, J., Ma, X., Li, X., and Lu, Z.
\newblock Mildly conservative q-learning for offline reinforcement learning.
\newblock In \emph{Thirty-sixth Conference on Neural Information Processing Systems}, 2022{\natexlab{b}}.

\bibitem[Lyu et~al.(2023)Lyu, Gong, Wan, Lu, and Li]{lyu2023state}
Lyu, J., Gong, A., Wan, L., Lu, Z., and Li, X.
\newblock State advantage weighting for offline {RL}.
\newblock In \emph{International Conference on Learning Representation tiny paper}, 2023.
\newblock URL \url{https://openreview.net/forum?id=PjypHLTo29v}.

\bibitem[Machado et~al.(2020)Machado, Bellemare, and Bowling]{machado2020count}
Machado, M.~C., Bellemare, M.~G., and Bowling, M.
\newblock Count-based exploration with the successor representation.
\newblock In \emph{Proceedings of the AAAI Conference on Artificial Intelligence}, volume~34, pp.\  5125--5133, 2020.

\bibitem[Martin et~al.(2017)Martin, Sasikumar, Everitt, and Hutter]{martin2017count}
Martin, J., Sasikumar, S.~N., Everitt, T., and Hutter, M.
\newblock Count-based exploration in feature space for reinforcement learning.
\newblock \emph{arXiv preprint arXiv:1706.08090}, 2017.

\bibitem[Nikulin et~al.(2023)Nikulin, Kurenkov, Tarasov, and Kolesnikov]{nikulin2023anti}
Nikulin, A., Kurenkov, V., Tarasov, D., and Kolesnikov, S.
\newblock Anti-exploration by random network distillation.
\newblock \emph{arXiv preprint arXiv:2301.13616}, 2023.

\bibitem[Osband et~al.(2016)Osband, Blundell, Pritzel, and Van~Roy]{osband2016deep}
Osband, I., Blundell, C., Pritzel, A., and Van~Roy, B.
\newblock Deep exploration via bootstrapped dqn.
\newblock \emph{Advances in neural information processing systems}, 29, 2016.

\bibitem[Ostrovski et~al.(2017)Ostrovski, Bellemare, Oord, and Munos]{ostrovski2017count}
Ostrovski, G., Bellemare, M.~G., Oord, A., and Munos, R.
\newblock Count-based exploration with neural density models.
\newblock In \emph{International conference on machine learning}, pp.\  2721--2730. PMLR, 2017.

\bibitem[Paszke et~al.(2019)Paszke, Gross, Massa, Lerer, Bradbury, Chanan, Killeen, Lin, Gimelshein, Antiga, et~al.]{paszke2019pytorch}
Paszke, A., Gross, S., Massa, F., Lerer, A., Bradbury, J., Chanan, G., Killeen, T., Lin, Z., Gimelshein, N., Antiga, L., et~al.
\newblock Pytorch: An imperative style, high-performance deep learning library.
\newblock \emph{Advances in neural information processing systems}, 32, 2019.

\bibitem[Pathak et~al.(2017)Pathak, Agrawal, Efros, and Darrell]{pathak2017curiosity}
Pathak, D., Agrawal, P., Efros, A.~A., and Darrell, T.
\newblock Curiosity-driven exploration by self-supervised prediction.
\newblock In \emph{International conference on machine learning}, pp.\  2778--2787. PMLR, 2017.

\bibitem[Plappert et~al.(2018)Plappert, Andrychowicz, Ray, McGrew, Baker, Powell, Schneider, Tobin, Chociej, Welinder, et~al.]{plappert2018multi}
Plappert, M., Andrychowicz, M., Ray, A., McGrew, B., Baker, B., Powell, G., Schneider, J., Tobin, J., Chociej, M., Welinder, P., et~al.
\newblock Multi-goal reinforcement learning: Challenging robotics environments and request for research.
\newblock \emph{arXiv preprint arXiv:1802.09464}, 2018.

\bibitem[Rajeswaran et~al.(2017)Rajeswaran, Kumar, Gupta, Vezzani, Schulman, Todorov, and Levine]{rajeswaran2017learning}
Rajeswaran, A., Kumar, V., Gupta, A., Vezzani, G., Schulman, J., Todorov, E., and Levine, S.
\newblock Learning complex dexterous manipulation with deep reinforcement learning and demonstrations.
\newblock \emph{arXiv preprint arXiv:1709.10087}, 2017.

\bibitem[Rezaeifar et~al.(2022)Rezaeifar, Dadashi, Vieillard, Hussenot, Bachem, Pietquin, and Geist]{rezaeifar2022offline}
Rezaeifar, S., Dadashi, R., Vieillard, N., Hussenot, L., Bachem, O., Pietquin, O., and Geist, M.
\newblock Offline reinforcement learning as anti-exploration.
\newblock In \emph{Proceedings of the AAAI Conference on Artificial Intelligence}, volume~36, pp.\  8106--8114, 2022.

\bibitem[Schulman et~al.(2017)Schulman, Wolski, Dhariwal, Radford, and Klimov]{schulman2017proximal}
Schulman, J., Wolski, F., Dhariwal, P., Radford, A., and Klimov, O.
\newblock Proximal policy optimization algorithms.
\newblock \emph{arXiv preprint arXiv:1707.06347}, 2017.

\bibitem[Stadie et~al.(2015)Stadie, Levine, and Abbeel]{stadie2015incentivizing}
Stadie, B.~C., Levine, S., and Abbeel, P.
\newblock Incentivizing exploration in reinforcement learning with deep predictive models.
\newblock \emph{arXiv preprint arXiv:1507.00814}, 2015.

\bibitem[Still \& Precup(2012)Still and Precup]{still2012information}
Still, S. and Precup, D.
\newblock An information-theoretic approach to curiosity-driven reinforcement learning.
\newblock \emph{Theory in Biosciences}, 131:\penalty0 139--148, 2012.

\bibitem[Strehl \& Littman(2008)Strehl and Littman]{strehl2008analysis}
Strehl, A.~L. and Littman, M.~L.
\newblock An analysis of model-based interval estimation for markov decision processes.
\newblock \emph{Journal of Computer and System Sciences}, 74\penalty0 (8):\penalty0 1309--1331, 2008.

\bibitem[Sutton et~al.(1998)Sutton, Barto, et~al.]{sutton1998introduction}
Sutton, R.~S., Barto, A.~G., et~al.
\newblock \emph{Introduction to reinforcement learning}, volume 135.
\newblock MIT press Cambridge, 1998.

\bibitem[Tang et~al.(2017)Tang, Houthooft, Foote, Stooke, Xi~Chen, Duan, Schulman, DeTurck, and Abbeel]{tang2017exploration}
Tang, H., Houthooft, R., Foote, D., Stooke, A., Xi~Chen, O., Duan, Y., Schulman, J., DeTurck, F., and Abbeel, P.
\newblock \# exploration: A study of count-based exploration for deep reinforcement learning.
\newblock \emph{Advances in neural information processing systems}, 30, 2017.

\bibitem[Tarasov et~al.(2022)Tarasov, Nikulin, Akimov, Kurenkov, and Kolesnikov]{tarasov2022corl}
Tarasov, D., Nikulin, A., Akimov, D., Kurenkov, V., and Kolesnikov, S.
\newblock Corl: Research-oriented deep offline reinforcement learning library.
\newblock \emph{arXiv preprint arXiv:2210.07105}, 2022.

\bibitem[Tarasov et~al.(2023)Tarasov, Kurenkov, Nikulin, and Kolesnikov]{tarasov2023revisiting}
Tarasov, D., Kurenkov, V., Nikulin, A., and Kolesnikov, S.
\newblock Revisiting the minimalist approach to offline reinforcement learning.
\newblock \emph{arXiv preprint arXiv:2305.09836}, 2023.

\bibitem[Wang et~al.(2018)Wang, Xiong, Han, Liu, Zhang, et~al.]{wang2018exponentially}
Wang, Q., Xiong, J., Han, L., Liu, H., Zhang, T., et~al.
\newblock Exponentially weighted imitation learning for batched historical data.
\newblock \emph{Advances in Neural Information Processing Systems}, 31, 2018.

\bibitem[Wang et~al.(2020)Wang, Novikov, Zolna, Merel, Springenberg, Reed, Shahriari, Siegel, Gulcehre, Heess, et~al.]{wang2020critic}
Wang, Z., Novikov, A., Zolna, K., Merel, J.~S., Springenberg, J.~T., Reed, S.~E., Shahriari, B., Siegel, N., Gulcehre, C., Heess, N., et~al.
\newblock Critic regularized regression.
\newblock \emph{Advances in Neural Information Processing Systems}, 33:\penalty0 7768--7778, 2020.

\bibitem[Wu et~al.(2019)Wu, Tucker, and Nachum]{wu2019behavior}
Wu, Y., Tucker, G., and Nachum, O.
\newblock Behavior regularized offline reinforcement learning.
\newblock \emph{arXiv preprint arXiv:1911.11361}, 2019.

\bibitem[Xie et~al.(2021)Xie, Cheng, Jiang, Mineiro, and Agarwal]{xie2021bellman}
Xie, T., Cheng, C.-A., Jiang, N., Mineiro, P., and Agarwal, A.
\newblock Bellman-consistent pessimism for offline reinforcement learning.
\newblock \emph{Advances in neural information processing systems}, 34:\penalty0 6683--6694, 2021.

\bibitem[Yang et~al.(2022)Yang, Bai, Ma, Wang, Zhang, and Han]{yang2022rorl}
Yang, R., Bai, C., Ma, X., Wang, Z., Zhang, C., and Han, L.
\newblock Rorl: Robust offline reinforcement learning via conservative smoothing.
\newblock \emph{Advances in Neural Information Processing Systems}, 35:\penalty0 23851--23866, 2022.

\end{thebibliography}
\bibliographystyle{icml2024}
\newpage
\appendix
\onecolumn

\section{Proof}
In this section, we will provide all the proofs in the main text.
\subsection{proof of lemma \ref{lemma1}}
\label{proof1}
\begin{align*}
\mathbb{E}\left[\|f_{\tilde{\theta}}(x) - \frac{1}{N} \sum_{i = 1}^N f_{\bar{\theta}_i}(x)\|^2\right] &= \mathbb{E}\left[\|\tilde \theta^T x-\frac{\sum_{i=1}^N \bar \theta_i^T x}{N}\|^2\right] \\
&= Var\left(\tilde \theta^T x-\frac{\sum_{i=1}^N \bar \theta_i^T x}{N}\right) - \left(\mathbb{E}\left[\tilde \theta^T x-\frac{\sum_{i=1}^N \bar \theta_i^T x}{N}\right]\right)^2 \\
&=Var\left(\tilde \theta^T x-\frac{\sum_{i=1}^N \bar \theta_i^T x}{N}\right)- \left(\mathbb{E}\left[ (\tilde \theta-\frac{1}{N}\sum_{i=1}^N \bar \theta_i)^T x\right]\right)^2. \\
\end{align*}

Since $\tilde \theta$ and $\bar \theta_i$  $(i=1,2...,N)$ are i.i.d., $\mathbb{E}\left[ (\tilde \theta-\frac{1}{N}\sum_{i=1}^N \bar \theta_i)^T x\right] = \mathbb{E}[(\tilde \theta-\frac{1}{N}\sum_{i=1}^N \bar \theta_i)]x = 0$.
So we have:
\begin{align*}
\mathbb{E}[\|f_{\tilde{\theta}}(x) &- \frac{1}{N} \sum_{i = 1}^N f_{\bar{\theta}_i}(x)\|^2] \\
=& Var\left(\tilde \theta^T x-\frac{\sum_{i=1}^N \bar \theta_i^T x}{N}\right) - 0 \\
=& Var(\tilde \theta^T x) + \frac{1}{N^2}\sum_{i=1}^N Var(\bar \theta_i^T x) - \frac{2}{N} \text{Cov}(\tilde \theta^T x, \theta_i^T x) \\
=& Var(\tilde \theta^T x) + \frac{1}{N^2}\sum_{i=1}^N Var(\bar \theta_i^T x) \hspace{1em} (\text{Cov}(x,y) = 0 \text{ if $x$ and $y$ are i.i.d}) \\
=& x^T \Sigma x + \frac{1}{N} x^T \Sigma x \\
=& (1+\frac{1}{N}) x^T \Sigma x.
\end{align*}

\subsection{proof pf lemma \ref{lemma3}}
\label{proof}
For simplicity, we use some symbols to record the moments of the distribution of $c(x)$:
\begin{align*}
&\mu(x) = \mathbb{E}[X] = \frac{1}{N} \sum_{i=1}^{N}\bar{f}_i(x), \quad\quad\quad
B_2(x) = \mathbb{E}[X^2] = \frac{1}{N} \sum_{i=1}^{N}(\bar{f}_i(x))^2,\\
&B_3(x) = \mathbb{E}[X^3] = \frac{1}{N} \sum_{i=1}^{N}(\bar{f}_i(x))^3,\quad\quad
B_4(x) = \mathbb{E}[X^4] = \frac{1}{N} \sum_{i=1}^{N}(\bar{f}_i(x))^4.
\end{align*}
The calculation of $f^*(x)$ moment is as follows:
$$
\mathbb{E}[f_*(x)] = \mathbb{E}[\frac{1}{n} \sum_{i=1}^{n}c_i(x)] = \frac{1}{n}\mathbb{E}[\sum_{i=1}^{n}c_i(x)] = \mu(x).
$$
\begin{align*}
\mathbb{E}[f_*^2(x)] &= \mathbb{E}[(\frac{1}{n} \sum_{i=1}^{n}c_i(x))^2] \\
&= \frac{1}{n^2}\mathbb{E}[(\sum_{i=1}^{n}c^2_i(x)+\sum_{i=1}^{n}\sum_{j\ne i}^{n}c_i(x)c_j(x))] \\
&= \frac{1}{n^2}\mathbb{E}[nc^2(x) + n(n-1)\mu^2(x)]\\
&= \frac{B_2(x)}{n}+ \frac{n-1}{n}\mu^2(x).
\end{align*}
\begin{align*}
\mathbb{E} [f^4_*(x)] &= \frac{1}{n^{4}} \mathbb{E}\left[\sum_{i = 1}^{n} c_{i}(x)\right]^{4} \\
&= \frac{1}{n^{4}}\left(\mathbb{E}\left[\sum_{i = 1} c_{i}(x)^{4}\right]+\right. \quad 4 \mathbb{E}\left[\sum_{i \neq j} c_{i}^{3}(x) c_{j}(x)\right]+\quad 3 \mathbb{E}\left[\sum_{i \neq j} c^{2}_{i}(x) c^{2}_{j}(x)\right]\\
&\left.\quad+6E\left[\sum_{i \neq j \neq k} c_{i}(x) c_{j}(x) c_{k}^{2}(x)\right] +\quad \mathbb{E}\left[\sum_{i \neq j \neq k \neq l} c_{i}(x) c_{j}(x) c_{k}(x) c_{l}(x)\right]\right)  \\
&= \frac{n B_4(x)+4 A_{n}^{2} \mu(x) B_3(x)+3 A_{n}^{2}B_2^2(x)+6A_{n}^{3} \mu^{2}(x) B_2(x)+A_{n}^{4} \mu^{4}(x)}{n^{4}}.\\
&(A_n^i=\frac{n!}{(n-i)!} )
\end{align*}
The statistic $y(x)$ is:
$$
y(x) = \frac{f_*^2(x) - \mu^2(x)}{B_2(x) - \mu^2(x)},
$$
and its expectation is:
$$
\mathbb{E}[y(x)] = \frac{\mathbb{E}[f_*^2(x)] - \mu^2(x)}{B_2(x) - \mu^2(x)} = \frac{1}{n}.
$$
This indicates that the statistic $y(x)$ is an unbiased estimator for the reciprocal of the frequency of $x$.
The variance of $y(x)$ is:
\begin{align*}
Var[y(x)] & = \frac{Var[f_*^2(x)]}{(B_2(x) - \mu^2(x))^2}\\ & = \frac{\mathbb{E}[f^4_*(x)]-\mathbb{E}^2[f^2_*(x)]}{(B_2(x) - \mu^2(x))^2}\\ & = \frac{K_1B_4(x)+K_2\mu(x)B_3(x)+K_3B^2_2(x)+K_4\mu^2(x)B_2(x)+K_5\mu^4(x)}{n^3(B_2(x) - \mu^2(x))^2}
\end{align*}
where
\begin{align*}
&K_1 = 1, \quad K_2 = 4n-4, \quad K_3 = 2n-3,\\
&K_4 = 4n^2-16n+12, \quad K_5 = -5n^2+10n-6.
\end{align*}
so we have:
$$
\lim_{n \to \infty} Var[y(x)] = 0.
$$
When $n$ tends to infinity, the variance of the statistic tends to zero, which reflects the stability or consistency of $y(x)$.
\newpage
\section{DRND Pseudo-code}
\renewcommand{\thealgorithm}{1}
    \begin{algorithm}
        \caption{PPO-DRND online pseudo-code}
        \label{online}
        \begin{algorithmic}[1]
            \REQUIRE Number of training steps $M$, number of update steps $K$, number of target networks $N$, scale of intrinsic reward $\lambda$
            \STATE Initialize policy parameters $\phi$
            \STATE Initialize Q-function parameters $\varphi$ and target Q-function parameters $\varphi'$
            \STATE Initialize predictor network parameters $\theta$ and target networks parameters $\theta_1,\theta_2,...,\theta_N$
            \FOR{$i=1:N$}
                \STATE Initialize replay buffer $D$
                \STATE $d \gets 0$, $t \gets 0$
                \STATE $s_0 = $ env.reset()
                \WHILE{not $d$}
                    \STATE $a_t \sim \pi(a_t|s_t)$
                    \STATE Rollout $a_t$ and get $(s_{t+1},r_t,d)$
                    \STATE Compute the mean $\mu(s_t,a_t)$ and second moment $B_2(s_t,a_t)$
                    \STATE Compute intrinsic reward $b(s_{t+1},a_t)$ using \cref{reward}
                    \STATE Add transition $(s_t,a_t,r_t,b(s_{t+1},a_t),s_{t+1})$ to $D$
                    \STATE $t \gets t+1$
                \ENDWHILE
                \STATE Normalize the intrinsic rewards contained in $D$
                \STATE Calculate returns $R_I$ and advantages $A_I$ for intrinsic reward
                \STATE Calculate returns $R_E$ and advantages $R_E$ for extrinsic reward
                \STATE Calculate combined advantages $A$ = $R_I$ + $R_E$
                \STATE $\phi_{old} \gets \phi$
                \FOR{$j=1:K$}
                   
                    \STATE Update $\phi$ with gradient ascent using 
                    \[
\scalebox{0.9}{
    $\nabla_\phi \frac{1}{|D|} \sum_{D}^{} \min \left(\frac{\pi_{\phi}(a \mid s)}{\pi_{\phi_{\text{old}}}(a \mid s)} A, \operatorname{clip}\left(\frac{\pi_{\phi}(a \mid s)}{\pi_{\phi_{\text{old}}}(a \mid s)}, 1-\epsilon, 1+\epsilon\right) A\right)$
}
\]
                    \STATE Update $\varphi$ with gradient descent using 
\[
\scalebox{0.9}{
    $\nabla_\varphi \frac{1}{|D|} \sum_{D}^{} [Q_{\varphi} - {r}_{t}+\lambda b_\theta\left(s_t,a_t\right)+\gamma \max _{a^\prime} Q_{\varphi^{\prime}}\left(s_{t+1}, a^\prime\right)  ]$
}
\]
                    \STATE Update $\theta$ using Equation \cref{loss}
                \ENDFOR
                \STATE Update target networks with $\varphi' = (1-\rho)\varphi' + \rho\varphi$
            \ENDFOR
        \end{algorithmic}
    \end{algorithm}

\renewcommand{\thealgorithm}{2}
    \begin{algorithm}
        \caption{SAC-DRND offline pseudo-code}
        \label{online}
        \begin{algorithmic}[1]
            \REQUIRE Number of training steps $M$, number of DRND update steps $K$, number of target networks $N$, scale of intrinsic reward $\lambda_\text{actor},\lambda_\text{critic}$, dataset buffer $D$
            \STATE Initialize policy parameters $\phi$
            \STATE Initialize two Q-function parameters $\varphi_1,\varphi_2$ and target Q-function parameters $\varphi_1^\prime,\varphi_2^\prime$
            \STATE Initialize predictor network parameters $\theta$ and target networks parameters $\theta_1,\theta_2,...,\theta_N$
            \FOR{$i=1:K$}
                    \STATE Sample minibatch $(s,a,r,b,s') \sim D$
                    \STATE Compute the mean $\mu(s_t,a_t)$ and second moment $B_2(s_t,a_t)$
                    \STATE Update $\theta$ using Equation \cref{loss}
            \ENDFOR
            \FOR{$j=1:N$}
                    \STATE Sample minibatch $(s,a,r,b,s') \sim D$
                    \STATE Update $\phi$ with gradient ascent using 
\[
\scalebox{0.9}{
    $\nabla_\phi \frac{1}{|B|} \sum_{B}^{} \left[\min_{i=1,2} Q_{\varphi_i}\left(s, \tilde{a}_{\phi}(s)\right)-\beta \log \pi\left(\tilde{a}_{\phi}(s) \mid s\right)-\lambda_\text{actor} b_\theta\left(s, \tilde{a}_{\phi}(s)\right)\right]$
}
\]\indent where $\tilde{a}_{\phi}(s)$ is a sample from $\pi_\phi(.|s)$ by using reparametrization trick
                    \STATE Update each Q-function $Q_{\varphi_i}$ with gradient descent using 
\[
\scalebox{0.9}{
    $\nabla_{\varphi_i} \frac{1}{|B|} \sum_{B}^{} [Q_{\varphi_i} - {r}_{t}+\gamma \mathbb{E}_{a^\prime \sim \pi\left(\cdot \mid s_{t+1}\right)}\left[Q_{\varphi_i^{\prime}}\left(s_{t+1}, a^\prime\right)-\beta\log\pi_\phi(a^\prime|s^\prime)-\lambda_\text{critic} b_\theta\left(s_{t+1}, a^\prime\right)\right]  ]$
}
\]\indent  where $a^\prime \sim \pi_\phi(.|s^\prime)$
            \STATE Update target networks with $\varphi_i^\prime = (1-\rho)\varphi_i^\prime + \rho\varphi_i$
            \ENDFOR
        \end{algorithmic}
    \end{algorithm}

\section{Implementation Details and Experimental Settings}\label{app_exp_detail}
Our experiments were performed by using the following hardware and software:
\begin{itemize}
    \item GPUs: NVIDIA GeForce RTX 3090
    \item Python 3.10.8
    \item Numpy 1.23.4
    \item Gymnasium 0.28.1
    \item Gymnasium-robotics 1.2.2
    \item Pytorch 1.13.0
    \item MuJoCo-py 2.1.2.14
    \item MuJoCo 2.3.1
    \item D4RL 1.1
    \item Jax 0.4.13
\end{itemize}

The architecture of predictor networks in Figure 1 and Figure 3 is 3 linear layers, the input dim is 2, hidden dim and output dim is 16, activate function is ReLU. Target networks' architecture is 2 linear layers, the input dim is 2, hidden dim and output dim is 16, activate function is ReLU. 
We set the state as 2-dimensional, with the state space defined as the 2-dimensional space [0,1]x[0,1]. The given distribution is regarded as the dataset distribution of the current agent passing through states, and the dimensions can be arbitrary. Setting it as 2-dimensional is merely for visualization convenience.

In our experiments, for a fair comparison, all methods employed the same predictor and target networks. The fundamental parameters of the base algorithm such as learning rate and batch size were kept identical across all methods. For the hyperparameters of the utilized exploration algorithms, we utilized the author-recommended hyperparameters from respective papers (e.g., CFN). Since the CFN method was originally proposed for off-policy strategies, it utilized a trick of importance sampling for sampling from the replay buffer. However, in our PPO-based approach, there is no replay buffer, and we consider the use of the importance sampling trick unfair compared to other methods. Therefore, we only employed the core formula from the paper $b(s)=\sqrt{\frac{1}{d}\|f_\theta(s)\|}$ as the intrinsic reward, where $d$ represents the output dimension of the predictor network.

In our D4RL experiments, all experiments use the dataset of the `v2' version. We use specific hyperparameters for each task due to varying anti-exploration penalties. Because most of the offline experiment time is spent on gradient calculation of data, we use the faster Jax framework \cite{bradbury2018jax} than the Pytorch framework \cite{paszke2019pytorch} for the experiments. In the online experiments, we still use the easier-to-read and more portable Pytorch framework instead of the faster computing Jax framework because most of the online experiment time is spent interacting with the environment rather than gradient computing.

We employ the `NoFrameskip-v4' version in our Atari game experiments to execute the environments. These experiments encompass 128 parallel environments and adhere to the default configurations and network architecture as delineated in \cite{burda2018exploration}. For Adroit and Fetch manipulation tasks, we employ the `v0' version for Adroit tasks and the `v2' version for Fetch tasks. In the `Relocate' task, we truncate the episode when the ball leaves the table. These tasks pose a significant challenge for conventional methods to learn from, primarily due to the dataset consisting of limited human demonstrations in a sparse-reward, complex, high-dimensional robotic manipulation task \cite{lyu2022doublecheck}. We do not include random state restarts, as they may undermine the necessity for exploration by the observations made by \cite{lobel2022optimistic}. To set the goal locations for the non-default versions of the tasks, we follow the setting of \cite{lobel2023flipping}.

In the context of the D4RL framework, we make specific architectural choices. Instead of simply concatenating the state and action dimensions, we employ a bilinear structure in the first layer, as proposed by \cite{jayakumar2020multiplicative}. Additionally, we apply FiLM (Feature-wise Linear Modulation) architecture on the penultimate layer before the nonlinearity. This modification is effective for offline tasks, as indicated by \cite{nikulin2023anti}.

\section{Hyperparameters}
The hyperparameters are shown in Table \ref{paramonline} in online experiments. We employ distinct parameters and networks for Atari games and continuous control environments because Atari game observations are images, while observations for Adroit and Fetch tasks consist of states. The hyperparameters we use in the D4RL offline experiment are shown in Table \ref{param}. In D4RL offline datasets, we apply varying scales in each experiment due to the differing dataset qualities, as illustrated in Table \ref{hyper}. 
\begin{table}[h]
    \centering
\caption{Hyperparameters of D4RL offline experiments}
\renewcommand{\arraystretch}{1.1}
\label{param}
\resizebox{0.8\linewidth}{!}{
    \begin{tabular}{l |l |l}
        \toprule
         \textbf{Name} & \textbf{Description}&\textbf{Value} \\
        \midrule
          $lr_\text{actor}$ & learning rate of the actor network & 1e-3 (1e-4 on Antmaze)\\ 
          $lr_\text{critic}$& learning rate of the critic network& 1e-3 (1e-4 on Antmaze)\\
          $lr_\text{drnd}$ & learning rate of the DRND network & 1e-6 (1e-5 on Antmaze) \\
         optimizer&   type of optimizer & Adam\\
         target entropy & target entropy of the actor & -action\_dim\\
         $\tau$ & soft update rate & 0.005\\ 
         $\gamma$ &   discount return & 0.99 (0.999 on Antmaze)\\
          $bs$ & batch size of the dataset & 1024\\
        $h$ & number of hidden layer dimensions& 256 \\
        $e$ &  number of DRND output dimensions & 32\\
        $n$ & number of hidden layers& 4\\
        $f$ & activation function & ReLU\\
        $K$ & number of DRND training epochs & 100\\
         $M$ & maximum iteration number of SAC & 3000\\
         $I$ & gradient updates per iteration & 1000 \\
         $N$ & number of DRND target networks & 10 \\
          $\alpha$ & the scale of two intrinsic reward items & 0.9 \\
          
        \bottomrule 
        \end{tabular}
        }
\end{table}
\vspace{0.25cm}

\begin{table}[h]
    \centering
\caption{Anti-exploration scale of D4RL offline datasets}
\label{hyper}
\renewcommand{\arraystretch}{1.02}
\resizebox{0.52\linewidth}{!}{
        \begin{tabular}{l| l l}
        \toprule
      \textbf{Dataset Name}  & $\bm{\lambda_\text{actor}}$ & $\bm{\lambda_\text{critic}}$\\
        \hline
          hopper-random  & 1.0 & 1.0\\ 
           hopper-medium & 15.0& 15.0\\
           hopper-expert & 10.0& 10.0\\
           hopper-medium-expert& 10.0& 10.0 \\
           hopper-medium-replay& 5.0& 10.0 \\
           hopper-full-replay& 2.0& 2.0\\
            \hline
            halfcheetah-random & 0.05& 0.05\\
            halfcheetah-medium & 1.0 & 0.1 \\
            halfcheetah-expert & 5.0& 5.0 \\
            halfcheetah-medium-expert & 0.1 & 0.1 \\
            halfcheetah-medium-replay & 0.1 & 0.1\\
            halfcheetah-full-replay & 1.0 & 1.0 \\
            \hline
            walker2d-random & 1.0& 1.0\\
            walker2d-medium & 10.0 & 10.0 \\
            walker2d-expert & 5.0& 5.0 \\
            walker2d-medium-expert & 15.0 & 20.0 \\
            walker2d-medium-replay &5.0 & 10.0\\
            walker2d-full-replay & 3.0& 3.0\\
            \hline
            antmaze-umaze & 5.0 & 0.001 \\
            antmaze-umaze-diverse & 3.0 & 0.001\\
            antmaze-medium-play & 3.0 & 0.005\\ 
            antmaze-medium-diverse & 2.0 &0.001 \\ 
            antmaze-large-play & 1.0 & 0.01 \\
            antmaze-large-diverse & 0.5 & 0.01\\    
         \bottomrule 
        \end{tabular}
        }
\end{table}
\vspace{0.25cm}

\begin{table}[h]
    \centering
\caption{Hyperparameters of online experiments}
\label{paramonline}
\resizebox{0.7\linewidth}{!}{
    \begin{tabular}{l |l |l}
        \toprule
         \textbf{Name} & \textbf{Description}&\textbf{Value} \\
        \midrule
          $lr_\text{actor}$ & learning rate of the actor network & 3e-4 (1e-4 on Atari)\\ 
          $lr_\text{critic}$& learning rate of the critic network& 3e-4 (1e-4 on Atari)\\
          $lr_\text{drnd}$ & learning rate of the DRND network & 3e-4 (1e-4 on Atari)\\
         optimizer&   type of optimizer & Adam\\
         $\tau$ & soft update rate & 0.005\\ 
         $\gamma$ &   discount return & 0.99\\
         $\lambda_\text{GAE}$ & coefficient of GAE & 0.95\\
          $\epsilon$ & PPO clip coefficient & 0.1\\
          $M$ & number of environments & 128 \\
        $h$ & number of hidden layer dimensions& 64 (512 on Atari)\\
        $e$ &  number of output dimensions & 64 (512 on Atari)\\
        $f$ & activation function & ReLU\\
        $K$ & number of training epochs & 4\\
         $N$ & number of DRND target networks & 10 \\
         $\lambda$ & coefficient of intrinsic reward & 1\\
          $\alpha$ & the scale of two intrinsic reward items & 0.9 \\
          
        \bottomrule 
        \end{tabular}
        }
\end{table}

\section{Additional Experimental Results}

\subsection{Comparing to Ensemble-based Methods}
\label{ensemble}
As described in \cite{osband2016deep}, the ensemble method estimates the Q-posterior, leading to varied predictions and imposing significant penalties in regions with limited data. We add the results of ensemble-based methods like SAC-N \cite{an2021uncertainty}, EDAC \cite{an2021uncertainty}, and RORL \cite{yang2022rorl}. Table \ref{TABLEa} displays our results in these experiments. An \underline{underlined} number represents the peak value for ensemble-free methods, while a \textbf{bold} number denotes each task's top score. SAC-DRND outperforms most ensemble-based methods, such as SAC-N and RORL, in total scores on most MuJoCo tasks. For Antmaze tasks, our method leads among ensemble-free approaches and holds its own against ensemble-based methods.

\begin{table}[h]
\tiny
    \centering
    \setlength{\tabcolsep}{0.4mm}{
    \resizebox{\linewidth}{37mm}{
        \begin{tabular}{l| c c c c c c| c c c| c}
        \multicolumn{1}{c}{}
        & \multicolumn{6}{c}{\fontsize{8}{10}\selectfont Ensemble-free}
        & \multicolumn{3}{c}{\fontsize{8}{10}\selectfont Ensemble-based}
        \vspace{1mm}
        &{\fontsize{8}{10}\selectfont Ours} \\
        \toprule
         \textbf{Dataset} & \textbf{SAC} & \textbf{TD3+BC} & \textbf{CQL}& \textbf{IQL} & \textbf{SAC-RND} & \textbf{ReBRAC} & \textbf{SAC-N} & \textbf{EDAC}& \textbf{RORL}& \textbf{SAC-DRND}\\
        \midrule
          hopper-random  &9.9 ± 1.5 &  8.5 ± 0.6  & 5.3 ± 0.6& 10.1 ± 5.9&  19.6 ± 12.4  &8.1 ± 2.4 & 28.0 ± 0.9& 25.3 ± 10.4&31.4 ± 0.1  
& \textbf{32.7} ± 0.4 \\
          hopper-medium &0.8 ± 0.0 & 59.3 ± 4.2 &61.9 ± 6.4 &65.2 ± 4.2& 91.1 ± 10.1& \underline{102.0} ± 1.0
&100.3 ± 0.3 & 101.6 ± 0.6& \textbf{104.8} ± 0.1 & 98.5 ± 1.1
 \\
          hopper-expert &0.7 ± 0.0 &107.8 ± 7.0 & 106.5 ± 9.1& 108.8 ± 3.1&\underline{109.7} ± 0.5
 & 100.1 ± 8.3&110.3 ± 0.3 & 110.1 ± 0.1& \textbf{112.8} ± 0.2 &\underline{109.7} ± 0.3\\
         hopper-medium-expert &0.7 ± 0.0 &  98.0 ± 9.4 & 96.9 ± 15.1 &85.5 ± 29.7& \underline{109.8} ± 0.6&107.0 ± 6.4& 110.1 ± 0.3& 110.7 ± 0.1& \textbf{112.7} ± 0.2& 108.7 ± 0.5\\
         hopper-medium-replay &7.4 ± 0.5 &  60.9 ± 18.8 & 86.3 ± 7.3 & 89.6 ± 13.2& 97.2 ± 9.0
&98.1 ± 5.3& 101.8 ± 0.5& 101.0 ± 0.5&\textbf{102.8} ± 0.5
&\underline{100.5} ± 1.0\\
         hopper-full-replay &41.1 ± 17.9 & 97.9 ± 17.5  & 101.9 ± 0.6 & 104.4 ± 10.8 &107.4 ± 0.8
 &107.1 ± 0.4& 102.9 ± 0.3&105.4 ± 0.7 &-
&\textbf{108.2} ± 0.7\\
         \hline
         halfcheetah-random &29.7 ± 1.4 & 11.0 ± 1.1    & \textbf{31.1} ± 3.5  &19.5 ± 0.8&27.6 ± 2.1
&29.5 ± 1.5&28.0 ± 0.9 & 28.4 ± 1.0& 28.5 ± 0.8& 30.4 ± 4.0\\
         halfcheetah-medium &55.2 ± 27.8 &  48.3 ± 0.3   & 46.9 ± 0.4 &50.0 ± 0.2&66.4 ± 1.4 &  65.6 ± 1.0&67.5 ± 1.2& 65.9 ± 0.6& 66.8 ± 0.7& \textbf{68.3} ± 0.2\\ 
         halfcheetah-expert &-0.8 ± 1.8  &  96.7 ± 1.1     & 97.3 ± 1.1 &95.5 ± 2.1& 102.6 ± 4.2
  &  105.9 ± 1.7&105.2 ± 2.6& \textbf{106.8} ± 3.4& 105.2 ± 0.7& \underline{106.2} ± 3.7 \\
        halfcheetah-medium-expert &28.4 ± 19.4  &  90.7 ± 4.3      & 95.0 ± 1.4 &92.7 ± 2.8& 107.6 ± 2.8
  &  101.1 ± 5.2&107.1 ± 2.0&106.3 ± 1.9 & 107.8 ± 1.1 & \textbf{108.5} ± 1.1 \\
        halfcheetah-medium-replay &0.8 ± 1.0  &  44.6 ± 0.5      & 45.3 ± 0.3 &42.1 ± 3.6&  51.2 ± 3.2 &51.0 ± 0.8 &\textbf{63.9} ± 0.8& 61.3 ± 1.9&61.9 ± 1.5 & \underline{52.1} ± 4.8\\ 
         halfcheetah-full-replay &\textbf{86.8} ± 1.0  &   75.0 ± 2.5    & 76.9 ± 0.9  & 75.0 ± 0.7 &  81.2 ± 1.3
 &  82.1 ± 1.1 &84.5 ± 1.2&84.6 ± 0.9 & - & 81.4 ± 1.7\\
         \hline
         walker2d-random & 0.9 ± 0.8 & 1.6 ± 1.7 & 5.1 ± 1.7 & 11.3 ± 7.0& 18.7 ± 6.9&18.1 ± 4.5
&\textbf{21.7} ± 0.0& 16.6 ± 7.0&21.4 ± 0.2  & \textbf{21.7} ±  0.1\\
         walker2d-medium & -0.3 ± 0.2 &  83.7 ± 2.1     &   79.5 ± 3.2&80.7 ± 3.4&91.6 ± 2.8& 82.5 ± 3.6&87.9 ± 0.2& 92.5 ± 0.8& \textbf{102.4} ± 1.4& \underline{95.2} ± 0.7\\
        walker2d-expert & 0.7 ± 0.3 & 110.2 ± 0.3 & 109.3 ± 0.1 & 96.9 ± 32.3& 104.5 ± 22.8  &112.3 ± 0.2
&107.4 ± 2.4& 115.1 ± 1.9&\textbf{115.4} ± 0.5 &\underline{114.0}± 0.5\\ 
         walker2d-medium-expert& 1.9 ± 3.9 &  110.1 ± 0.5  & 109.1 ± 0.2&112.1 ± 0.5& 104.6 ± 11.2 &\underline{111.6} ± 0.3
&116.7 ± 0.4& 114.7 ± 0.9& \textbf{121.2} ± 1.5 & 109.6 ± 1.0\\
         walker2d-medium-replay& -0.4 ± 0.3 & 81.8 ± 5.5   & 76.8 ± 10.0 &75.4 ± 9.3& 88.7 ± 7.7
 &77.3 ± 7.9
&78.7 ± 0.7&87.1 ± 2.4 & 90.4 ± 0.5& \textbf{91.0} ± 2.9\\
         walker2d-full-replay & 27.9 ± 47.3 & 90.3 ± 5.4  & 94.2 ± 1.9 &97.5 ± 1.4& 105.3 ± 3.2 & 102.2 ± 1.7
&94.6 ± 0.5 & 99.8 ± 0.7  & -& \textbf{109.6} ± 0.7\\
\hline
average score&16.2 & 67.5 & 73.6 & 72.9 & 82.6 & 81.2 & 84.4 & 85.2 & 85.7 & \textbf{86.0}\\
         \bottomrule
     	
        \end{tabular}
    }

    \vspace{5mm}  
    
     }
     \setlength{\tabcolsep}{1.3mm}{
    \resizebox{\linewidth}{17mm}{
    \begin{tabular}{l| c c c c c c| c c |c}
\toprule
    \textbf{Dataset} & \textbf{SAC} & \textbf{TD3+BC} & \textbf{CQL}& \textbf{IQL} & \textbf{SAC-RND} & \textbf{ReBRAC}& \textbf{RORL} & \textbf{MSG}& \textbf{SAC-DRND}\\
    \midrule
         antmaze-umaze & 0.0 &78.6&74.0 &   83.3  ± 4.5    & 97.0 ± 1.5 
  &   \underline{97.8} ± 1.0& 97.7 ± 1.9 & \textbf{97.9} ± 1.3 & 95.8 ± 2.4\\
         antmaze-umaze-diverse & 0.0 &71.4&84.0 &   70.6 ± 3.7   &  66.0 ± 25.0 & \underline{88.3} ± 13.0&\textbf{90.7} ± 2.9& 79.3 ± 3.0&87.2 ± 3.2\\ 
         \hline
         antmaze-medium-play& 0.0 & 10.6&61.2 &64.6 ± 4.9& 38.5 ± 29.4  & 84.0 ± 4.2
&76.3 ± 2.5& 85.9 ± 3.9 & \textbf{86.2} ± 5.4\\ 
         antmaze-medium-diverse & 0.0 &3.0&53.7& 61.7 ± 6.1   & 74.7 ± 10.7
 &76.3 ± 13.5
&69.3 ± 3.3& \textbf{84.6} ± 5.2 &\underline{83.0} ±3.8\\
         \hline
         antmaze-large-play& 0.0 & 0.2&15.8& 42.5 ± 6.5 & 43.9 ± 29.2
  &   \underline{60.4} ± 26.1
&16.3 ± 11.1 &\textbf{64.3} ± 12.7&53.2 ± 4.1\\
         antmaze-large-diverse & 0.0 & 0.0&14.9& 27.6 ± 7.8 &  45.7 ± 28.5 &\underline{54.4} ± 25.1
&41.0 ± 10.7 & \textbf{71.3} ± 5.3&50.8 ± 10.5\\
\hline
average score& 0.0 & 27.3 & 50.6 & 58.3 & 60.9 & \underline{76.8} & 65.2 & \textbf{80.5} & 76.0 \\
\bottomrule
    \end{tabular}
    }
    }
   \caption{Average normalized scores of algorithms. The figure shows the scores of MuJoCo's final model evaluated 10 times (AntMaze 100 times) on training seeds and 10 random seeds. SAC, SAC-N, and EDAC scores are taken from \cite{an2021uncertainty}. CQL, IQL, SAC-RND, and ReBRAC scores are taken from \cite{tarasov2023revisiting}. RORL scores are taken from \cite{yang2022rorl}. MSG scores are taken from \cite{ghasemipour2022so}.}
        \label{TABLEa}
\end{table}
\vspace{0.25cm}

\subsection{Results on Adroit Tasks}\label{adroit_exp}
In this subsection, we show the scores of SAC-DRND on Adroit tasks in Table \ref{adroit_table}.

\begin{table}[h]
\tiny
    \centering
     \setlength{\tabcolsep}{1mm}{
    \resizebox{\linewidth}{33mm}{
    \begin{tabular}{l| c c c c c c|c}
\toprule
    \textbf{Task Name} & \textbf{BC} & \textbf{TD3+BC} & \textbf{IQL}& \textbf{CQL} & \textbf{SAC-RND} & \textbf{ReBRAC}& \textbf{SAC-DRND}\\
    \midrule
         pen-human & 34.4 &81.8 ± 14.9 & 81.5 ± 17.5 & 37.5 & 5.6 ± 5.8
  & \textbf{103.5} ± 14.1 & 42.3 ± 11.8\\
         pen-cloned & 56.9 &61.4 ± 19.3&77.2 ± 17.7 & 39.2 & 2.5 ± 6.1 &\textbf{91.8} ± 21.7& 39.5 ± 33.4\\ 
         pen-expert & 85.1 & 146.0 ± 7.3 & 133.6 ± 16.0 & 107.0 & 45.4 ± 22.9 & \textbf{154.1} ± 5.4 & 65.0 ± 17.1\\
         \hline
         door-human & 0.5 & -0.1 ± 0.0 &3.1 ± 2.0 &\textbf{9.9}& 0.0 ± 0.1 & 0.0 ± 0.0 & 1.3 ± 0.8\\
         door-cloned & -0.1 &0.1 ± 0.6&0.8 ± 1.0&0.4&0.2 ± 0.8
 &\textbf{1.1} ± 2.6 & 0.3 ± 0.1\\
         door-expert & 34.9 & 84.6 ± 44.5 & \textbf{105.3} ± 2.8 & 101.5 & 73.6 ± 26.7 & 104.6 ± 2.4 & 85.3 ± 37.9\\
         \hline
         hammer-human& 1.5 & 0.4 ± 0.4 &2.5 ± 1.9& \textbf{4.4}
  & -0.1 ± 0.1 &0.2 ± 0.2& 0.3 ± 0.2 \\
         hammer-cloned & 0.8 & 0.8 ± 0.7&1.1 ± 0.5&2.1&  0.1 ± 0.4&\textbf{6.7} ± 3.7 & 1.1 ± 0.8\\
         hammer-expert & 125.6 & 117.0 ± 30.9 & 129.6 ± 0.5 & 86.7 & 24.8 ± 39.4 & \textbf{133.8} ± 0.7 & 37.1 ± 47.2\\
         \hline
         relocate-human & 0.0 & -0.2 ± 0.0 & 0.1 ± 0.1 & \textbf{0.2} & 0.0 ± 0.0 & 0.0 ± 0.0 & 0.0 ± 0.1\\
         relocate-cloned & -0.1 & -0.1 ± 0.1 & 0.2 ± 0.4 & -0.1 & 0.0 ± 0.0 & \textbf{0.9} ± 1.6 & 0.0 ± 0.0\\
         relocate-expert & 101.3 & \textbf{107.3} ± 1.6 & 106.5 ± 2.5 & 95.0 & 3.4 ± 4.5 & 106.6 ± 3.2 & 10.1 ± 7.1\\
         \hline
         Average w/o expert & 11.7 & 18.0 & 20.8 & 11.7 & 1.0 & \textbf{25.5} & 10.6 \\
         \hline
         Average & 36.7 & 49.9 & 53.4 & 40.3 & 12.9 & \textbf{58.6} & 23.5 \\
\bottomrule
    \end{tabular}
    }
    }
   \caption{Average normalized scores on Adroit tasks. There is still a significant improvement compared to SAC-RND, from 12.9 to 23.5. This illustrates the superiority of DRND compared to RND. In addition, the average score without using the expert dataset has also improved significantly, reaching a level comparable to CQL(11.7), which benefits from the performance in the Pen environment.}
   \label{adroit_table}
\end{table}
\vspace{0.25cm}

\subsection{Expected Online Performance}\label{eop}
We calculated the EOP on Gym-MuJoCo and AntMaze tasks, as shown in Table \ref{table_eop}.

\begin{table}[h]
\tiny
    \centering
     \setlength{\tabcolsep}{1.1mm}{
    \resizebox{\linewidth}{19mm}{
    \begin{tabular}{l l| c c c c c c c}
\toprule
    \textbf{Domain} & \textbf{Algorithm} & \textbf{1 policy} & \textbf{2 policies}& \textbf{3 policies} & \textbf{5 policies} & \textbf{10 policies}& \textbf{15 policies} &\textbf{20 policies}\\
    \midrule
    \multirow{4}*{Gym-MuJoCo} & \textbf{TD3+BC} & 49.8 ± 21.4 & 61.0 ± 14.5 & 65.3 ± 9.3 & 67.8 ± 3.9 & - & - & - \\
    ~ & \textbf{IQL} & 65.0 ± 9.1 & 69.9 ± 5.6 & 71.7 ± 3.5 & 72.9 ± 1.7 & 73.6 ± 0.8 & 73.8 ± 0.7 & 74.0 ± 0.6 \\
    ~ & \textbf{ReBRAC} & 62.0 ± 17.1 &  70.6 ± 9.9 & 73.3 ± 5.5 & 74.8 ± 2.1 & 75.6 ± 0.8 & 75.8 ± 0.6 & 76.0 ± 0.5\\
    ~ & \textbf{SAC-DRND} & \textbf{69.9} ± 30.1 & \textbf{73.2} ± 19.0 & \textbf{79.4} ± 11.9 & \textbf{82.5} ± 7.8 & \textbf{84.0} ± 6.0 & \textbf{84.9} ± 3.1 & \textbf{85.3} ± 2.0 \\
    \hline
    \multirow{4}*{AntMaze} & \textbf{TD3+BC} & 6.9 ± 7.0 & 10.7 ± 6.8 & 13.0 ± 6.0 & 15.5 ± 4.6 & - & - & -\\
    ~ & \textbf{IQL} & 29.8 ± 15.5 & 38.0 ± 15.4 & 43.1 ± 13.8 & 48.7 ± 10.2 & 53.2 ± 4.4 & 54.3 ± 2.1 & 54.7 ± 1.2 \\
    ~ & \textbf{ReBRAC} & 67.9 ± 10.0 & 73.6 ± 7.4 & 76.1 ± 5.5 & 78.3 ± 3.4 & 79.9 ± 1.7 & 80.4 ± 1.1 & - \\
    ~ & \textbf{SAC-DRND} & \textbf{69.3} ± 15.9 & \textbf{75.3} ± 10.1 & \textbf{78.5} ± 7.6 & \textbf{81.5} ± 4.0 & \textbf{83.7} ± 3.1 & \textbf{84.5} ± 1.5 & \textbf{84.9} ± 0.9 \\
\bottomrule
    \end{tabular}
    }
    }
   \caption{Expected Online Performance on Gym-MuJoCo and AntMaze tasks. We calculate the mean value of different domains like the way in ReBRAC. The results show SAC-DRND has the best performance.}
   \label{table_eop}
\end{table}
\vspace{0.25cm}

We also show the EOP line for each task, as shown in Figure \ref{figure_eop}.

\begin{figure}[h]
    \centering
    \begin{minipage}[t]{\linewidth}
    
    \subfigure{
    \begin{minipage}[t]{0.15\linewidth}
        \centering
        \includegraphics[width=\linewidth]{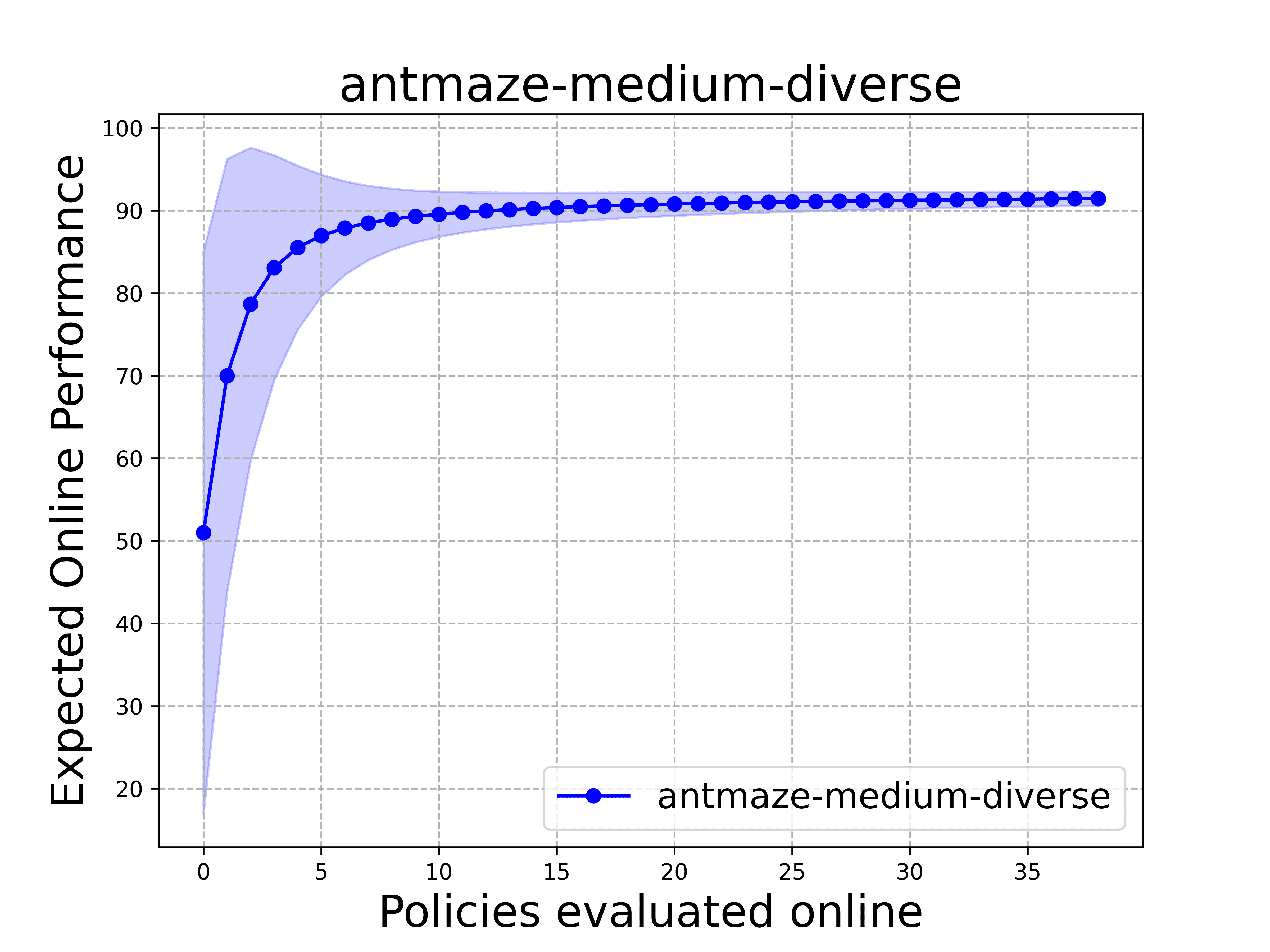}
    \end{minipage}%
    }
    \subfigure{
    \begin{minipage}[t]{0.15\linewidth}
        \centering
        \includegraphics[width=\linewidth]{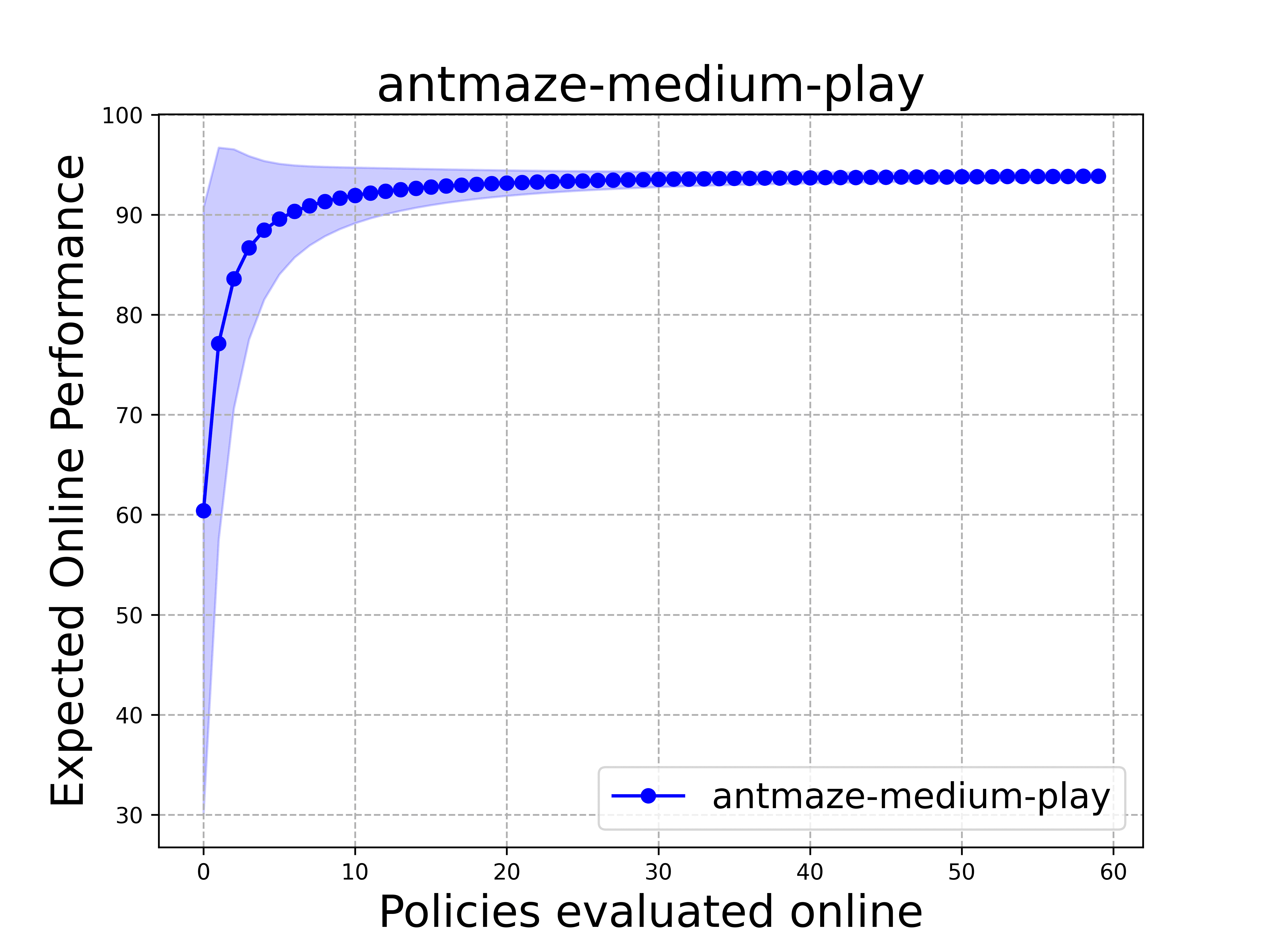}
    \end{minipage}
    }
    \subfigure{
    \begin{minipage}[b]{0.15\linewidth}
        \centering
        \includegraphics[width=\linewidth]{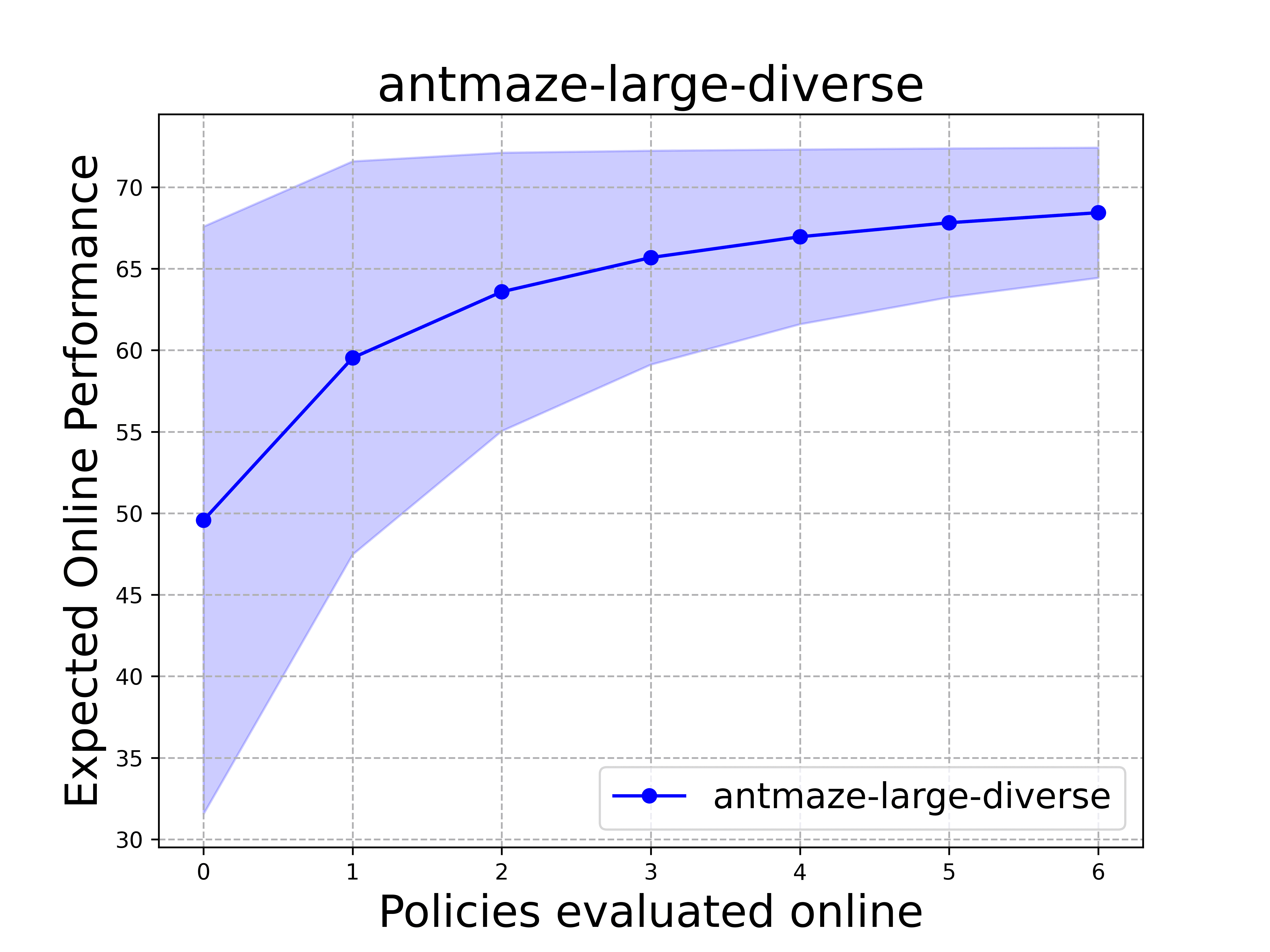}
    \end{minipage}
    }
    \subfigure{
    \begin{minipage}[b]{0.15\linewidth}
        \centering
        \includegraphics[width=\linewidth]{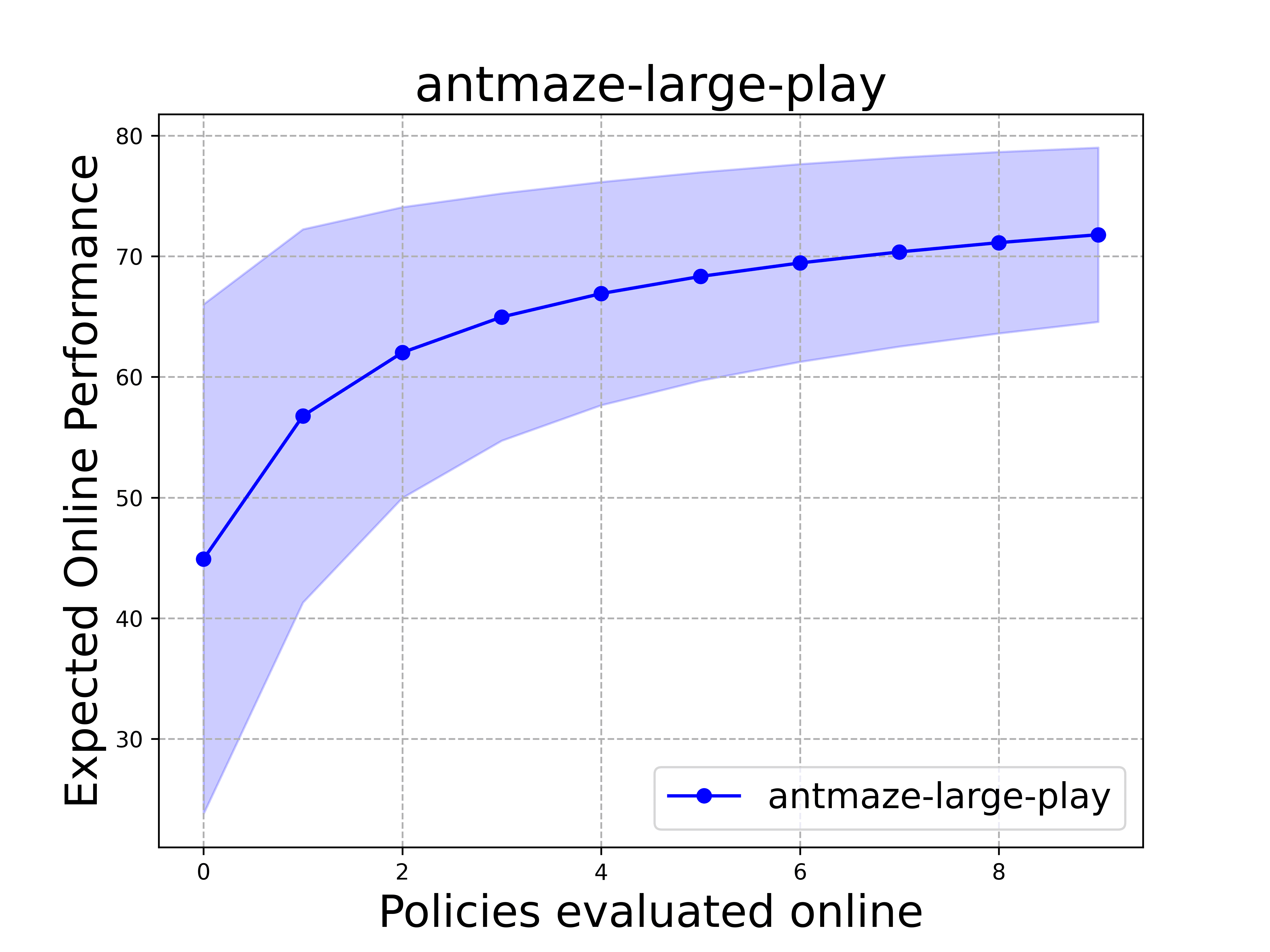}
    \end{minipage}
    }
    \subfigure{
    \begin{minipage}[b]{0.15\linewidth}
        \centering
        \includegraphics[width=\linewidth]{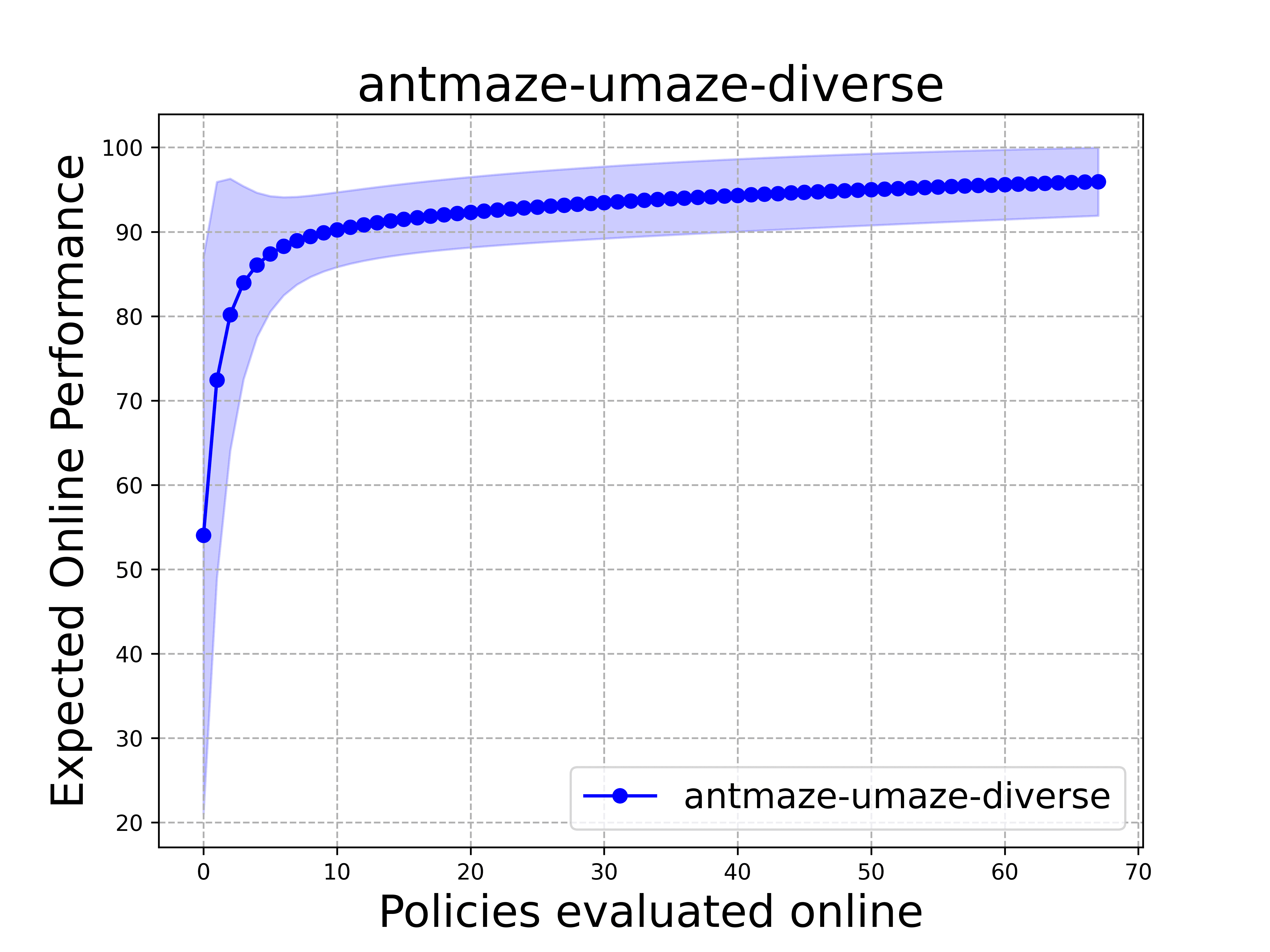}
    \end{minipage}
    }
    \subfigure{
    \begin{minipage}[b]{0.15\linewidth}
        \centering
        \includegraphics[width=\linewidth]{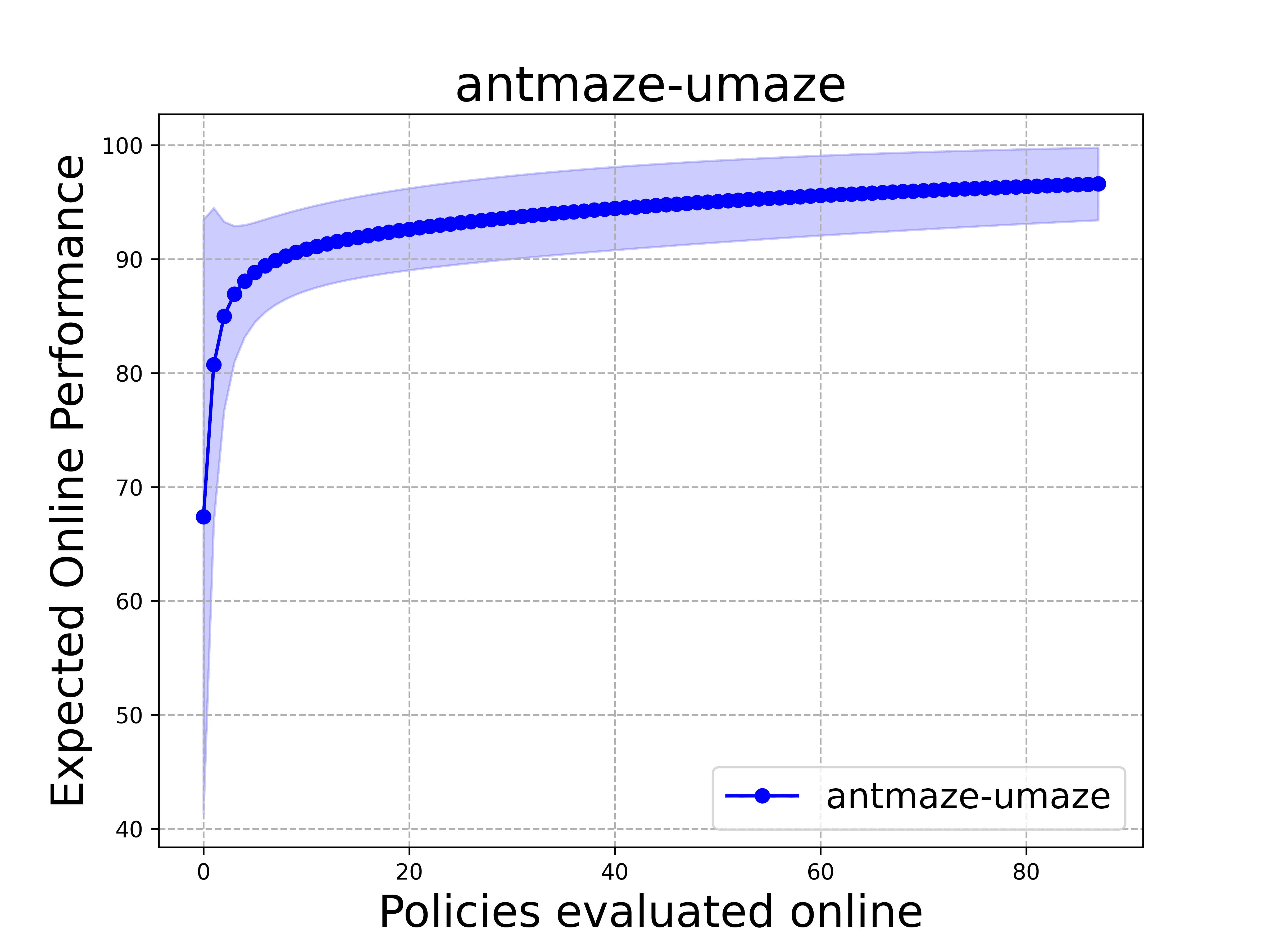}
    \end{minipage}
    }

    \subfigure{
    \begin{minipage}[t]{0.15\linewidth}
        \centering
        \includegraphics[width=\linewidth]{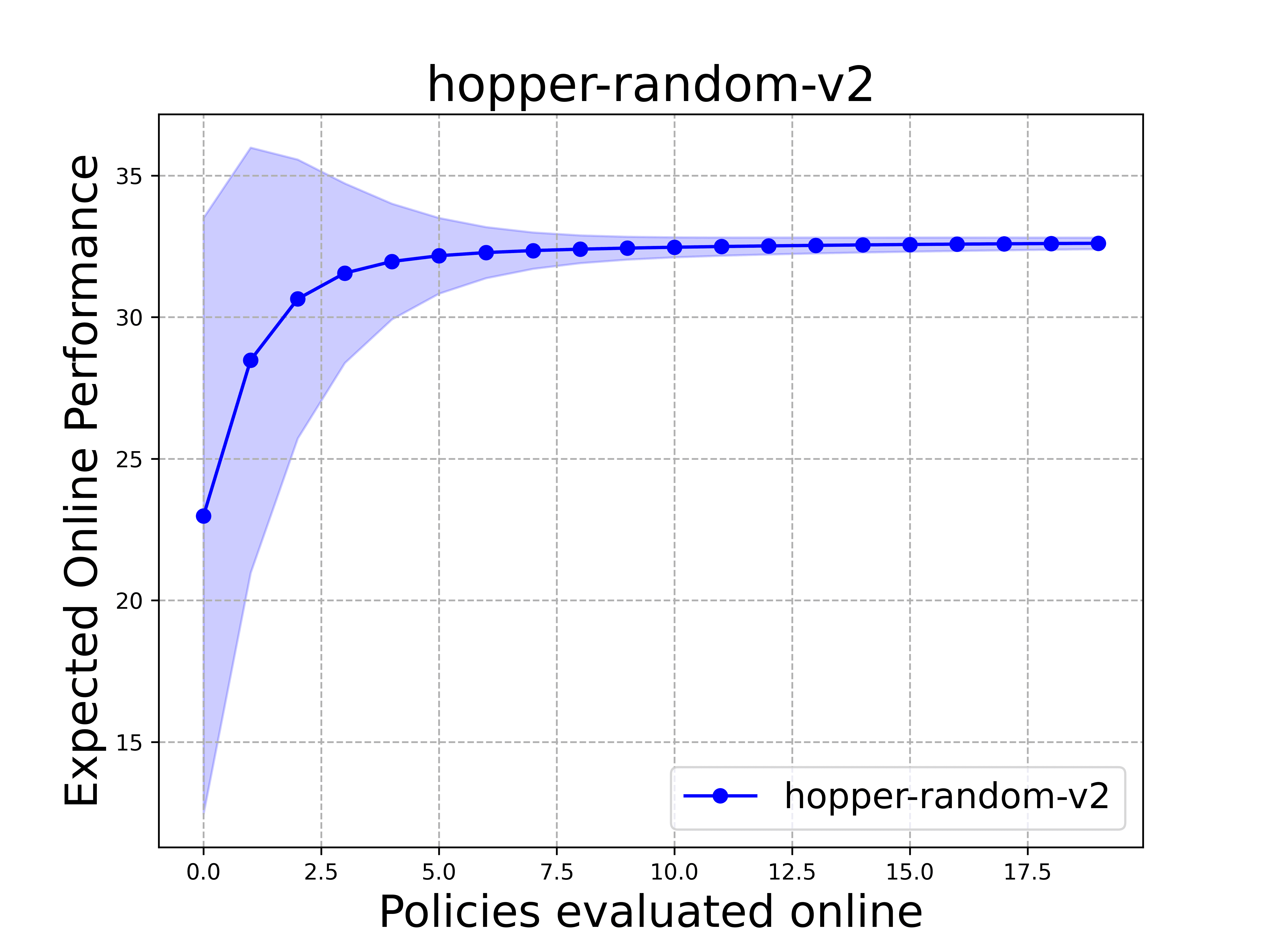}
    \end{minipage}%
    }
    \subfigure{
    \begin{minipage}[t]{0.15\linewidth}
        \centering
        \includegraphics[width=\linewidth]{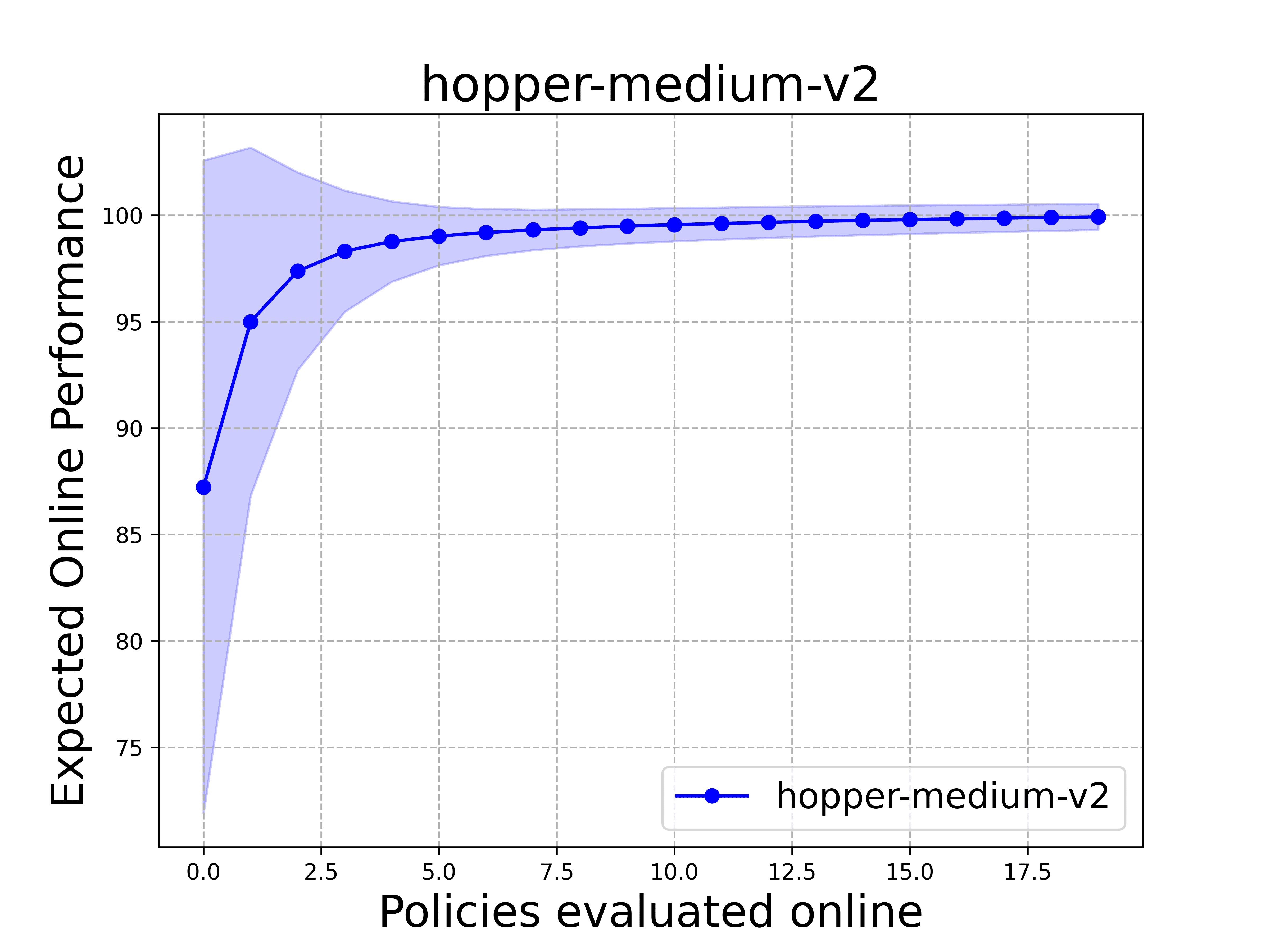}
    \end{minipage}
    }
    \subfigure{
    \begin{minipage}[b]{0.15\linewidth}
        \centering
        \includegraphics[width=\linewidth]{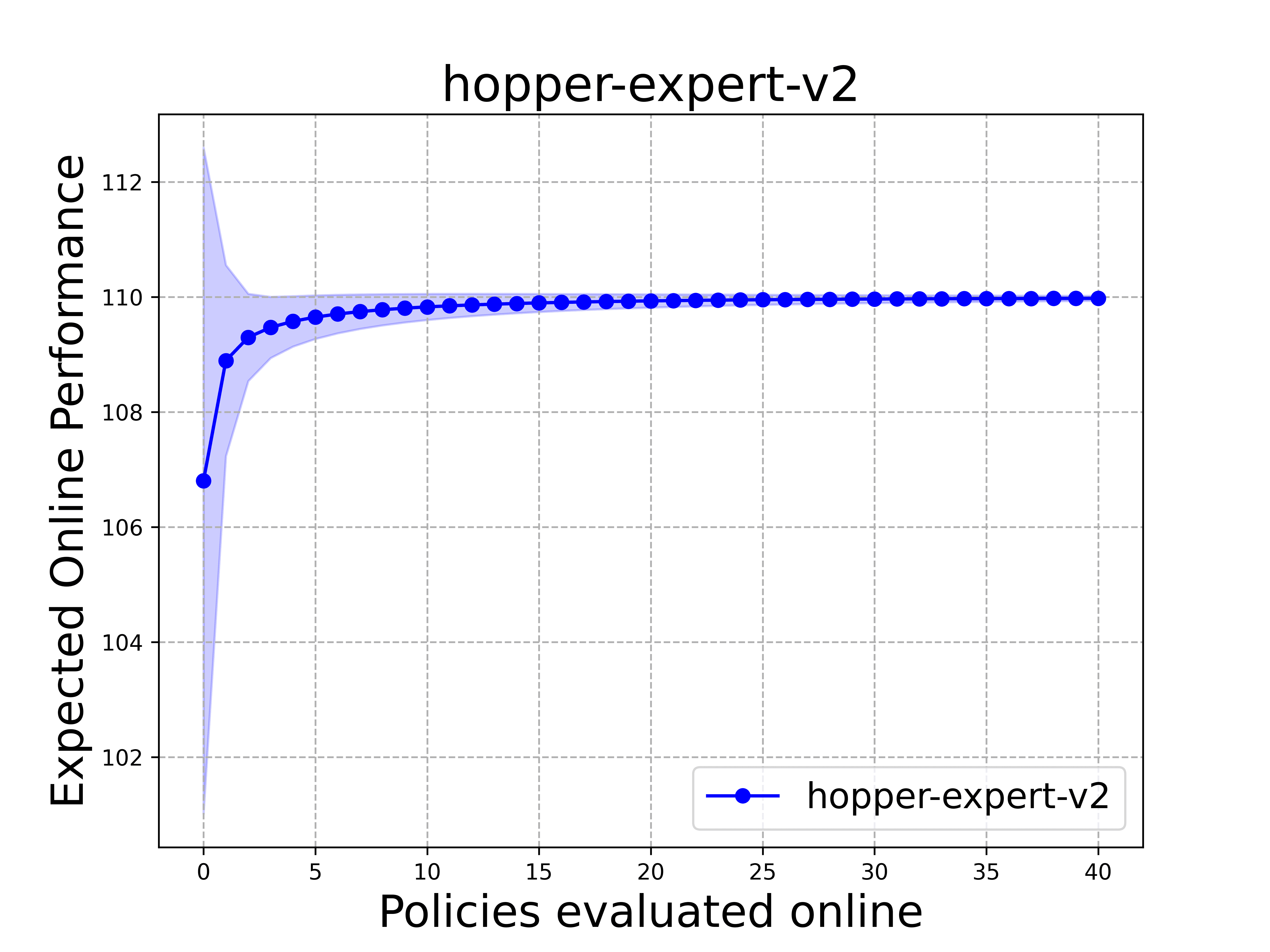}
    \end{minipage}
    }
    \subfigure{
    \begin{minipage}[b]{0.15\linewidth}
        \centering
        \includegraphics[width=\linewidth]{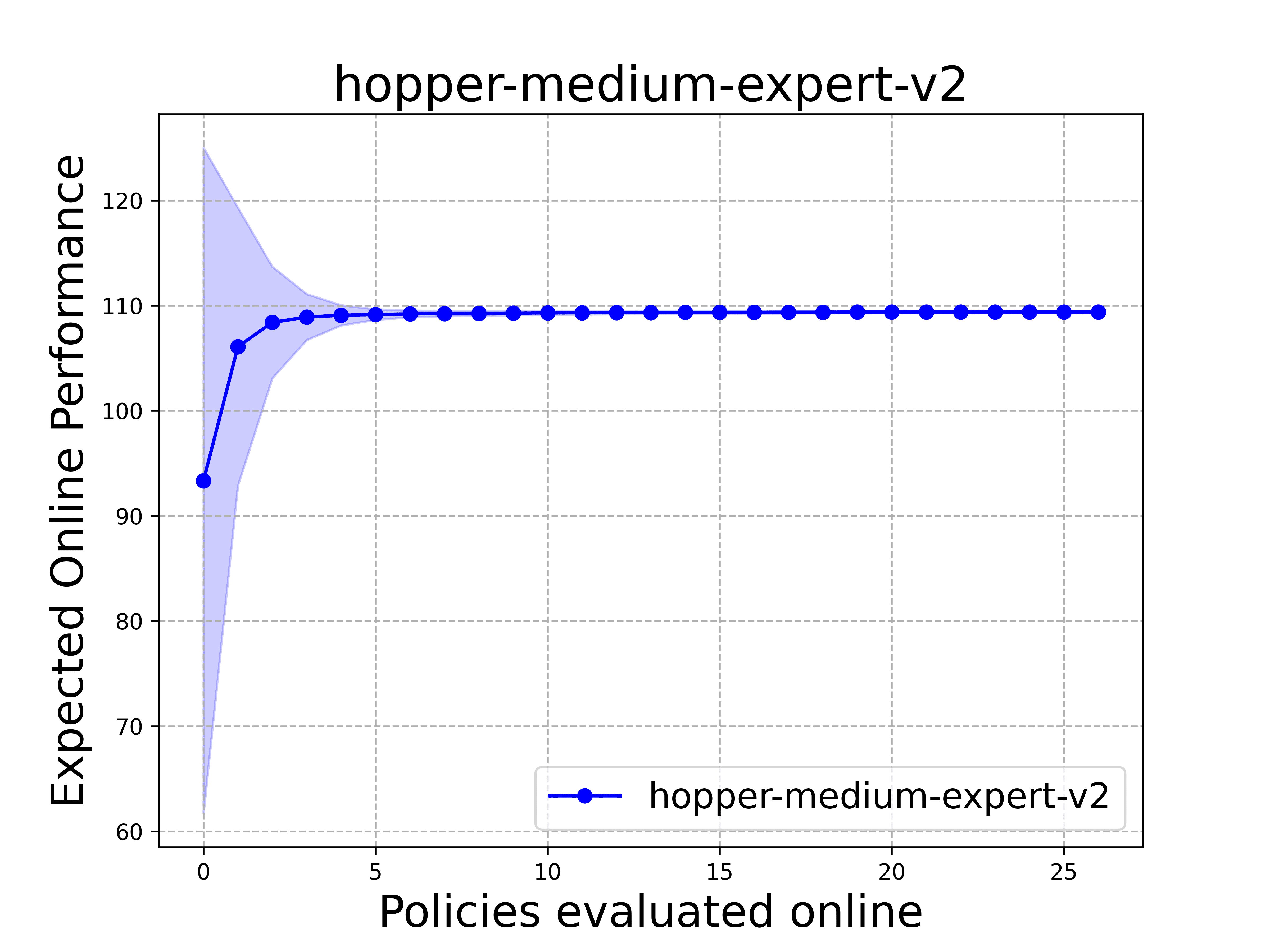}
    \end{minipage}
    }
    \subfigure{
    \begin{minipage}[b]{0.15\linewidth}
        \centering
        \includegraphics[width=\linewidth]{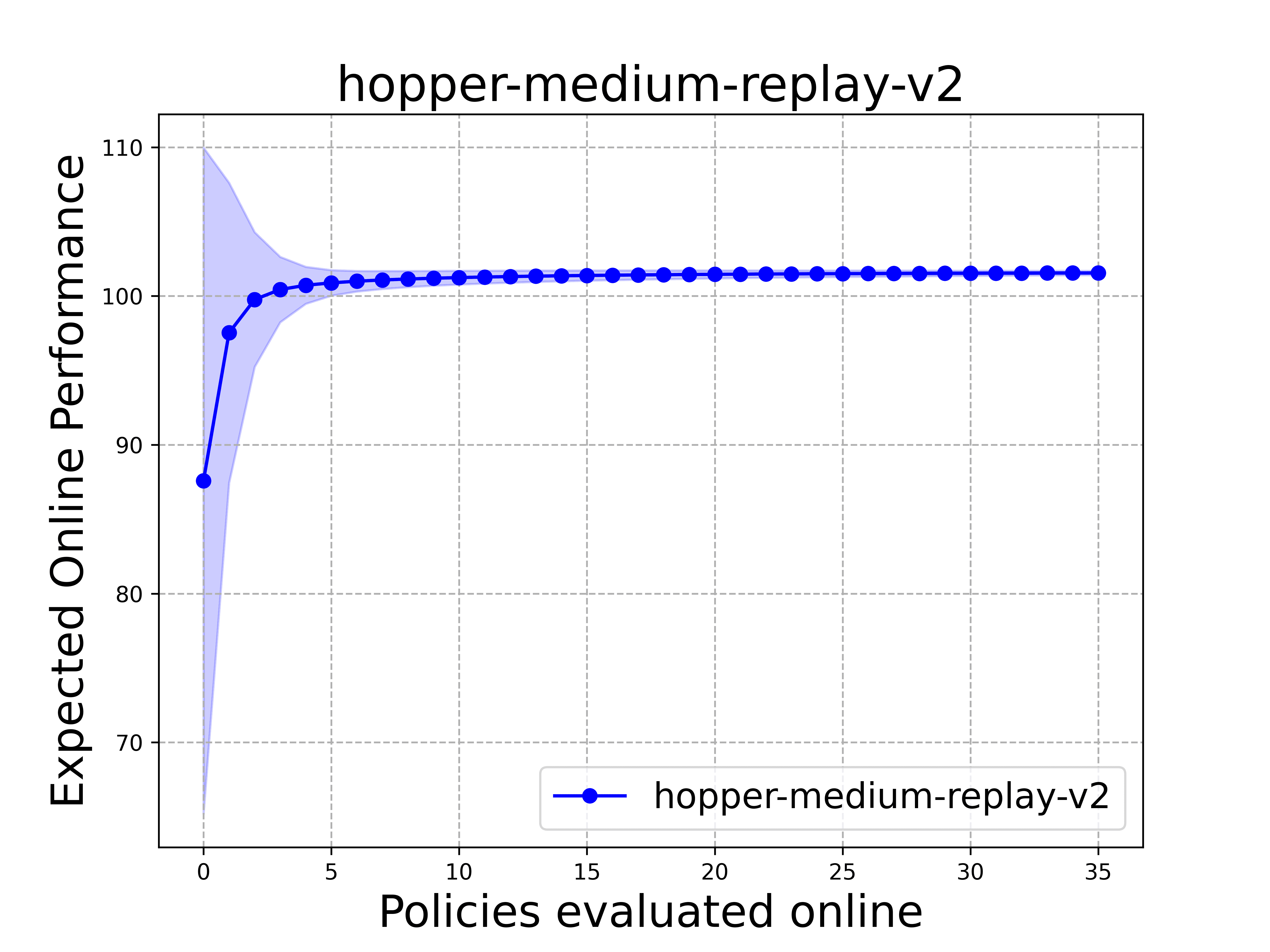}
    \end{minipage}
    }
    \subfigure{
    \begin{minipage}[b]{0.15\linewidth}
        \centering
        \includegraphics[width=\linewidth]{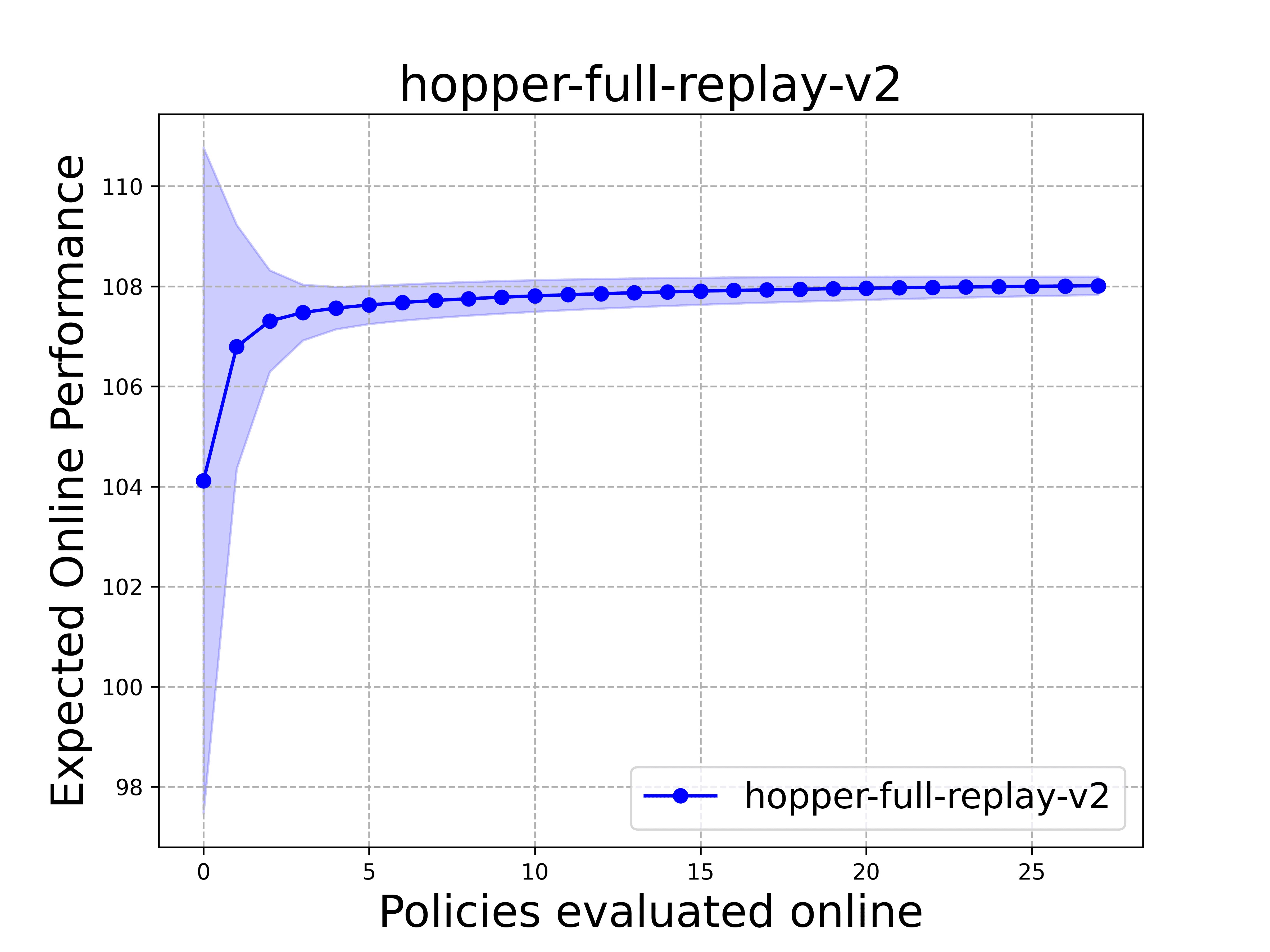}
    \end{minipage}
    }
    \\
    \subfigure{
    \begin{minipage}[t]{0.15\linewidth}
        \centering
        \includegraphics[width=\linewidth]{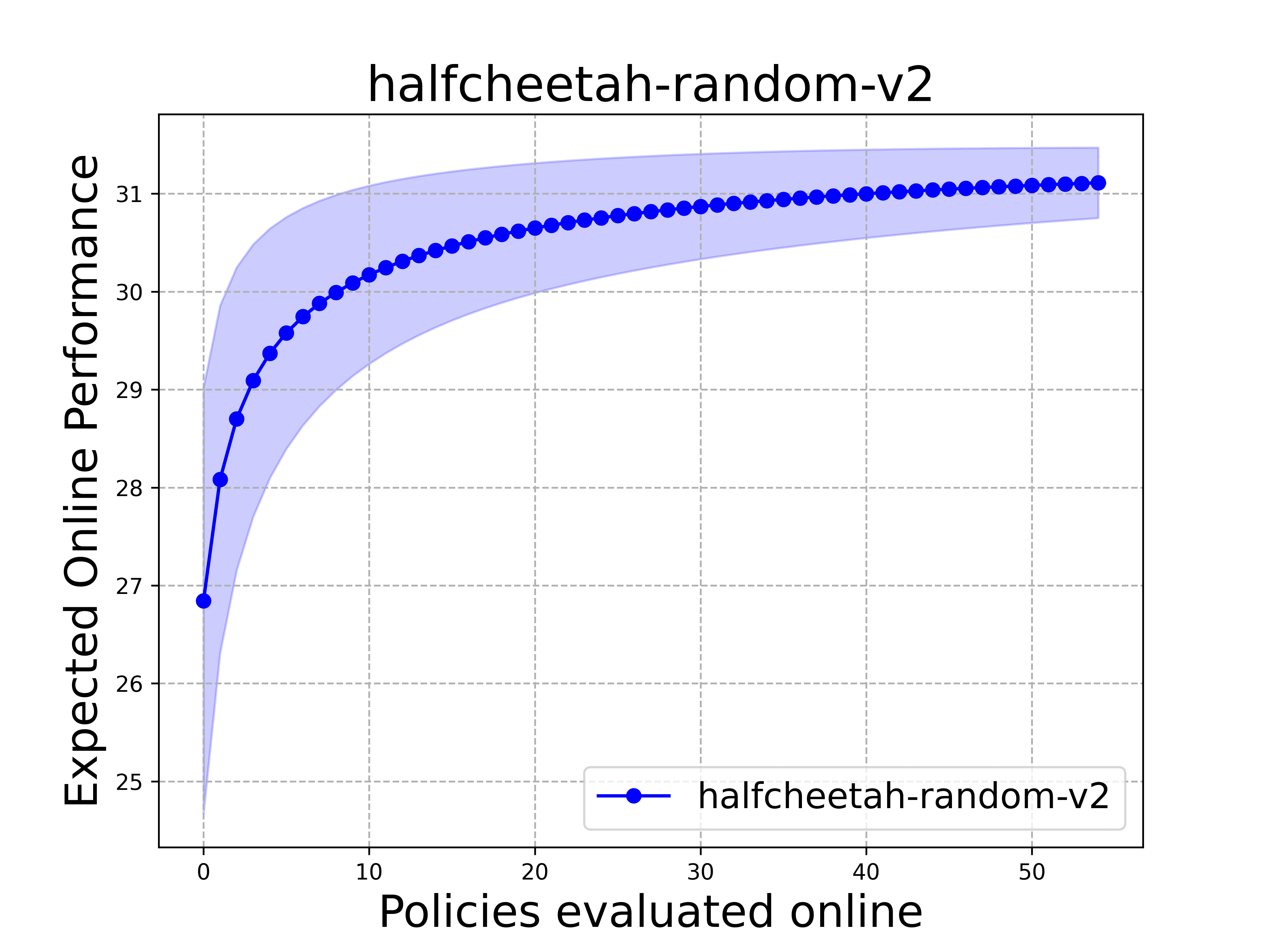}
    \end{minipage}%
    }
    \subfigure{
    \begin{minipage}[t]{0.15\linewidth}
        \centering
        \includegraphics[width=\linewidth]{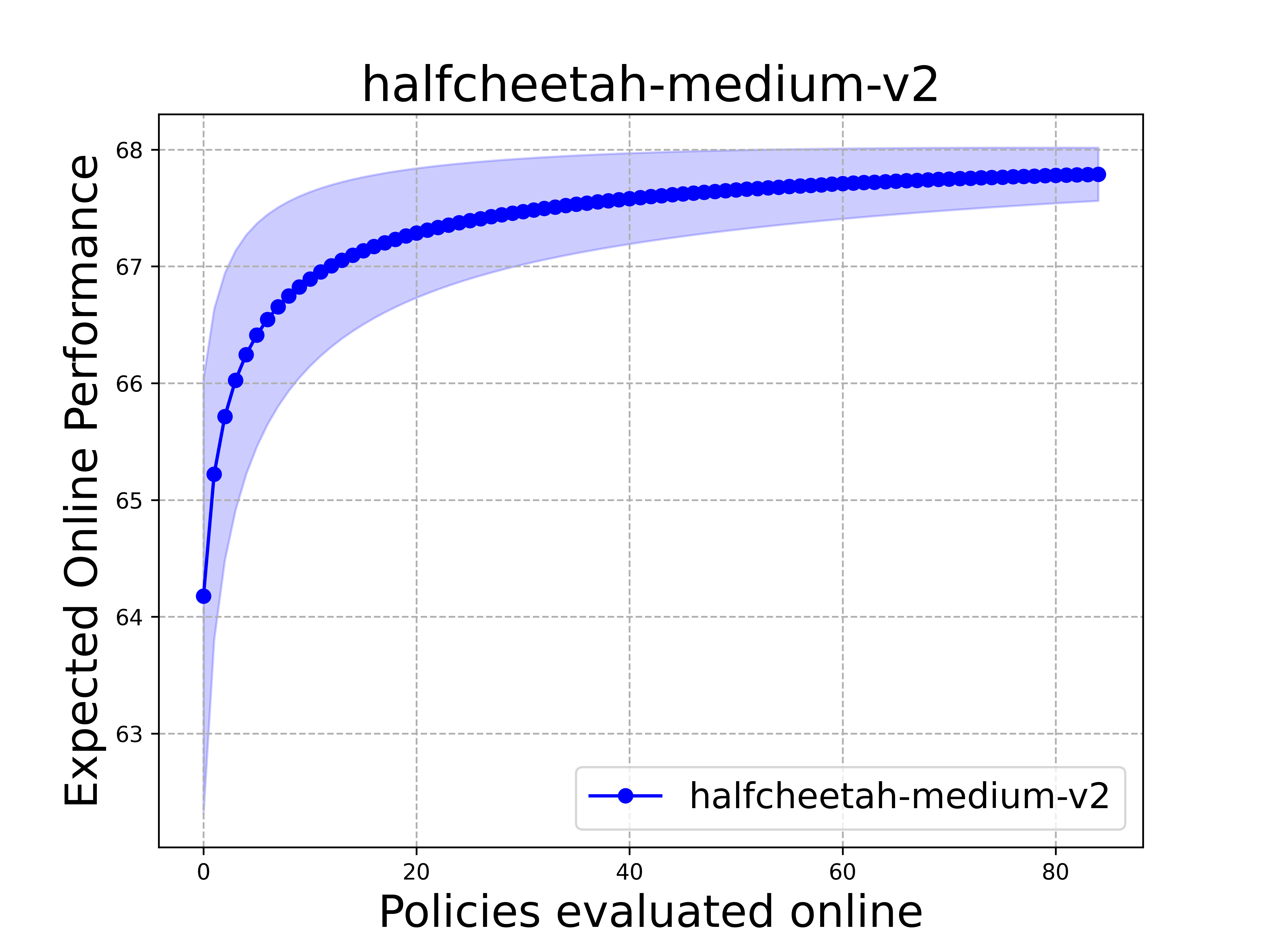}
    \end{minipage}
    }
    \subfigure{
    \begin{minipage}[b]{0.15\linewidth}
        \centering
        \includegraphics[width=\linewidth]{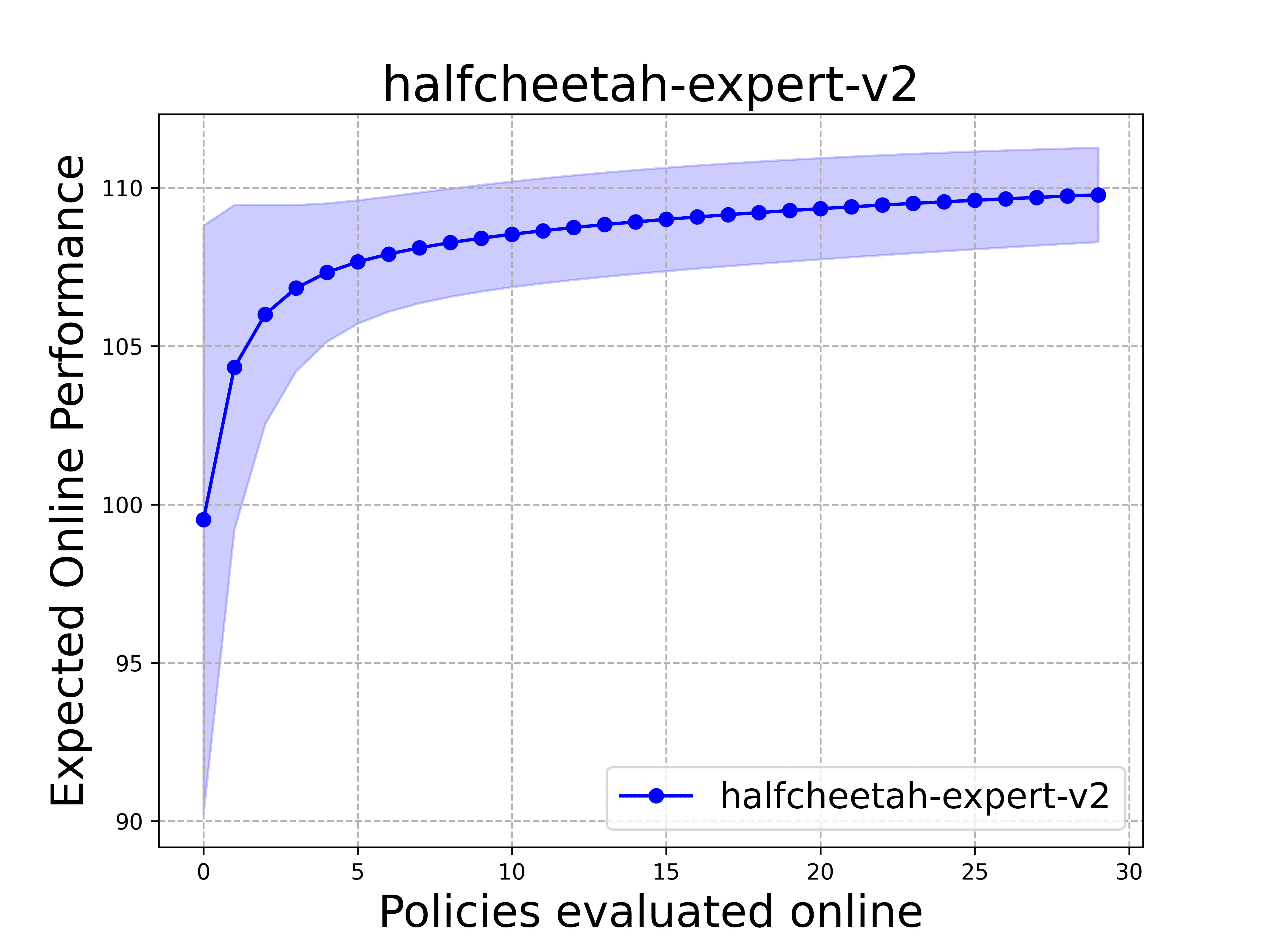}
    \end{minipage}
    }
    \subfigure{
    \begin{minipage}[b]{0.15\linewidth}
        \centering
        \includegraphics[width=\linewidth]{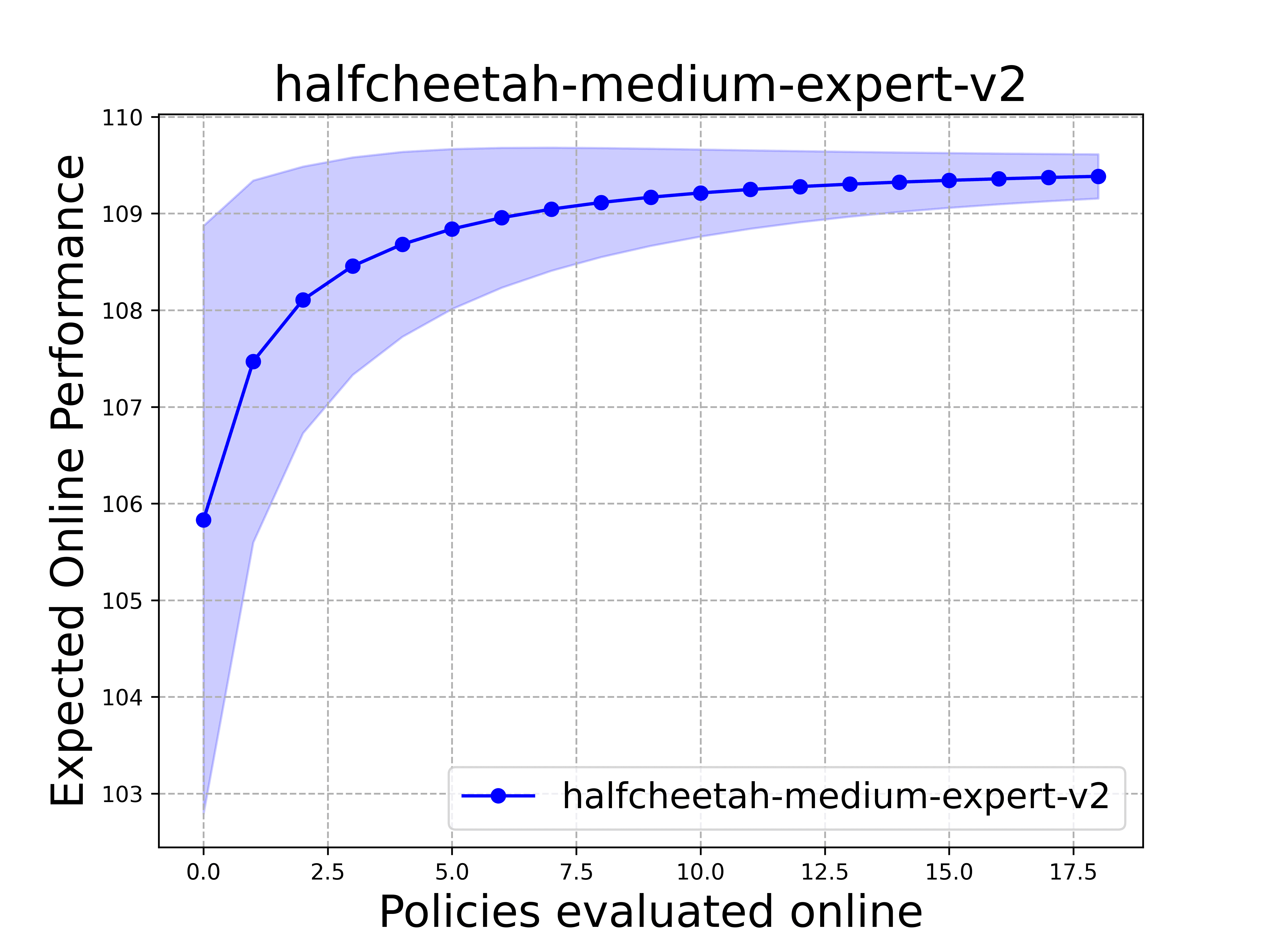}
    \end{minipage}
    }
    \subfigure{
    \begin{minipage}[b]{0.15\linewidth}
        \centering
        \includegraphics[width=\linewidth]{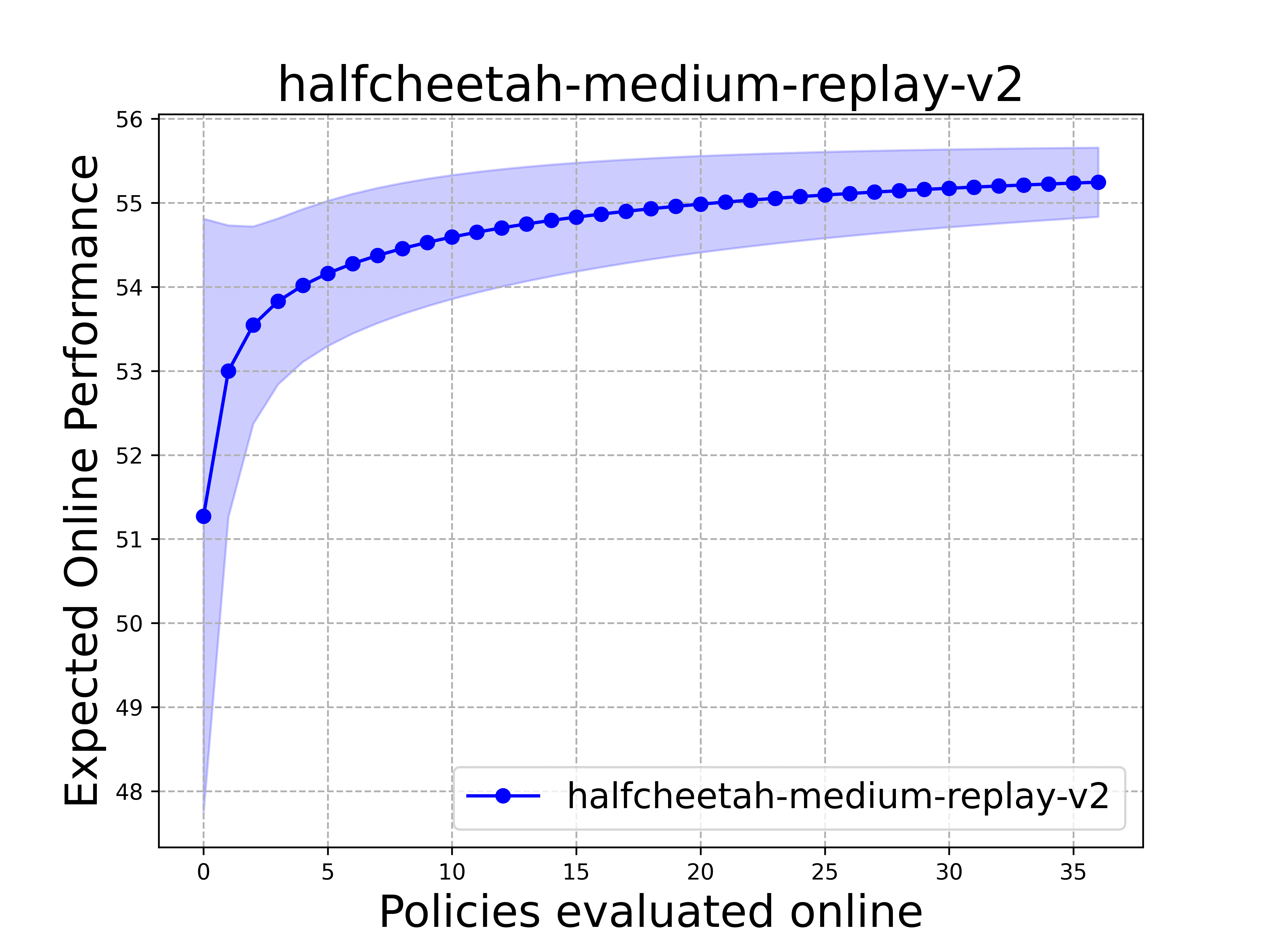}
    \end{minipage}
    }
    \subfigure{
    \begin{minipage}[b]{0.15\linewidth}
        \centering
        \includegraphics[width=\linewidth]{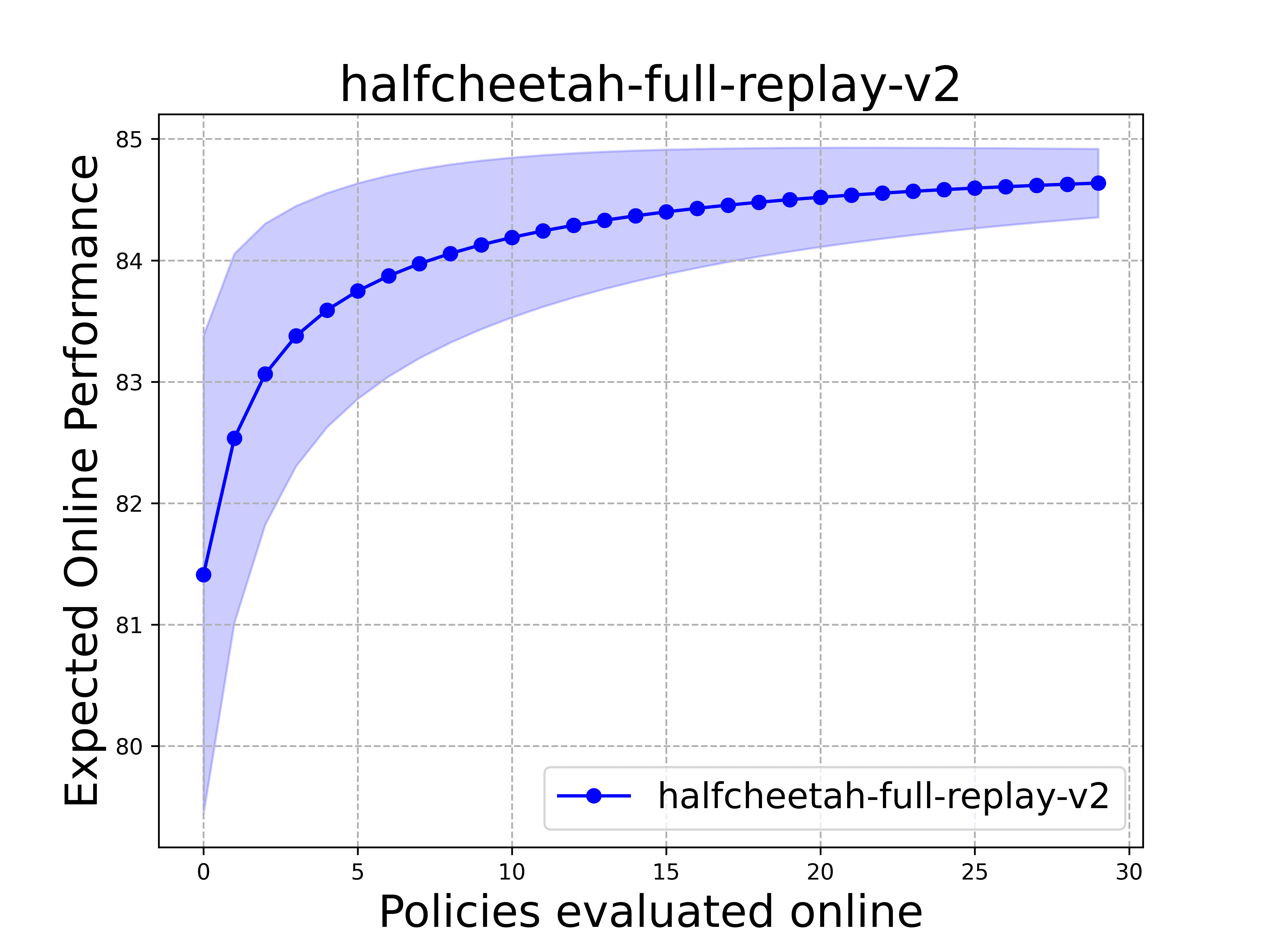}
    \end{minipage}
    }

    \subfigure{
    \begin{minipage}[t]{0.15\linewidth}
        \centering
        \includegraphics[width=\linewidth]{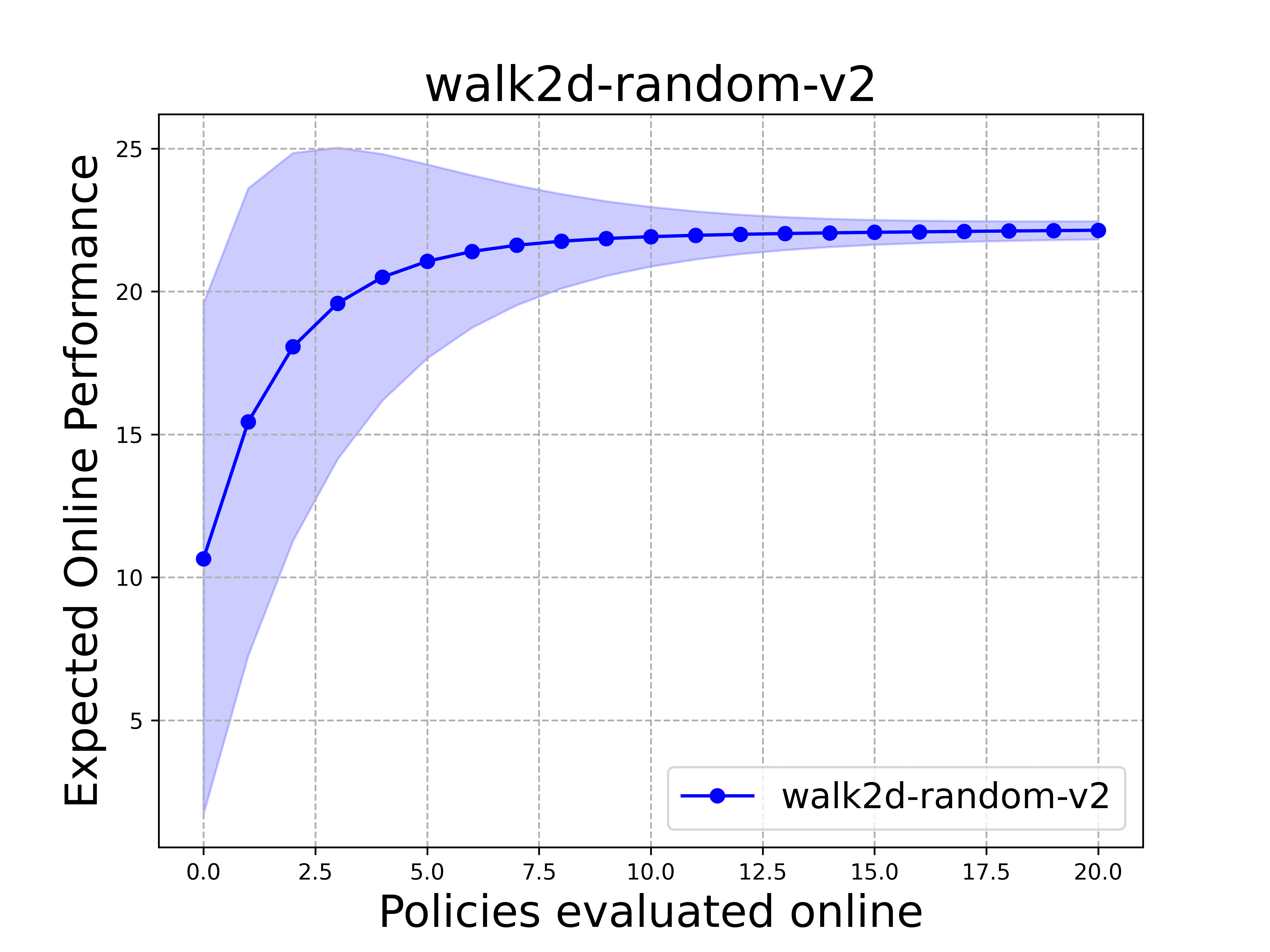}
    \end{minipage}%
    }
    \subfigure{
    \begin{minipage}[t]{0.15\linewidth}
        \centering
        \includegraphics[width=\linewidth]{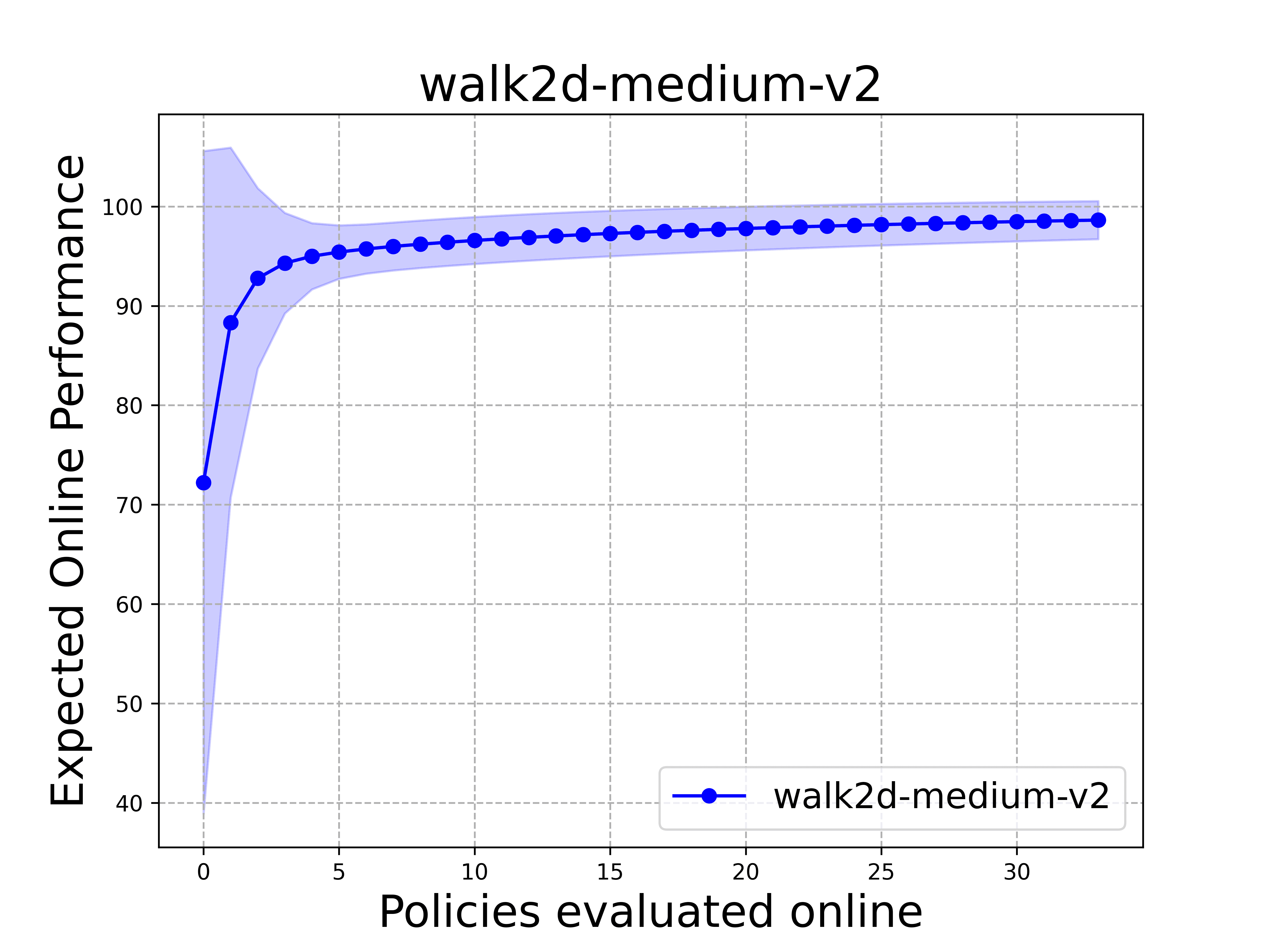}
    \end{minipage}
    }
    \subfigure{
    \begin{minipage}[b]{0.15\linewidth}
        \centering
        \includegraphics[width=\linewidth]{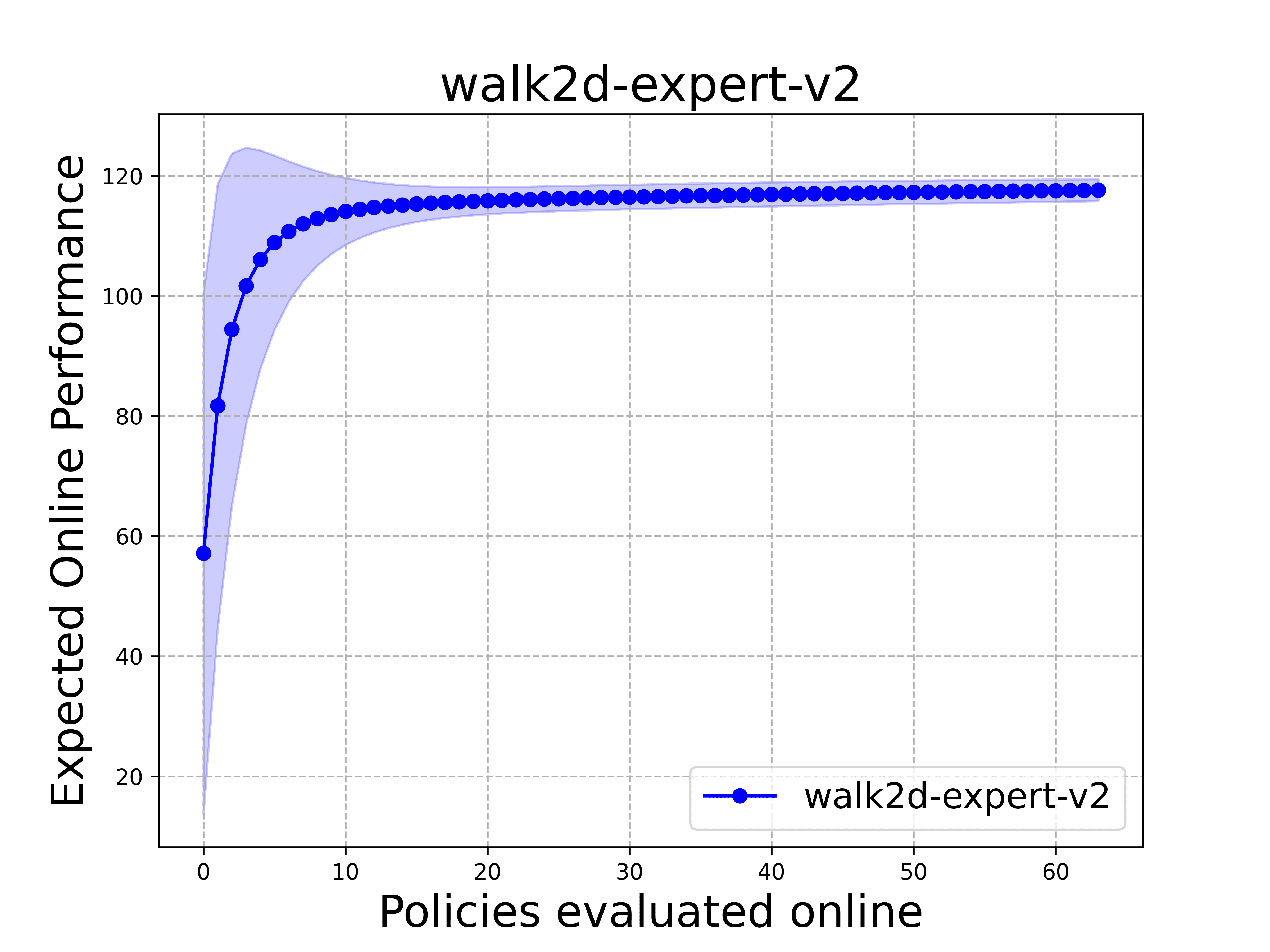}
    \end{minipage}
    }
    \subfigure{
    \begin{minipage}[b]{0.15\linewidth}
        \centering
        \includegraphics[width=\linewidth]{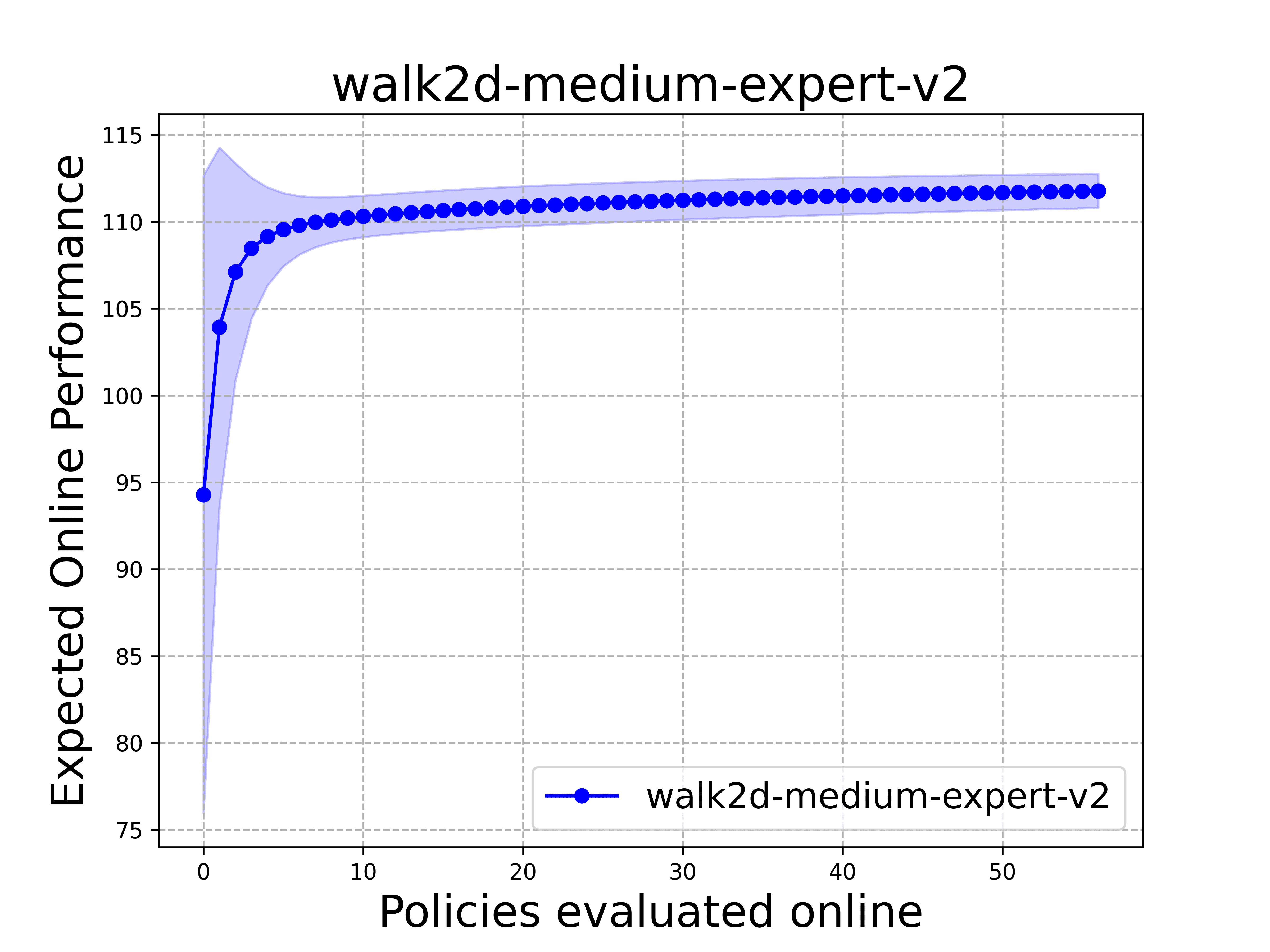}
    \end{minipage}
    }
    \subfigure{
    \begin{minipage}[b]{0.15\linewidth}
        \centering
        \includegraphics[width=\linewidth]{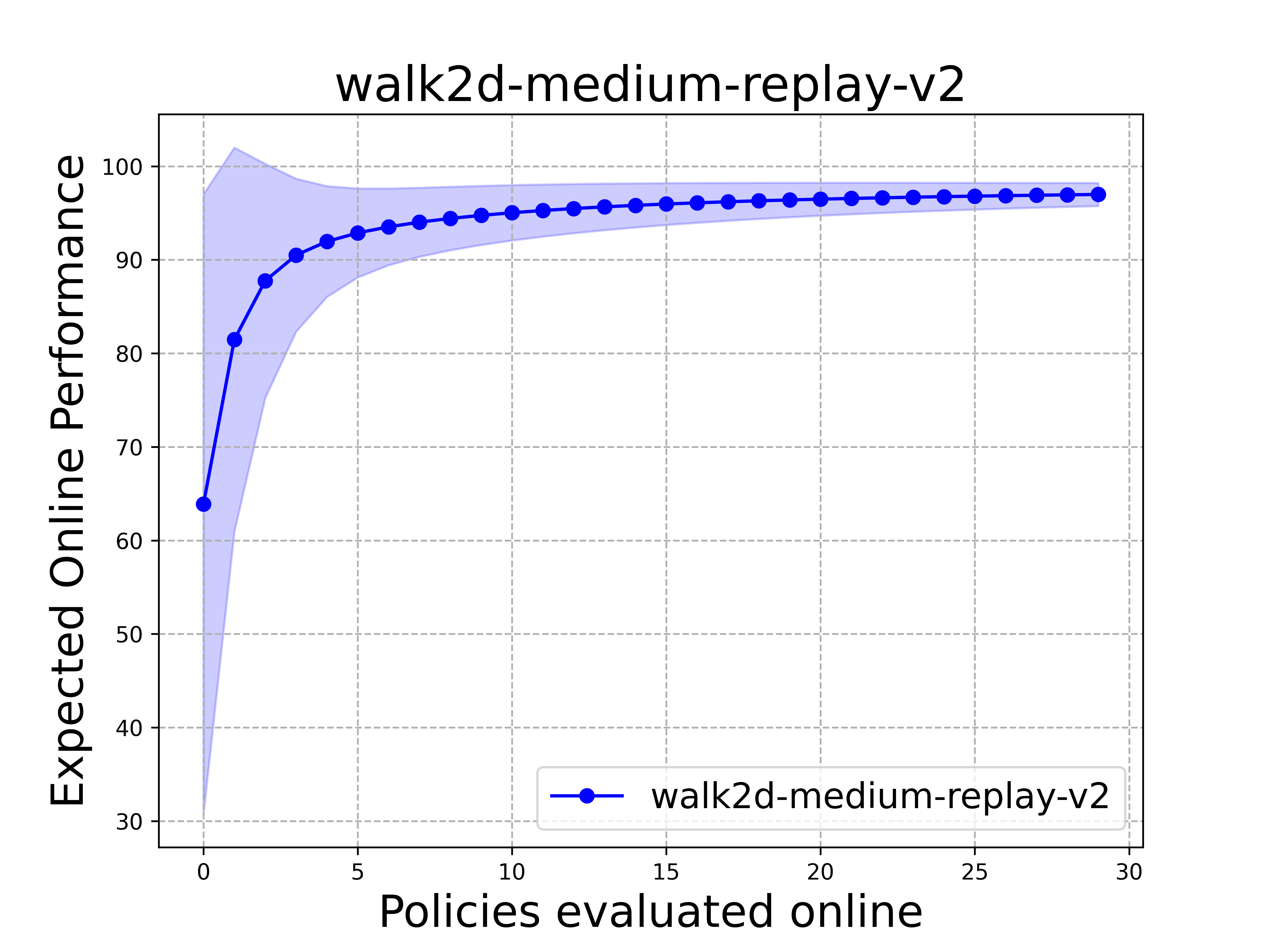}
    \end{minipage}
    }
    \subfigure{
    \begin{minipage}[b]{0.15\linewidth}
        \centering
        \includegraphics[width=\linewidth]{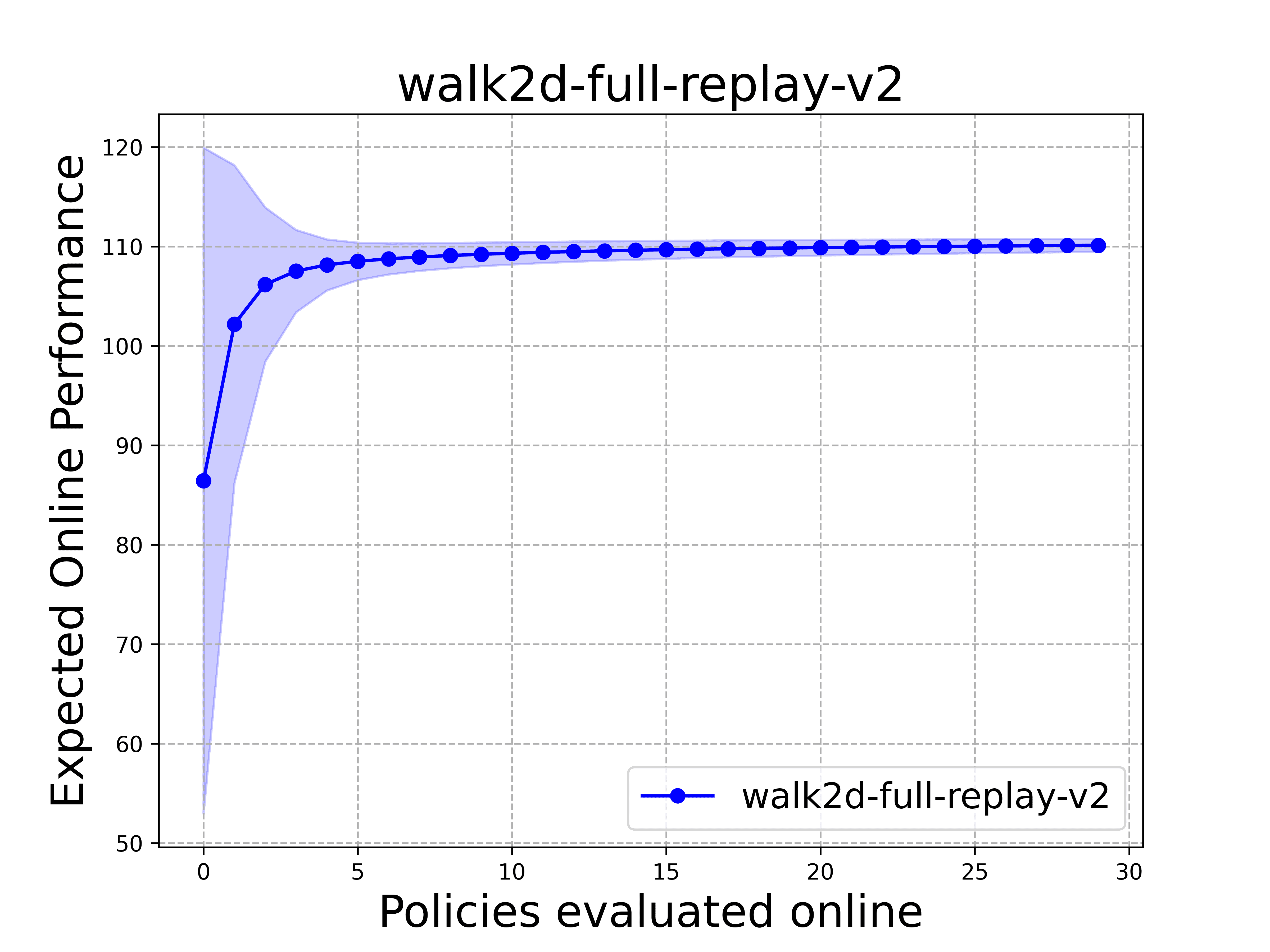}
    \end{minipage}
    }
    \end{minipage}
    \caption{Expected Online Performance lines for Gym-MuJoCo and AntMaze.}
    \label{figure_eop}
\end{figure}

\subsection{Parameter Study on the Number of Target Network}
Our study explored the relationship between the number of different targets and their corresponding final scores in both online MuJoCo tasks and D4RL offline tasks. In our approach, if $\alpha$ is not equal to 1, then $N$ must satisfy the condition $N > 1$. In the following charts, we fill in the values of RND at $N=1$ as a reference for the single target network results.

\subsubsection{Online Tasks}
We conduct adversarial attack experiments with different numbers of target networks in DRND. As shown in Figure \ref{target_ablation_online}, the robustness of DRND generally improves with an increase in the target number $N$. Considering both runtime and performance, we chose $N=10$ as the optimal number of targets for our online experiments.

\begin{figure*}[h]
    \centering
    \subfigure{
    \begin{minipage}[t]{0.24\linewidth}
        \centering
        \includegraphics[height=80.3pt,width=118pt]{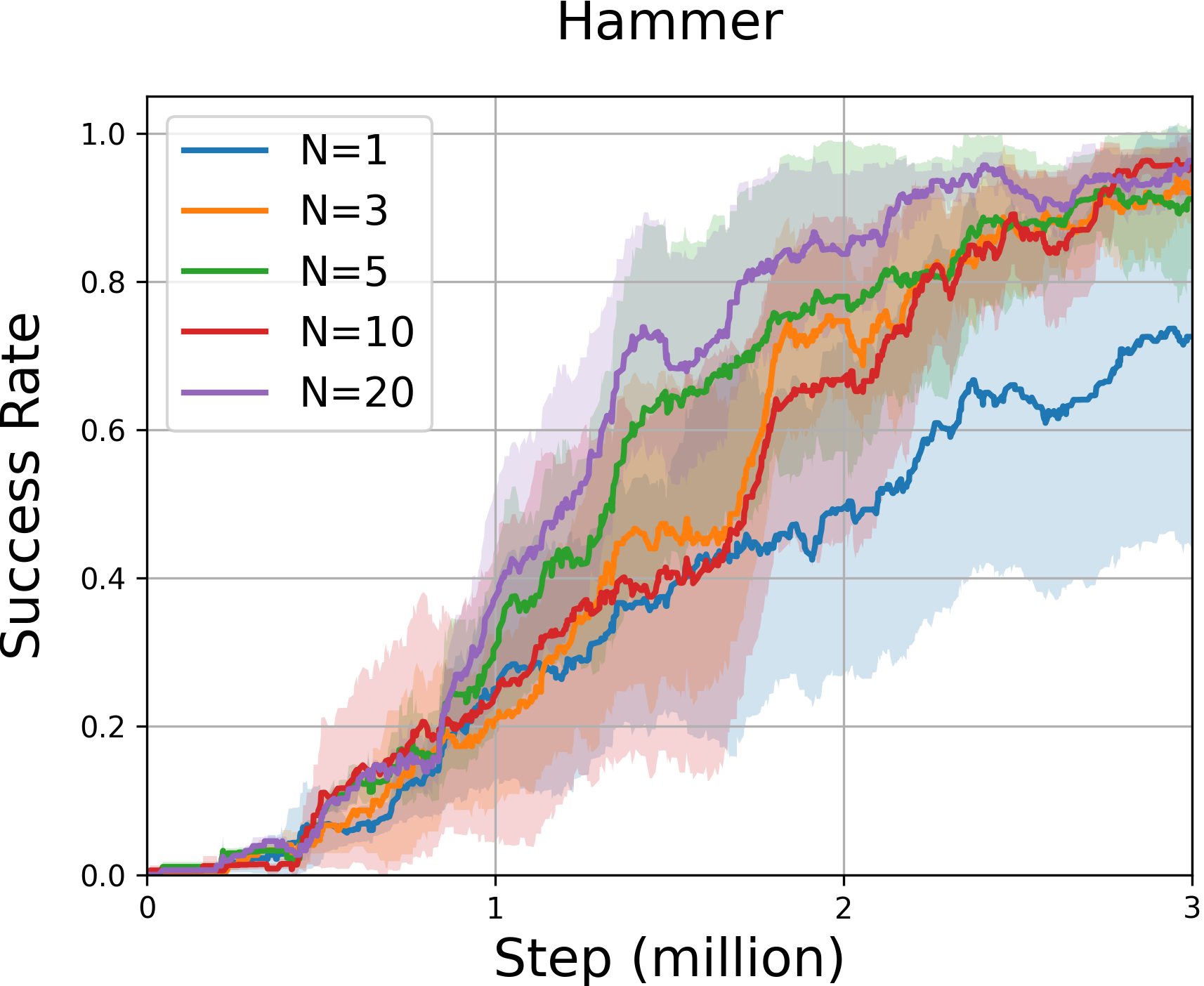}
    \end{minipage}%
    }
    \subfigure{
    \begin{minipage}[t]{0.23\linewidth}
        \centering
        \includegraphics[height=80pt,width=110pt]{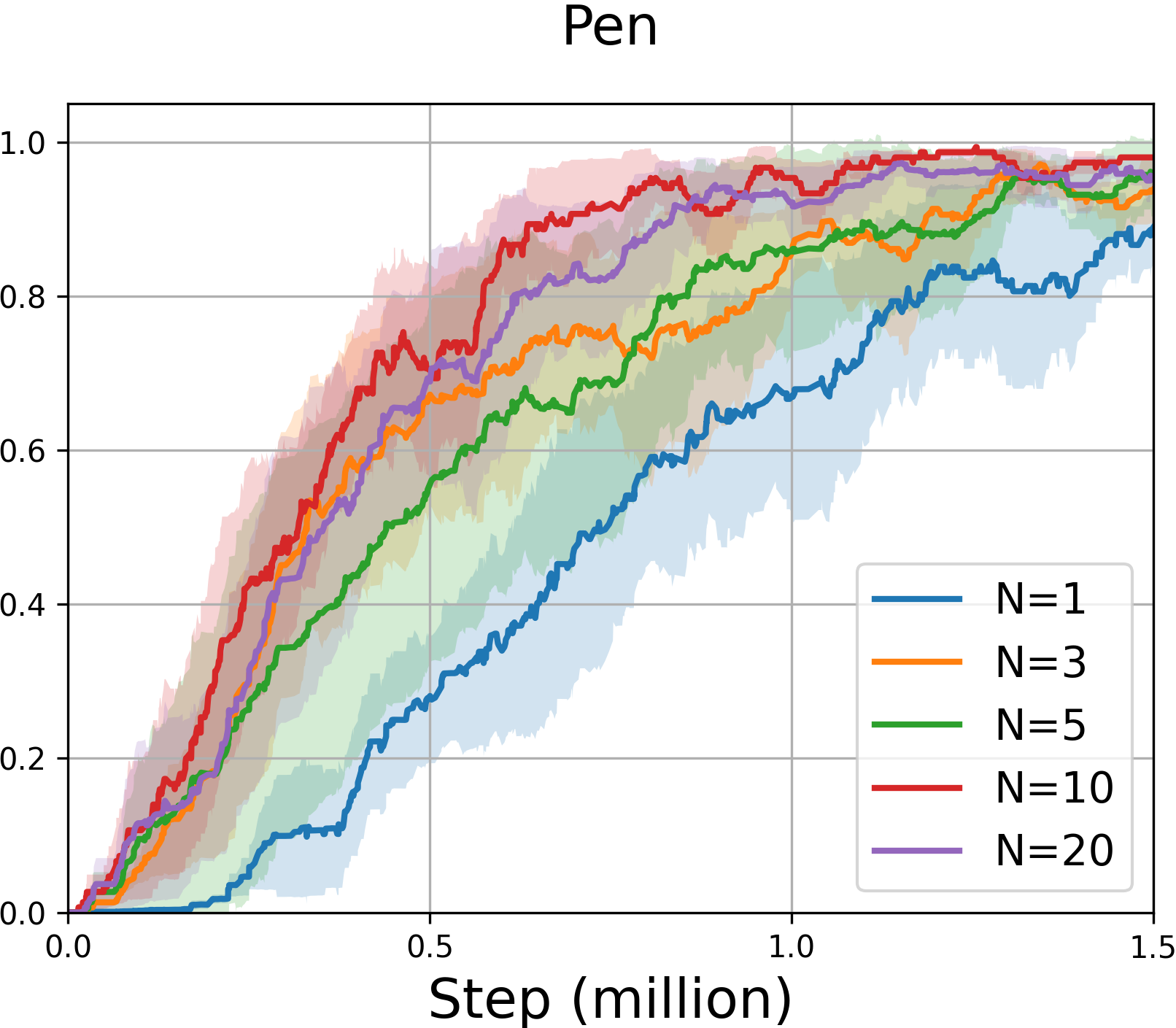}
    \end{minipage}
    }
    \subfigure{
    \begin{minipage}[t]{0.23\linewidth}
        \centering
        \includegraphics[height=80pt,width=110pt]{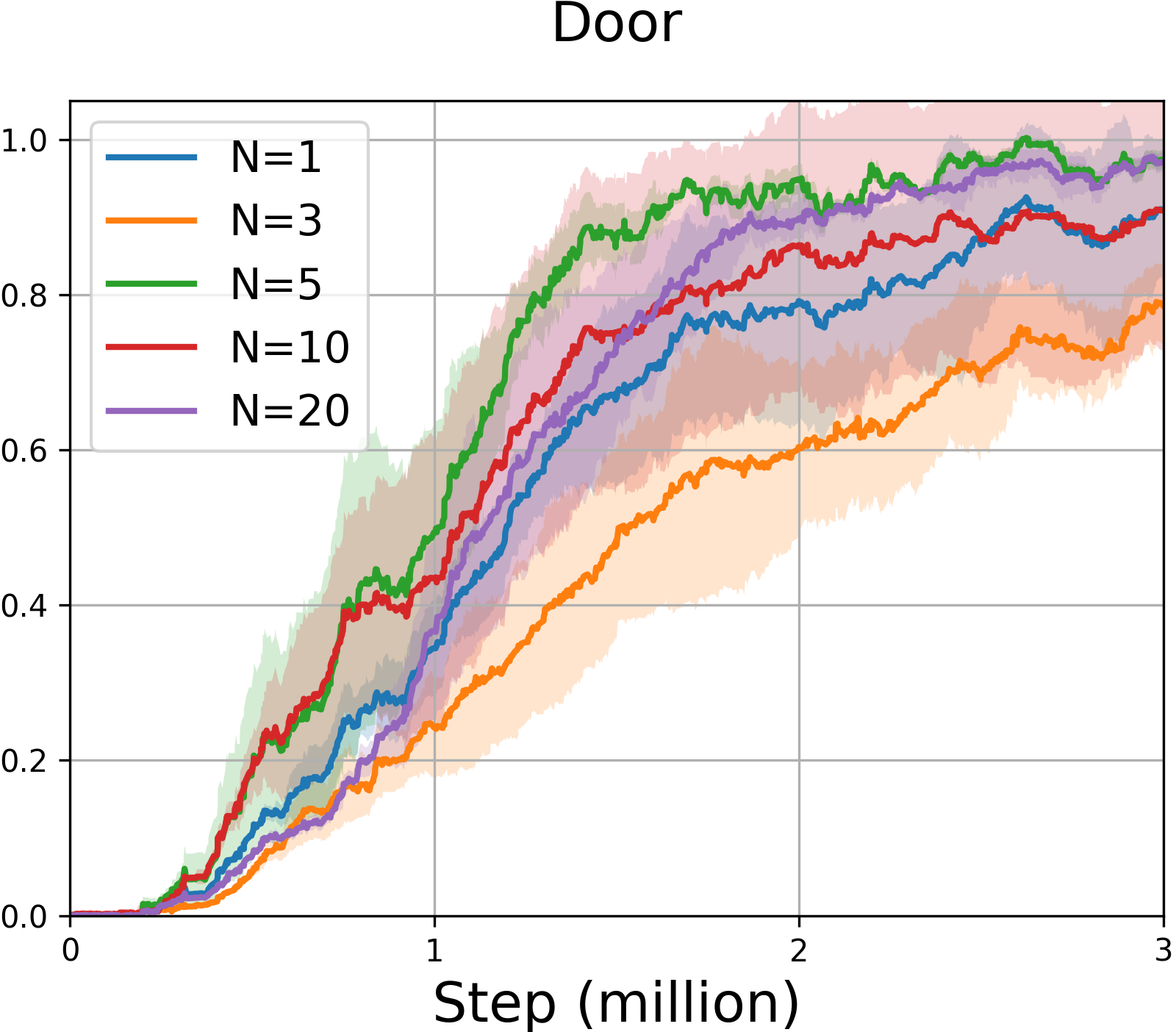}
    \end{minipage}
    }
    \subfigure{
    \begin{minipage}[t]{0.23\linewidth}
        \centering
        \includegraphics[height=80pt,width=110pt]{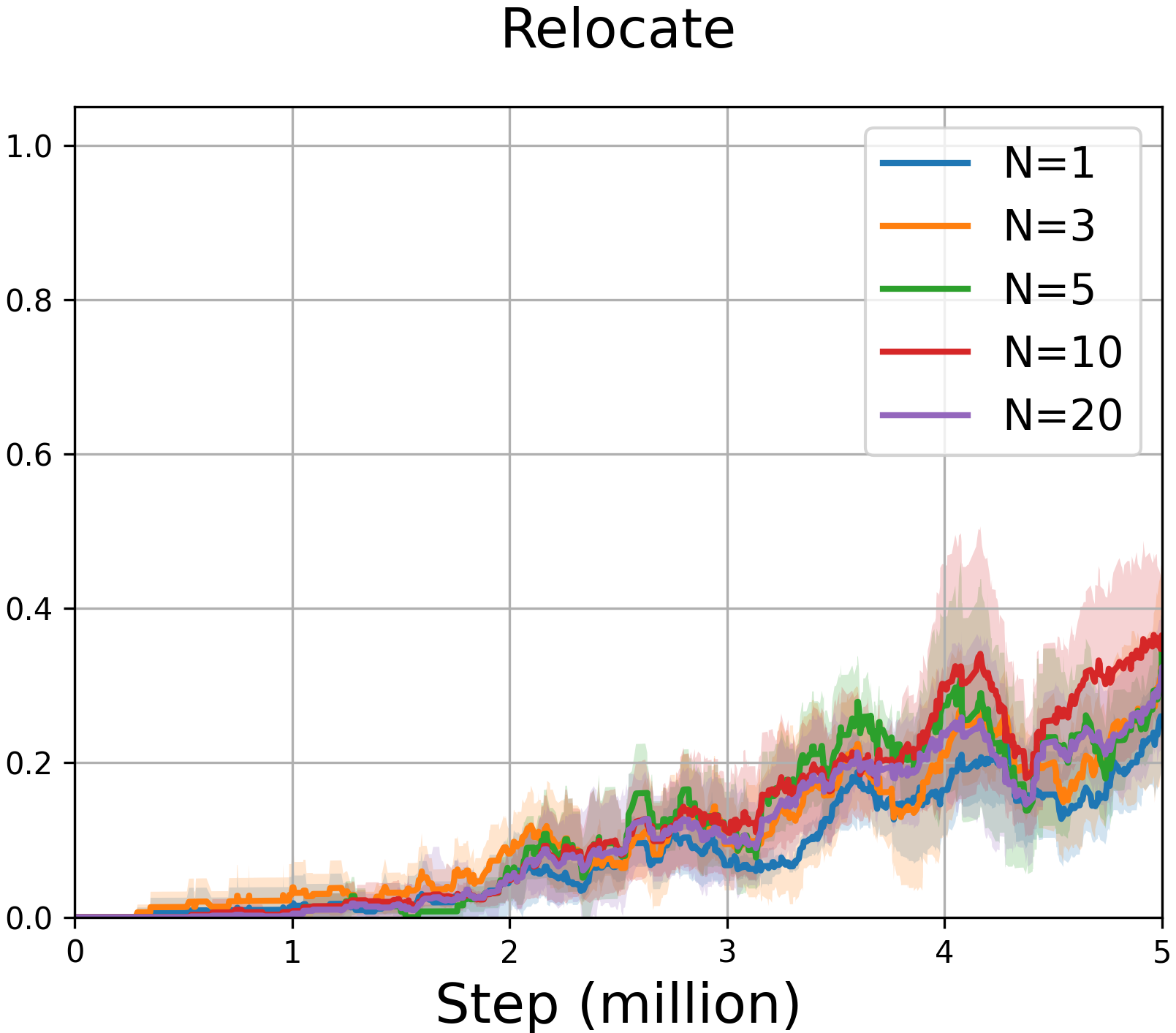}
    \end{minipage}
    }
    \caption{Training curves with different $N$ in Adroit tasks. All curves are averaged over 5 runs.}
    \label{target_ablation_online}
\end{figure*}

\subsubsection{Offline Tasks}
The results are shown in Table \ref{target}. The results indicate that the average score demonstrates an upward trend as the number of targets increases. At the same time, its variance decreases, which suggests that a higher number of targets generally leads to improved and more consistent outcomes. However, it's worth noting that there are diminishing returns; for instance, the differences between the results at $N=10$ and $N=20$ are marginal. Considering these considerations, we chose $N=10$ for our offline experiments. Furthermore, the algorithm exhibits limited sensitivity to variations in the number of targets in online and offline settings.
\begin{table}[h]
\caption{Parameter study of $N$ in offline tasks}
\centering
\renewcommand{\arraystretch}{1.1}
\setlength{\tabcolsep}{4mm}{
\resizebox{\linewidth}{20mm}{
\begin{tabular}{l c c c c c}
\toprule
\diagbox[width=10em,height=1.5em]{Dataset}{N} & $1$ & $3$ & $5$ & $10$ & $20$\\
\midrule
hopper-medium & 92.1 ± 8.4 &93.3 ± 3.7 &97.8 ± 2.4 &98.5 ± 1.1 &\textbf{99.0} ± 0.6\\
halfcheetah-medium & 66.4 ± 1.4 & 65.8 ± 1.8 & 66.7 ± 0.6 & 67.3 ± 0.2 & \textbf{67.4} ± 0.4\\
walker2d-medium& 91.6 ± 2.8 & 94.5 ± 0.9 & 94.0 ± 1.6 & \textbf{95.2} ± 1.2 & 94.7 ± 1.2\\
\hline
average score & 83.4 & 84.5 &  86.2 & \textbf{87.0} & \textbf{87.0} \\
\bottomrule
\label{target}
\end{tabular}
}
}
\end{table}

\subsection{Runtime comparison}
To verify no significant increase in computational overhead between our method and the RND method, we conducted experiments on the medium datasets in the offline D4RL tasks, comparing the computational costs of both methods, as shown in Figure \ref{runtime}. It can be observed that the runtime of our method is slightly less than that of the RND method. And it can be seen that as the number of targets increases, the running time does not significantly improve. 

\begin{figure*}[h]
    \centering
    \includegraphics[height=180pt,width=300pt]{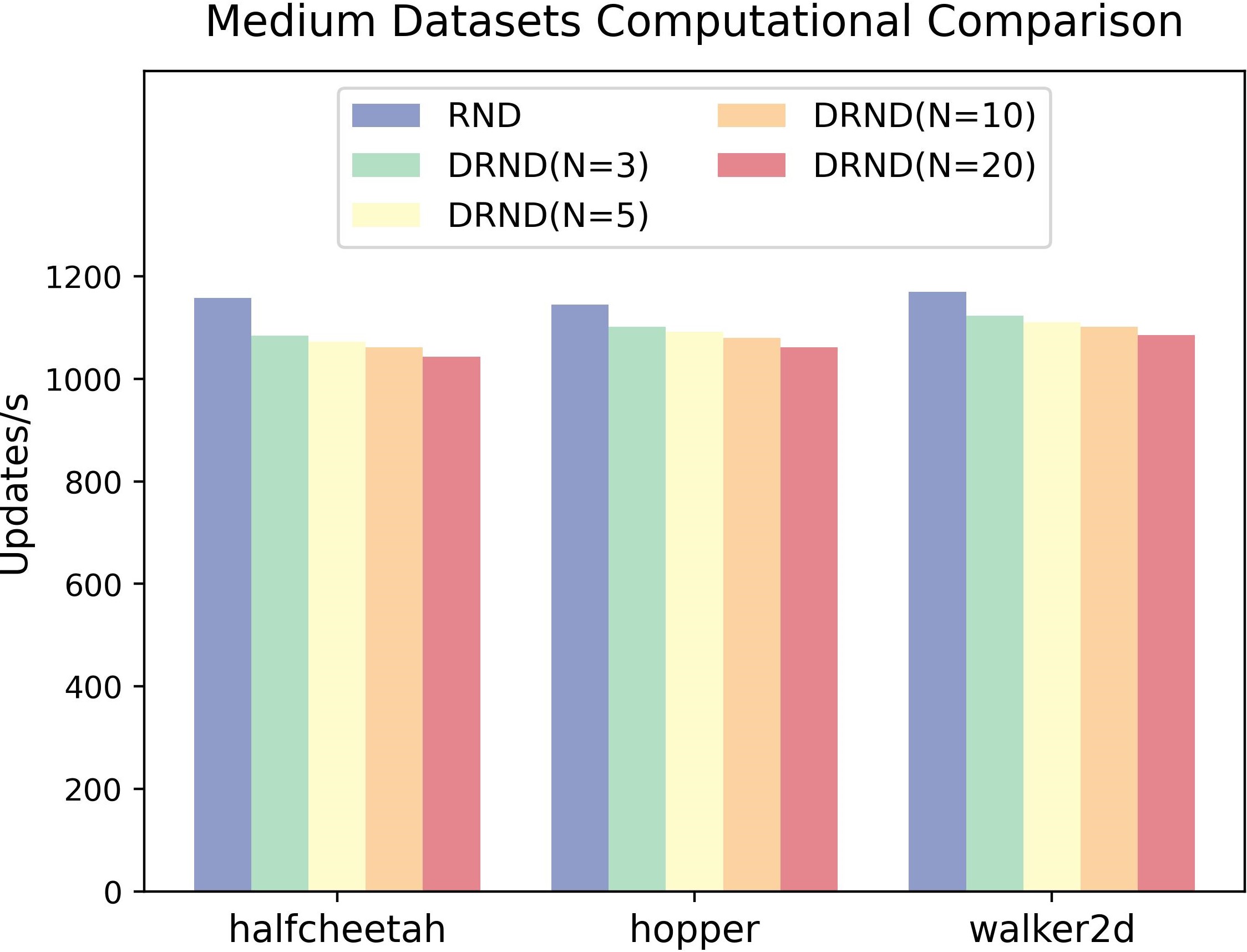}
    \caption{Comparison of updates per second between the RND and DRND methods. We assessed the execution time on a GPU (RTX 3090 24G) and one CPU (Intel(R) Xeon(R) Gold 6226R CPU) over 1M standard updates, using a batch size of 256 with the same network structure.}
    \label{runtime}
\end{figure*}

\subsection{Parameter Study on $\alpha$}
In this subsection, we provide the results of different $\alpha$ with both online and offline tasks. We use varying $\alpha\in\{0, 0.1, 0.5, 0.9, 1\}$.
\subsubsection{Online Tasks}
We study the performance under attacks with different $\alpha$ in online tasks. We chose Adroit continuous control environments as our experiment environments. In the results shown in Figure \ref{alpha_ablation_online}, We observed that the performance is excellent when $\alpha=0.5$ or $\alpha=0.9$ in all four environments. The performance when $\alpha=1$ is not as good as when $\alpha=0.9$, which indirectly confirms the effect of the second bonus term. We chose $\alpha=0.9$ as the hyperparameter for our online experiments.

\begin{figure*}[h]
    \centering
    \subfigure{
    \begin{minipage}[t]{0.24\linewidth}
        \centering
        \includegraphics[height=75pt,width=115pt]{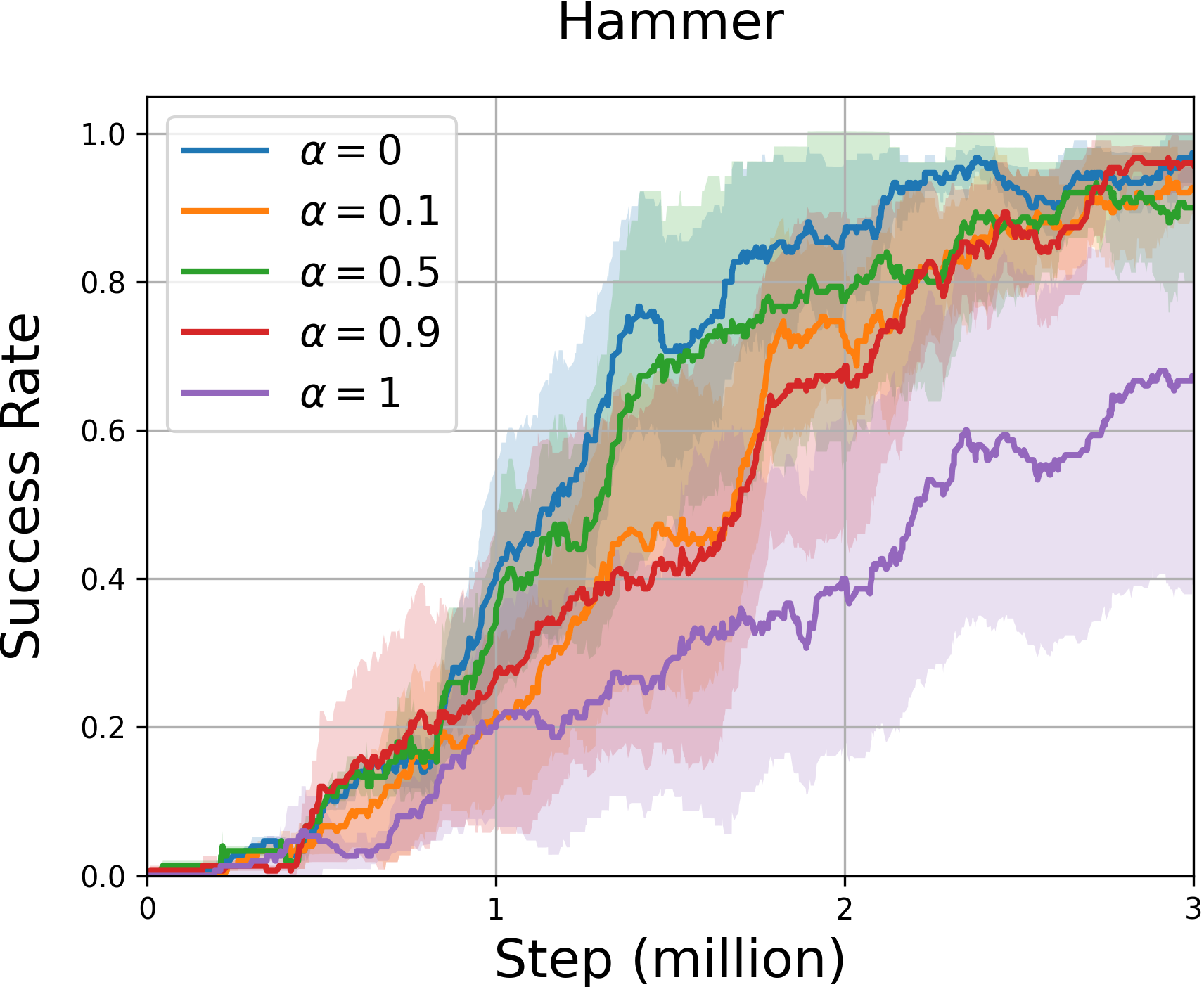}
    \end{minipage}%
    }
    \subfigure{
    \begin{minipage}[t]{0.22\linewidth}
        \centering
        \includegraphics[height=75pt,width=108pt]{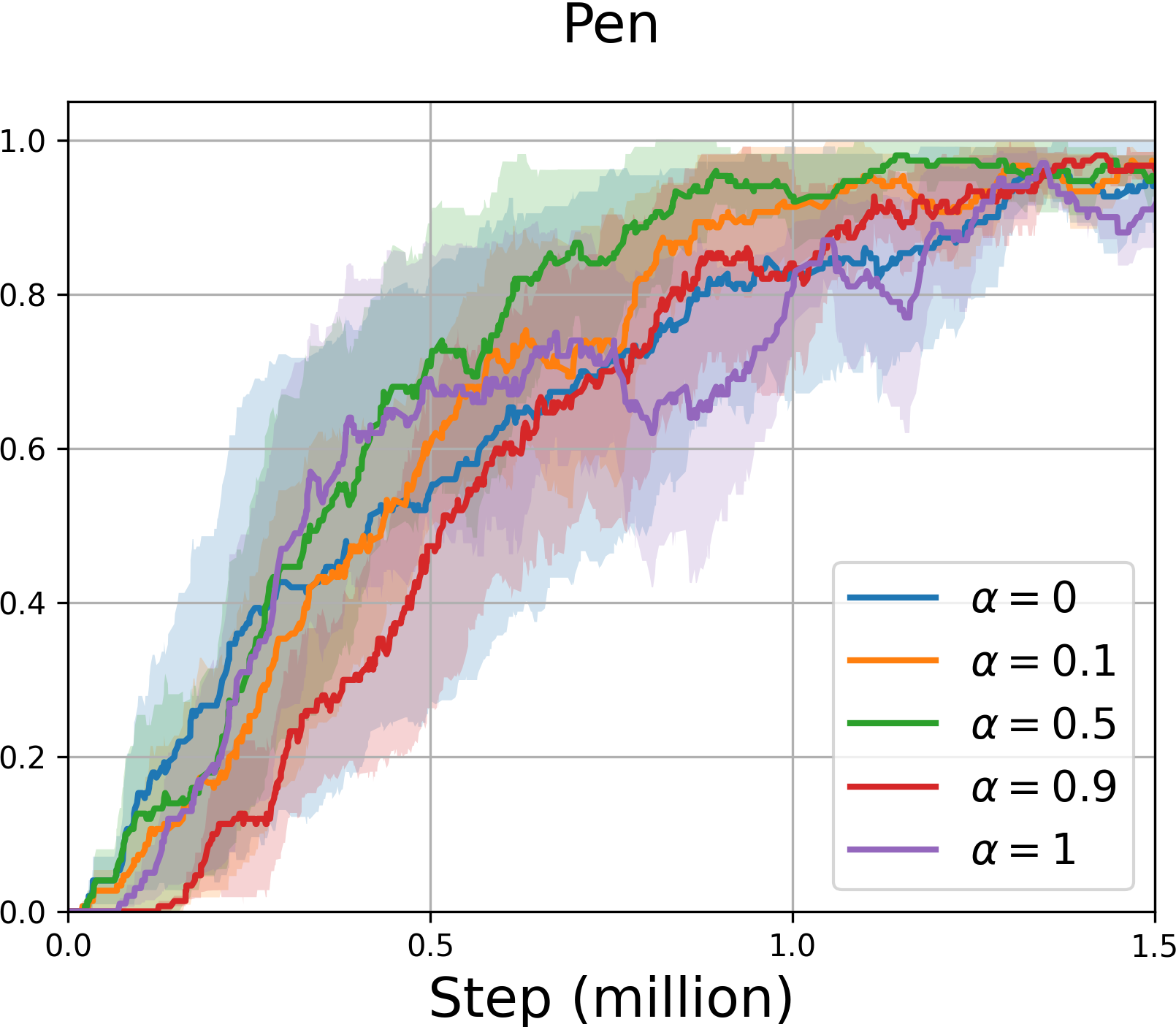}
    \end{minipage}
    }
    \subfigure{
    \begin{minipage}[b]{0.22\linewidth}
        \centering
        \includegraphics[height=75pt,width=108pt]{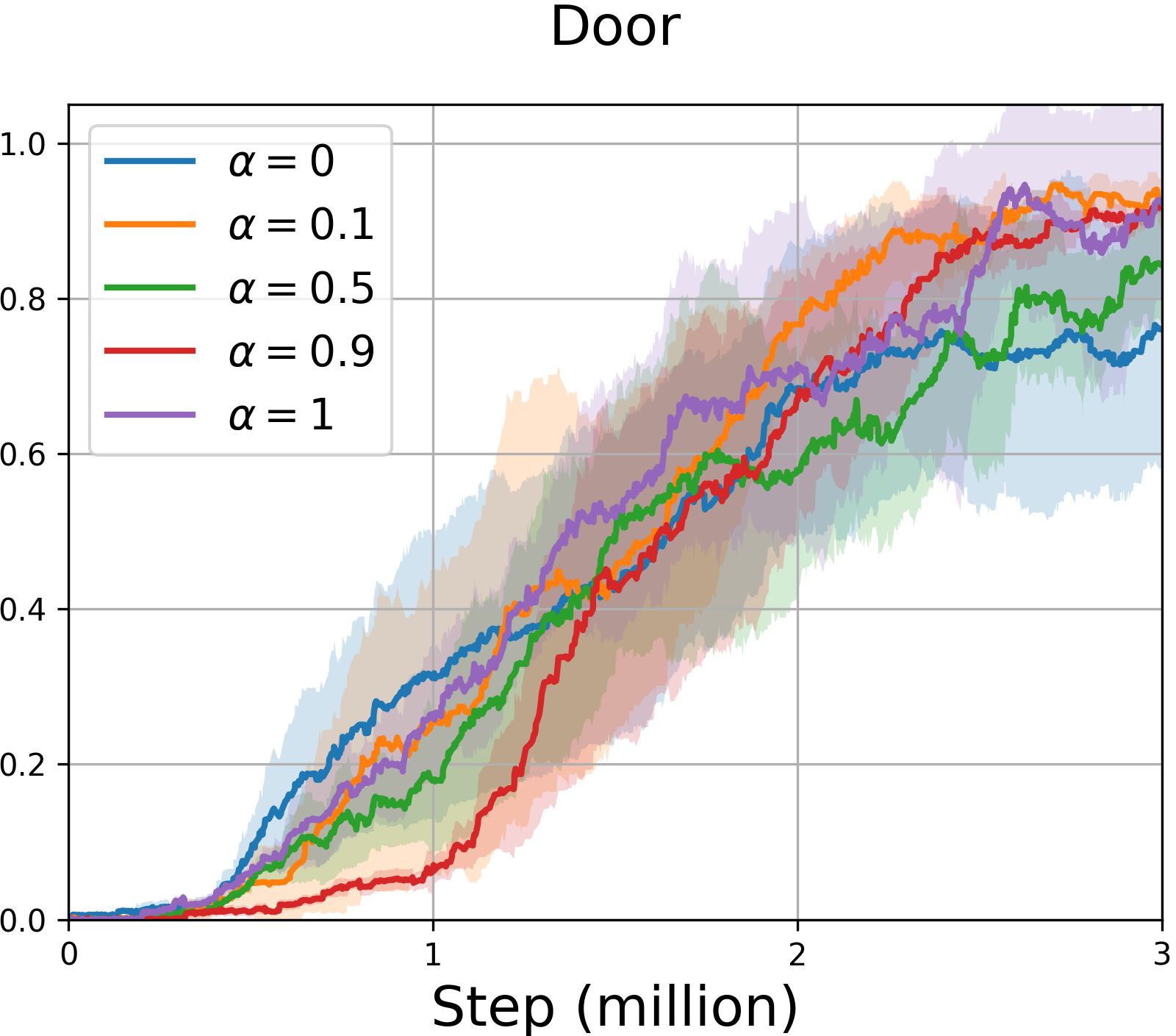}
    \end{minipage}
    }
    \subfigure{
    \begin{minipage}[b]{0.22\linewidth}
        \centering
        \includegraphics[height=75pt,width=110pt]{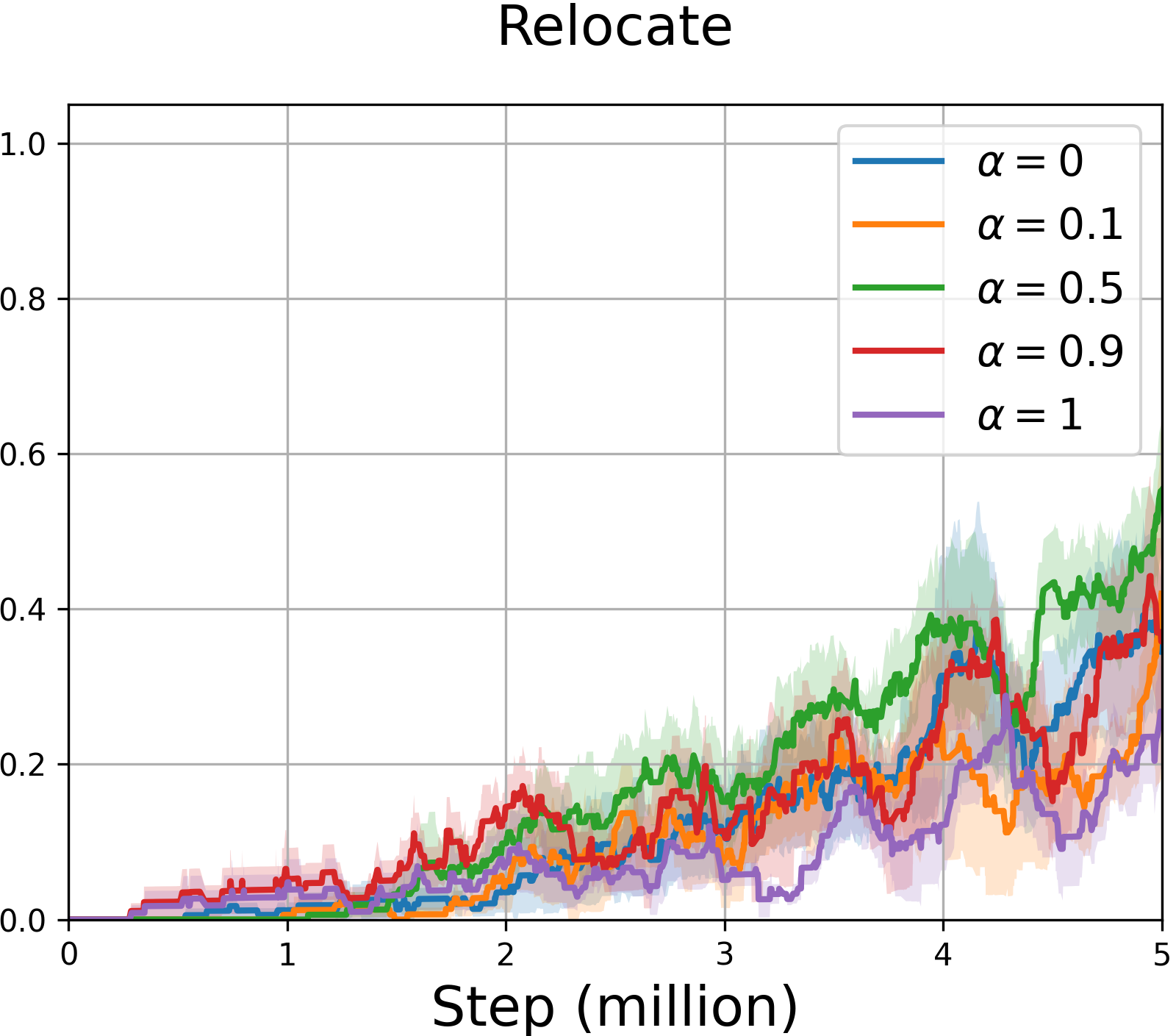}
    \end{minipage}
    }
    \caption{Training curves with different $\alpha$ in Adroit tasks. All curves are averaged over 5 runs.
 }
    \label{alpha_ablation_online}
\end{figure*}

\subsubsection{Offline Tasks}
We examine the influence of $\alpha$ on offline tasks using the D4RL dataset. We employ various values of $\alpha$ to train an offline agent on the `medium' datasets. The final scores are presented in Table \ref{alpha}, and the training curves are shown in Figure \ref{alpha_ablation}. It is observed that in some cases, when $\alpha=0.9$, the final score is higher, and the training curve exhibits greater stability. Consequently, we consistently opted for $\alpha=0.9$ in our offline experiments. When $\alpha=1$, only the first bonus term comes into play, and the results are not as favorable as when $\alpha=0.9$, demonstrating the effectiveness of the second bonus term. Additionally, when examining the final results, it becomes evident that our first bonus outperforms the RND.

Also, for ease of comparison, we provide the training curves of SAC-RND on three datasets: hopper-medium, halfcheetah-medium, and walker2d-medium in Figure \ref{sacrnd}.

\begin{figure}[h]
    \centering
    \subfigure{
    \begin{minipage}[t]{0.33\linewidth}
        \centering
        \includegraphics[height=110pt,width=160pt]{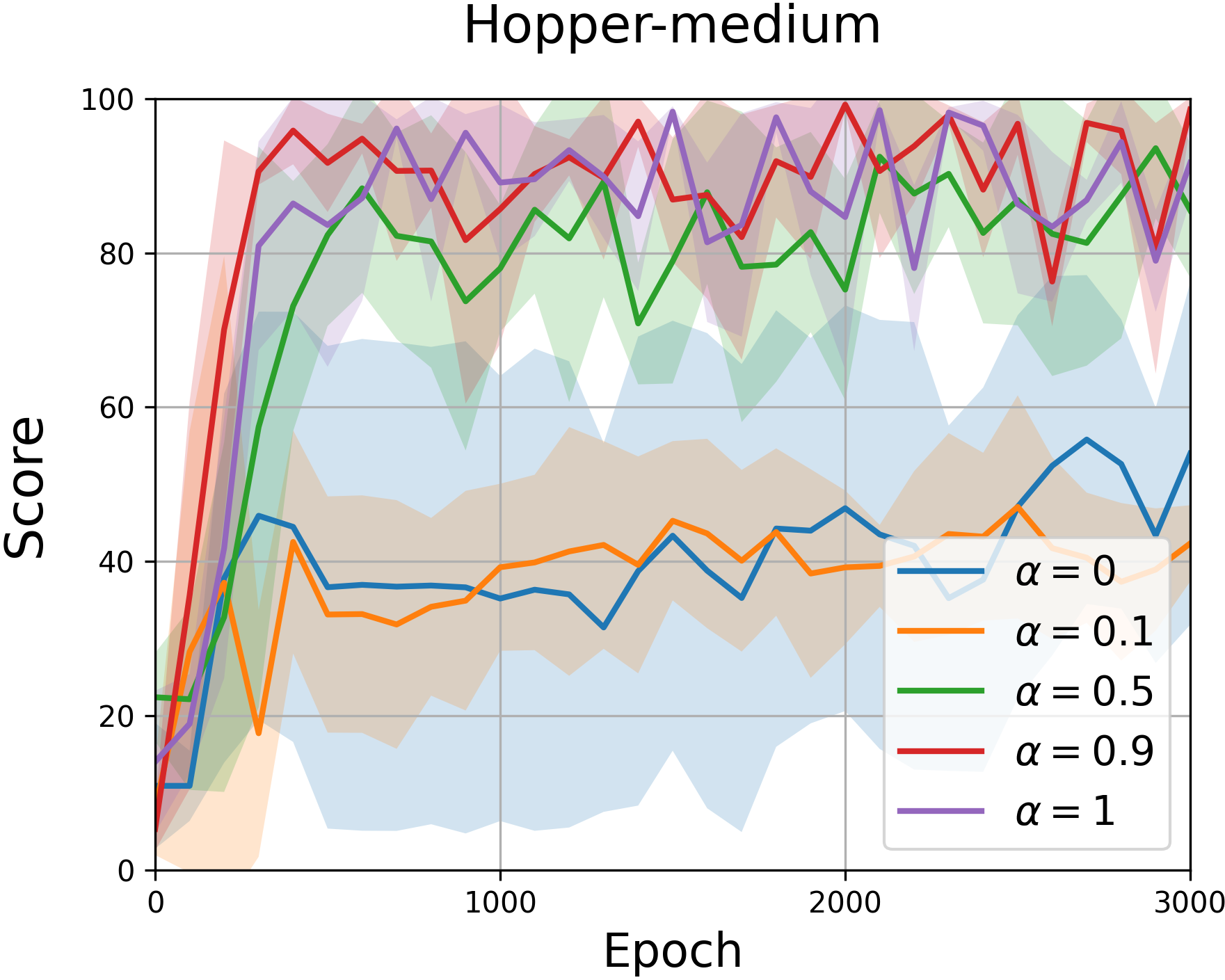}
    \end{minipage}%
    }
    \subfigure{
    \begin{minipage}[t]{0.29\linewidth}
        \centering
        \includegraphics[height=110pt,width=150pt]{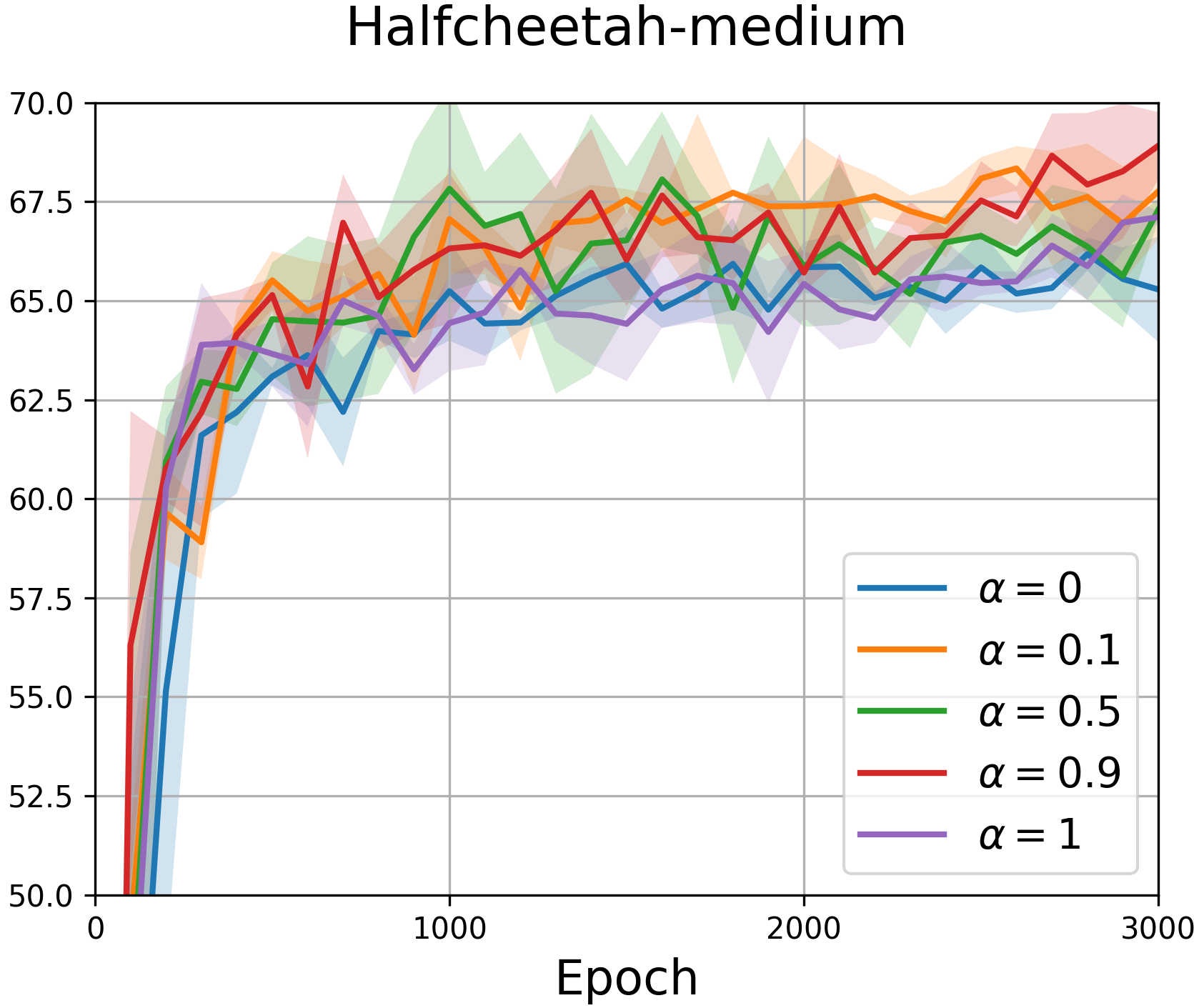}
    \end{minipage}
    }
    \subfigure{
    \begin{minipage}[b]{0.29\linewidth}
        \centering
        \includegraphics[height=110pt,width=150pt]{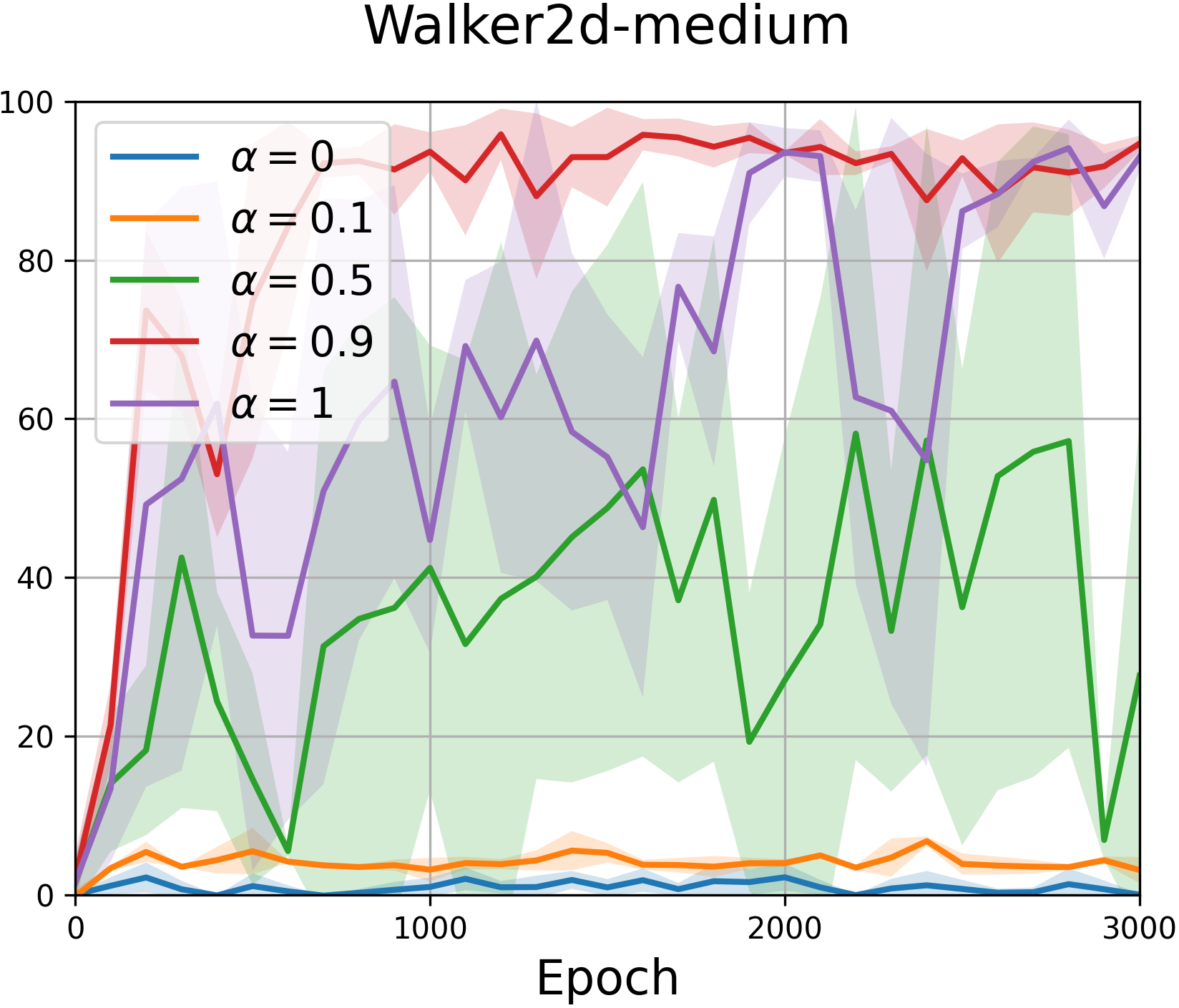}
    \end{minipage}
    }
    \caption{Training curves with different $\alpha$. All curves are averaged over 5 runs.
 }
    \label{alpha_ablation}
\end{figure}

\begin{table}[h]
\caption{The final scores of different $\alpha$ in offline tasks. We also compare the results of SAC-RND since it is a special case when $\alpha=1$ and $N=1$.}
\centering
\renewcommand{\arraystretch}{1.1}
\resizebox{\linewidth}{20mm}{
\begin{tabular}{l c c c c c c}
\toprule
\diagbox[width=10em,height=1.5em]{Dataset}{$\alpha$} & $0.0$ & $0.1$ & $0.5$ & $0.9$ & $1.0$ & SAC-RND\\
\midrule
hopper-medium & 54.0 ± 31.1&42.3 ± 4.95 &85.5 ± 8.7 & \textbf{98.5} ± 5.6 & 91.9 ± 4.9&91.1 ± 10.1\\
halfcheetah-medium & 65.3 ± 1.3 & 67.5 ± 1.2 & 67.3 ± 0.6  & \textbf{68.6}± 0.4 & 67.1 ± 0.2&66.4 ± 1.4\\
walker2d-medium & -0.0 ± 0.1 & 3.1 ± 1.7 & 27.6 ± 31.9  & \textbf{94.7} ± 1.0 & 93.0 ± 1.8& 91.6 ± 2.8\\
\hline
average score & 39.8 & 37.6 &  60.1 & \textbf{87.2} & 84.0 & 83.0\\
\bottomrule
\label{alpha}
\end{tabular}
}
\end{table}

\subsection{Parameter Study on $\lambda$}
We present the results of our parameter study on $\lambda$, which are detailed in the table below. We choose some environments and show the results. All experiments are conducted across 5 seeds, and we report the mean and standard deviation of their results.

\begin{table}[h]
  \centering
  \caption{Hopper-Medium-v2}
  \renewcommand{\arraystretch}{1.1}
   \resizebox{0.6\linewidth}{14mm}{
    \begin{tabular}{cccc}
    \toprule
    \diagbox[width=10em,height=1.5em]{$\lambda_{actor}$}{$\lambda_{critic}$} & $10$ & $15$ & $20$ \\
    \midrule
    $10$ & - & $98.7 \pm 2.0$ & - \\
    $15$ & $95.9 \pm 3.4$ & $98.5 \pm 1.1$ & $96.1 \pm 3.9$ \\
    $20$ & - & $94.65 \pm 2.6$ & - \\
    \bottomrule
    \end{tabular}%
    }
  \label{tab:addlabel}%
\end{table}%

\begin{table}[h]
  \centering
  \caption{Hopper-Medium-Replay-v2}
  \renewcommand{\arraystretch}{1.1}
  \resizebox{0.6\linewidth}{14mm}{
    \begin{tabular}{cccc}
    \toprule
    \diagbox[width=10em,height=1.5em]{$\lambda_{actor}$}{$\lambda_{critic}$} & $5$ & $10$ & $15$ \\
    \midrule
    $3$ & - & $99.9 \pm 1.3$ & - \\
    $5$ & $94.7 \pm 5.9$ & $100.5 \pm 1.0$ & $99.3 \pm 0.4$ \\
    $8$ & - & $99.8 \pm 1.2$ & - \\
    \bottomrule
    \end{tabular}%
  }
\end{table}%

\begin{table}[h]
  \centering
  \caption{HalfCheetah-Medium-v2}
  \renewcommand{\arraystretch}{1.1}
  \resizebox{0.6\linewidth}{14mm}{
    \begin{tabular}{cccc}
    \toprule
\diagbox[width=10em,height=1.5em]{$\lambda_{actor}$}{$\lambda_{critic}$} & $0.05$ & $0.1$ & $0.2$ \\
    \midrule
    $0.5$ & - & $65.4 \pm 0.8$ & - \\
    $1$ & $67.8 \pm 0.4$ & $68.3 \pm 0.2$ & $68.2 \pm 1.1$ \\
    $2$ & - & $68.0 \pm 0.3$ & - \\
    \bottomrule
    \end{tabular}%
  }
\end{table}%

\begin{table}[h]
  \centering
  \caption{HalfCheetah-Full-Replay-v2}
  \renewcommand{\arraystretch}{1.1}
  \resizebox{0.6\linewidth}{14mm}{
    \begin{tabular}{cccc}
    \toprule
    \diagbox[width=10em,height=1.5em]{$\lambda_{actor}$}{$\lambda_{critic}$} & $0.5$ & $1$ & $2$ \\
    \midrule
    $0.5$ & - & $81.2 \pm 0.5$ & - \\
    $1$ & $81.5 \pm 0.7$ & $81.4 \pm 1.7$ & $81.7 \pm 1.3$ \\
    $2$ & - & $80.9 \pm 0.9$ & - \\
    \bottomrule
    \end{tabular}%
  }
\end{table}%

\begin{table}[h]
  \centering
  \caption{Walker-Full-Replay-v2}
  \renewcommand{\arraystretch}{1.1}
  \resizebox{0.6\linewidth}{14mm}{
    \begin{tabular}{cccc}
    \toprule
\diagbox[width=10em,height=1.5em]{$\lambda_{actor}$}{$\lambda_{critic}$} & $1$ & $3$ & $5$ \\
    \midrule
    $1$ & - & $107.2 \pm 1.2$ & - \\
    $3$ & $61.3 \pm 25.8$ & $109.6 \pm 0.7$ & $109.4 \pm 0.4$ \\
    $5$ & - & $109.4 \pm 0.6$ & - \\
    \bottomrule
    \end{tabular}%
  }
\end{table}%

\begin{table}[h]
  \centering
  \caption{Walker-Medium-Expert-v2}
  \renewcommand{\arraystretch}{1.1}
  \resizebox{0.6\linewidth}{14mm}{
    \begin{tabular}{cccc}
    \toprule
\diagbox[width=10em,height=1.5em]{$\lambda_{actor}$}{$\lambda_{critic}$} & $15$ & $20$ & $25$ \\
    \midrule
    $10$ & - & $100.5 \pm 5.6$ & - \\
    $15$ & $93.9 \pm 8.8$ & $109.6 \pm 1.0$ & $85.2 \pm 15.0$ \\
    $20$ & - & $110.3 \pm 0.6$ & - \\
    \bottomrule
    \end{tabular}%
  }
\end{table}%

\subsection{Evaluation on offline-to-online D4RL}\label{offline2online}
We report offline-to-online performance on AntMaze tasks. We followed the methodology outlined by \cite{tarasov2022corl}. We report the scores after the offline stage and online tuning in Table \ref{offline2online_table}.

\begin{table}[h]
\tiny
    \centering
     \setlength{\tabcolsep}{1mm}{
    \resizebox{\linewidth}{18mm}{
    \begin{tabular}{l | c c c c}
\toprule
    \textbf{Task Name}  & \textbf{TD3+BC} & \textbf{IQL}& \textbf{ReBRAC} &\textbf{SAC-DRND}\\
    \midrule
    antmaze-umaze & 66.8 $\rightarrow$ 91.4 & 77.00 $\rightarrow$ 96.50 & 97.8 $\rightarrow$ \textbf{99.8} & 95.8 $\rightarrow$ 98.3\\
    antmaze-umaze-diverse & 59.1 $\rightarrow$ 48.4 & 59.50 $\rightarrow$ 63.75& 85.7 $\rightarrow$ \textbf{98.1} & 87.2 $\rightarrow$ 98.0\\
    antmaze-medium-play & 59.2 $\rightarrow$ 94.8 & 71.75 $\rightarrow$ 89.75 & 78.4 $\rightarrow$ 97.7 & 86.2 $\rightarrow$ \textbf{98.3}\\
    antmaze-medium-diverse & 62.6 $\rightarrow$ 94.1 & 64.25 $\rightarrow$ 92.25 & 78.6 $\rightarrow$ \textbf{98.5} & 83.0 $\rightarrow$ 95.9\\
    antmaze-large-play & 21.5 $\rightarrow$ 0.1 & 38.5 $\rightarrow$ \textbf{64.50} & 47.0 $\rightarrow$ 39.5 & 53.2 $\rightarrow$ 51.5\\
    antmaze-large-diverse & 9.5 $\rightarrow$ 0.4 & 26.75 $\rightarrow$ 64.25 & 66.7 $\rightarrow$ \textbf{77.6} & 50.8 $\rightarrow$ 55.9\\
    \hline
    \textbf{Average} & 46.4 $\rightarrow$ 54.8(\textcolor{green}{+8.4})& 56.29 $\rightarrow$ 78.50(\textcolor{green}{+22.21}) & 75.7 $\rightarrow$ \textbf{85.2}(\textcolor{green}{+8.5}) &  76.0 $\rightarrow$ 83.0(\textcolor{green}{+7.0})\\
    \bottomrule
    \end{tabular}
    }
    }
   \caption{Evaluation on Offline-to-online Setting. We compared the TD3+BC, IQL and ReBRAC algorithms, and their values were copied from \cite{tarasov2023revisiting}.}
   \label{offline2online_table}
\end{table}

\subsection{More Detailed Change Process of RND Bonus}\label{rndgif}
\begin{figure}[h]
    \centering
    \subfigure{
    \begin{minipage}[t]{0.2\linewidth}
        \centering
        \includegraphics[height=75pt,width=95pt]{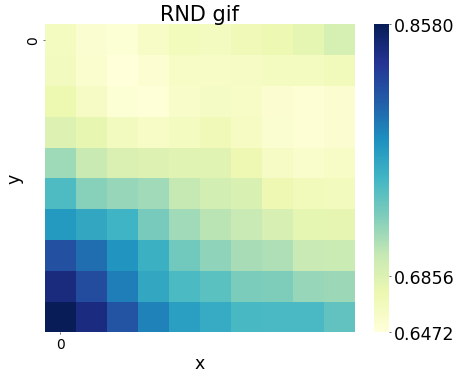}
    \end{minipage}%
    }
    \subfigure{
    \begin{minipage}[t]{0.18\linewidth}
        \centering
        \includegraphics[height=75pt,width=90pt]{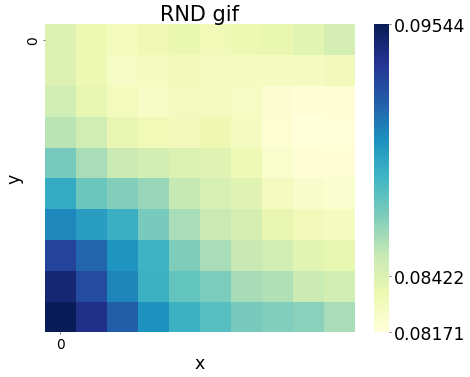}
    \end{minipage}
    }
    \subfigure{
    \begin{minipage}[b]{0.18\linewidth}
        \centering
        \includegraphics[height=75pt,width=90pt]{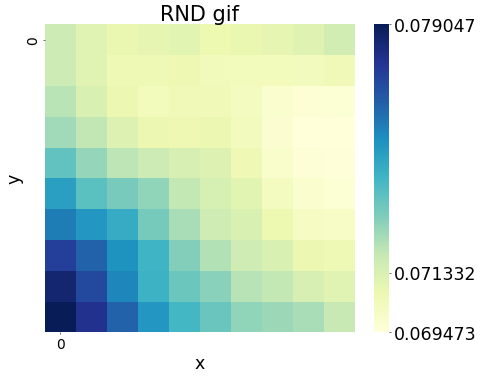}
    \end{minipage}
    }
    \subfigure{
    \begin{minipage}[b]{0.18\linewidth}
        \centering
        \includegraphics[height=75pt,width=90pt]{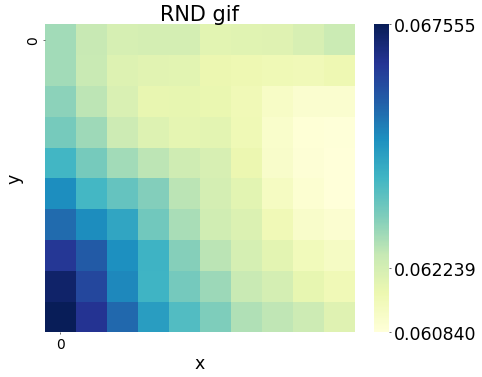}
    \end{minipage}
    }
    \subfigure{
    \begin{minipage}[b]{0.18\linewidth}
        \centering
        \includegraphics[height=75pt,width=90pt]{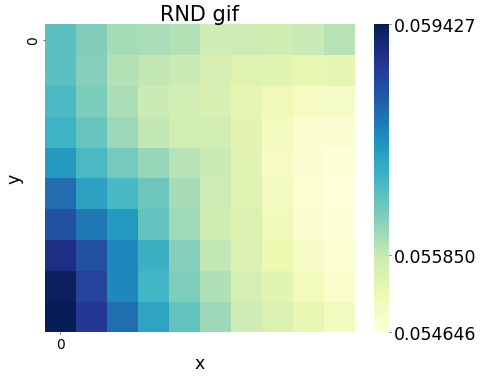}
    \end{minipage}
    }
    \caption{More Detailed Change Process of RND Bonus.}
\end{figure}

\begin{figure}[h]
    \centering
    \includegraphics[height=0.5\linewidth,width=0.7\linewidth]{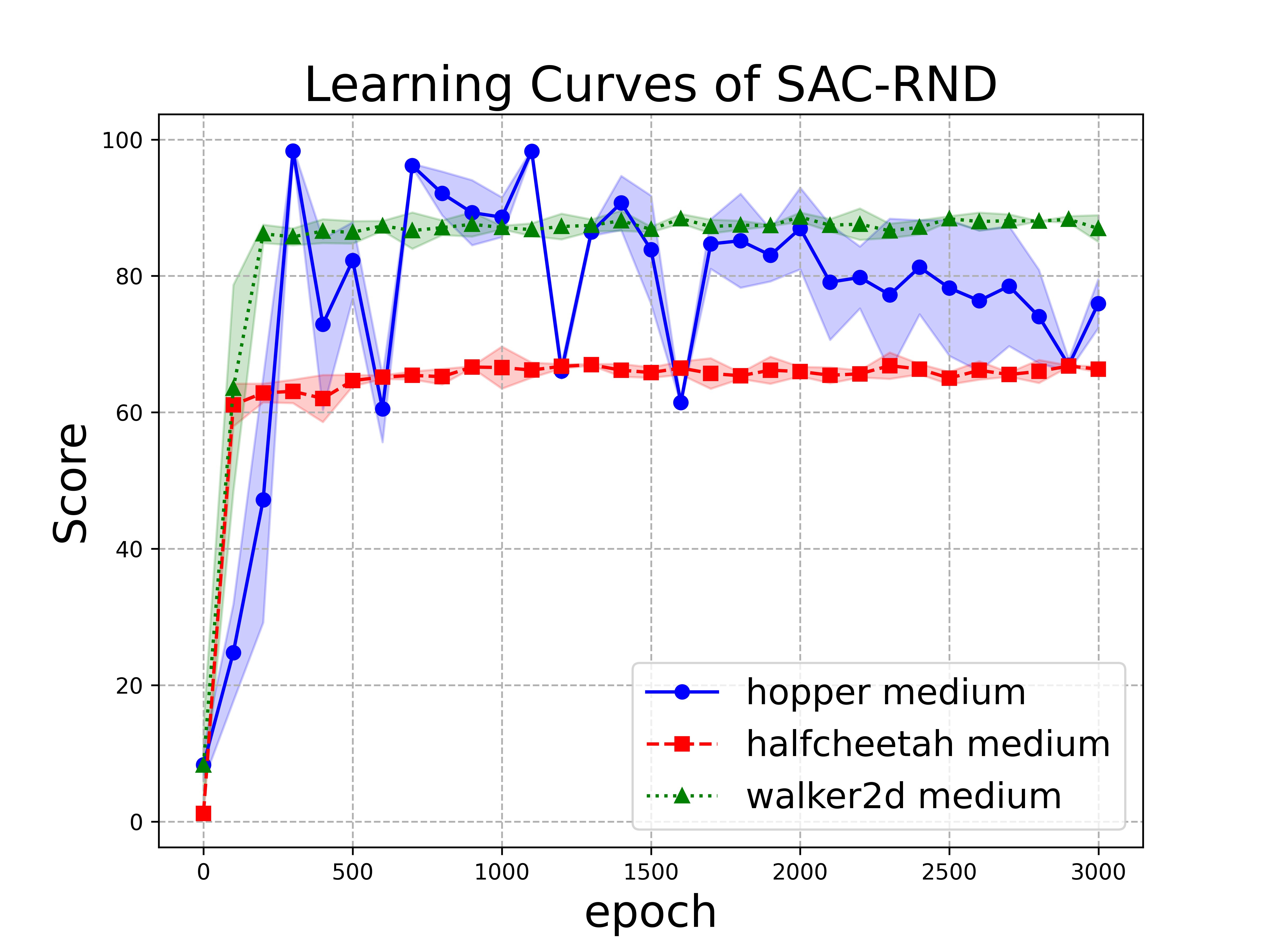}
    \caption{Learning curves of SAC-RND. The parameters are the same as in the original paper.}
    \label{sacrnd}
\end{figure}

\end{document}